%File: formatting-instructions-latex-2024.tex
%release 2024.0
\documentclass[letterpaper]{article} % DO NOT CHANGE THIS
\usepackage{aaai24}  % DO NOT CHANGE THIS
\usepackage{times}  % DO NOT CHANGE THIS
\usepackage{helvet}  % DO NOT CHANGE THIS
\usepackage{courier}  % DO NOT CHANGE THIS
\usepackage[hyphens]{url}  % DO NOT CHANGE THIS
\usepackage{graphicx} % DO NOT CHANGE THIS
\urlstyle{rm} % DO NOT CHANGE THIS
  % DO NOT CHANGE THIS
\usepackage{natbib}  % DO NOT CHANGE THIS AND DO NOT ADD ANY OPTIONS TO IT
\usepackage{caption} % DO NOT CHANGE THIS AND DO NOT ADD ANY OPTIONS TO IT
\frenchspacing  % DO NOT CHANGE THIS
\setlength{\pdfpagewidth}{8.5in}  % DO NOT CHANGE THIS
\setlength{\pdfpageheight}{11in}  % DO NOT CHANGE THIS

\usepackage{algorithm}
\usepackage{algorithmic}

\usepackage{epsfig}
\usepackage{amsmath}
\usepackage{amssymb}
\usepackage{booktabs}
\usepackage{multirow}
\usepackage{makecell}
\usepackage{xcolor,colortbl}
\usepackage{amsthm}

\def\ourmodel{LSAP }
\def\ourmodelc{LSAP-C}
\def\ourmodelh{LSAP-H}
\def\ourmodell{LSAP-L}

\usepackage{newfloat}
\usepackage{listings}
\DeclareCaptionStyle{ruled}{labelfont=normalfont,labelsep=colon,strut=off} % DO NOT CHANGE THIS
\lstset{%
	basicstyle={\footnotesize\ttfamily},% footnotesize acceptable for monospace
	numbers=left,numberstyle=\footnotesize,xleftmargin=2em,% show line numbers, remove this entire line if you don't want the numbers.
	aboveskip=0pt,belowskip=0pt,%
	showstringspaces=false,tabsize=2,breaklines=true}
\floatstyle{ruled}
\newfloat{listing}{tb}{lst}{}
\floatname{listing}{Listing}

\pdfinfo{
/TemplateVersion (2024.1)
}

\setcounter{secnumdepth}{0} 

\title{Learning to Approximate Adaptive Kernel Convolution on Graphs}
\author{
    Jaeyoon Sim\textsuperscript{\rm 1}, 
    Sooyeon Jeon\textsuperscript{\rm 1}, 
    InJun Choi\textsuperscript{\rm 1}, 
    Guorong Wu\textsuperscript{\rm 2}, 
    Won Hwa Kim\textsuperscript{\rm 1}
}
\affiliations{
    \textsuperscript{\rm 1}Pohang University of Science and Technology, Pohang, South Korea\\
    \textsuperscript{\rm 2}University of North Carolina at Chapel Hill, Chapel Hill, USA\\
    
    \{simjy98, jsuyeon, surung9898, wonhwa\}@postech.ac.kr, guorong\_wu@med.unc.edu
}

\usepackage{bibentry}

\begin{document}

\maketitle

\begin{abstract}
Various Graph Neural Networks (GNNs) have been successful in 
analyzing data in non-Euclidean spaces, however, 
they have limitations such as oversmoothing, i.e., 
information becomes excessively averaged as the number of hidden layers increases.  
The issue stems from the intrinsic formulation of 
conventional graph convolution where the nodal features are 
aggregated from a direct neighborhood per layer across the entire nodes in the graph. 
As setting different number of hidden layers per node is infeasible, recent works
leverage a diffusion kernel to redefine the graph structure and incorporate information from farther nodes. 
Unfortunately, such approaches suffer from heavy diagonalization of a graph Laplacian or 
learning a large transform matrix. 
In this regards, we propose a diffusion learning framework, 
where the range of feature aggregation is controlled by 
the scale of a diffusion kernel. 
For efficient computation, we derive closed-form derivatives of approximations of the graph convolution
with respect to the scale, so that node-wise range can be adaptively learned. 
With a downstream classifier, the entire framework is made trainable in an end-to-end manner. 
Our model is tested on various standard datasets for node-wise classification for the state-of-the-art performance, and 
it is also validated on a real-world brain network data for graph classifications to demonstrate 
its practicality for Alzheimer classification. 
\end{abstract}

\section{Introduction}
\label{sec:intro}

Graph Neural Network (GNN) has been heavily recognized in machine learning and computer vision 
with various practical applications such as 
text classification \cite{huang2019text}, neural machine translation \cite{bastings2017graph}, 3D shape analysis \cite{wei2020view, verma2018feastnet}, semantic segmentation \cite{qi20173d,xie2021scale}, social information systems \cite{lin2021medley} and speech recognition \cite{liu2016graph}. 
At the heart of these GNN models lies the graph convolution, 
which develops useful representation of each node 
with a filtering operation on the graph. 
Given a graph comprised of a set of nodes/edges 
and signal defined on its nodes, in \cite{kipf2016semi}, 
a convolutional layer was introduced as  
a linear combination of the signal within a direct neighborhood of each node using the topology of the graph, and its equivalence with spectral filtering \cite{hammond2011wavelets} has been shown. 
Stacking these convolutional layers (together with non-linear activations) constitutes the fundamental Graph Convolutional Network (GCN) \cite{kipf2016semi}, and testing a newly developed GCN on classifying node-wise labels has become a standard benchmark 
to validate the GCN model \cite{chen2018fastgcn, wu2019simplifying, xu2020graph, chen2020simple, yang2021selfsagcn}. 

Within the architecture of conventional GCN and its variants, there is a fundamental issue:  
each convolution layer gathers information within a direct neighborhood {\em uniformly} across all nodes. 
When information aggregation from direct neighborhood is not sufficient, 
the convolution layers are stacked to seek for useful information from a larger neighborhood. 
Such an architecture with several convolution layers broadens the range of neighborhood for information aggregation {\em uniformly}, again, across the entire nodes in the graph. 
Eventually, when the same information is shared across all the nodes, it leads to ``oversmoothed'' representation of each node, not being able to characterize one from another. 
This behavior can be easily interpreted from the spectral perspective, 
as a filtering operation in the spectral space will uniformly affect all nodes in the graph space. 

Perhaps the most intuitive solution to the oversmoothing is to %the problem above would be to 
use different ranges of neighborhood per node. 
However, it is difficult to design such a model with conventional graph convolution, 
as it would require different number of convolution layers for each node which is highly impractical. 
Several recent works tried to overcome this locality issue. 
Methods in \cite{velivckovic2017graph, kim2022find} leverage attention %mechanism 
to capture long-range relationships 
among nodes, authors in \cite{gao2019graph, wu2022structural} develop pooling scheme to compress graphs, and authors in \cite{chen2020simple} improve upon the vanilla GCN with skip connection of residuals as in ResNet \cite{he2016deep}. 
%to overcome the locality of the original convolution layer. 

Notably, GraphHeat \cite{xu2020graph} used a diffusion kernel
%, i.e., heat-kernel, 
to %simply 
redefine distances between nodes as a heat diffusion process among the nodes along the graph structure. 
It easily connects many local nodes within a range (i.e., scale) even though they are not directly connected. 
Such an approach is quite effective when the homophily condition is reasonably held 
even if the given edges in a graph may be noisy. 
But still, the scale was defined as a hyperparameter and the same 
range was arranged 
across the entire graph. 
%Later, the authors in \cite{choi2022much} came up with a learning framework 
Later, a framework that trains on the scale according to a target task was introduced with 
the gradient of a loss with respect to the scale for each node. % , and it was defined in a closed-form. 
%and a graph classification was performed with an exact computation of the heat kernel distance. 
While using diffusion kernel have shown to be quite effective, 
such an approach is computationally inefficient with heavy diagonalization of graph Laplacian, especially when dealing with a large or a population of graphs. 
%especially when a population of graphs had to be decomposed for the graph classification. 
Other diffusion-based model such as \cite{zhao2021adaptive} uses diffusion on layers and channels of features instead of nodes, and Graph Neural Diffusion (GRAND) \cite{chamberlain2021grand} %treats Graph Neural Networks (GNNs) as 
%defines %discretisations of an underlying 
trains on large weight matrices for attention which can be computationally burdening.

In this regime, we propose an efficient framework 
that learns adaptive scales for each node of a %\textcolor{red}{fixed} 
graph with approximations, i.e., {\bf L}earning {\bf S}cales via {\bf AP}proximation (LSAP). 
The key idea is to train on the node-wise {\em range of neighborhood} 
instead of excessively stacking convolutional layers. 
For this, we first show that the formulation in \cite{xu2020graph} can be 
defined as a heat kernel convolution, 
which can be approximated with various polynomials \cite{huang2020fast}. %, i.e., Chebyshev, Hermite and Lagurre, as in \cite{huang2020fast}. 
We then derive the derivatives of the expansion coefficients of these polynomials in the scale space, which can be used to define task-specific gradients  
%so that the scale can be 
to train the optimal scales {\em within the approximation} instead of learning exhaustive transform matrices. 
LSAP achieves novel node-wise representation adaptively by leveraging features from other nodes 
within a trained ``range'' defined by the diffusion kernel. 
The ideas above lead to the following contributions: % 

\begin{itemize}
\item We propose a GNN with an adaptive diffusion kernel whose approximations are 
trainable in an end-to-end manner at each node,
\item We derive closed-form derivatives of various polynomial coefficients 
with respect to the range (i.e., scale) so that graph convolution can be efficiently trained, 
\item Learning on scales provides interpretable results on the semantics of each node, validated on two independent datasets with different tasks. 

\end{itemize}
\ourmodel demonstrates superior results on the node classification task in a semi-supervised learning setting \cite{shchur2018pitfalls}, 
as well as on a graph classification task performed on a population of brain networks to predict diagnostic labels for Alzheimer's Disease (AD). 
Especially in the AD experiment, the trained scales delineate specific regions highly responsible for the prediction of AD for interpretability. 

\section{Related Works}
\label{sec:related work}

\noindent\textbf{Graph Neural Networks.} 
The vanilla GCN \cite{kipf2016semi} and Variant GNNs utilize graph convolution to perform feature aggregation from neighbors. 
Simplifying Graph Convolution (SGC) \cite{wu2019simplifying} captures high-order information in the graph with K-th power of the adjacency matrix, %in a single neural network layer to reduce redundant computation of GCN.
GCNII \cite{chen2020simple} extends a vanilla GCN %with some techniques including 
with residual connection and identity mapping \cite{he2016deep}, and 
Graph Attention Network (GAT) \cite{velivckovic2017graph} introduces attention mechanism on graphs % attention layers with attention mechanism 
to assign relationships to different nodes. 
%To overcome oversmoothing issue from convolution operation, 
Personalized Propagation of Neural Prediction and its Approximation (APPNP) \cite{klicpera2018predict} improved message propagation based on personalized PageRank \cite{page1999pagerank}, 
Graph Random Neural Network (GRAND) \cite{feng2020graph} developed graph data augmentation, and
Deep Adaptive Graph Neural Network (DAGNN) \cite{liu2020towards} %conducted 
disentangled representation transform and message propagation to construct a deep model.

Also, there are recent works that discover useful graph structures from data to adaptively update the structures for message passing. 
DIAL-GNN \cite{chen2019deep} jointly learned graph structure and embeddings by iteratively searching for hidden graph structures,
Bayesian GCNN \cite{zhang2019bayesian} incorporated uncertain graph information through a parametric random graph model, and 
NodeFormer \cite{wu2022nodeformer} proposed an efficient message passing scheme for propagating layer-wise node signals.

\noindent\textbf{Graph Neural Networks with Diffusion.}
There are several previous works that adopt diffusion on graphs for GNNs \cite{lee2023time, zhang2023apegnn, huang2023node}. 
Graph Diffusion Convolution (GDC) \cite{klicpera2019diffusion} introduced spatially localized graph convolution to aggregate information of indirect nodes,
Adaptive Diffusion Convolution (ADC) \cite{zhao2021adaptive} learned a global radius applied on different layers and channels of features,
GRAND \cite{chamberlain2021grand} defined diffusion PDEs on graphs and trained on weight matrices to learn attention 
%within a soft-max 
for diffusivity, and 
Fast and Scalable Network Representation Learning (ProNE) \cite{2019ProNE} focused on effective network embedding with %matrix factorization and 
spectral propagation for enhancement, 
where they train a single global scale in the spectral propagation.
%In addition, there have been recent works  using graph diffusion.
Our methods differ from these methods in that it uses an adaptive parametric kernel at individual nodes,
and we propose a new optimization scheme 
on the scales within its polynomial approximations. 
Also, many of methods above can be adopted for graph classification as well \cite{velivckovic2017graph,klicpera2019diffusion} with an additional layer transforming node embeddings to a graph embedding. 
Other literature, although not fully discussed in this section, 
will be introduced and used as the baselines to compare the performances with our proposed model for both node and graph classification tasks later. 

\section{Preliminaries}

\label{sec:prelim}

\noindent\textbf{Graph Convolution with Heat Kernel.} 
An undirected graph $G=\{V, E\}$ comprises a node set $V$ with $|V|=N$ and an edge set $E$. 
A graph $G$ is often represented as a symmetric adjacency matrix $A$ of which individual elements $a_{pq}$ encodes %node connectivity with $A_{p,q} = A_{q,p}$. 
connectivity information between node $p$ and $q$. 
A graph Laplacian is defined as $L=D-A$ where $D$ is a degree matrix, i.e., a diagonal matrix with  $D_{pp}=\sum_{q}A_{pq}$. 
Since $L$ is positive semi-definite, it has a complete set of orthonormal basis $U = [u_{1} | u_{2} | ... | u_{N}]$ known as Laplacian eigenvectors and corresponding real and non-negative eigenvalues $0=\lambda_{1} \leq \lambda_{2} \leq ... \leq \lambda_{N}$.
A normalized Laplacian %normalized on the degree 
is defined as 
$\hat{L}=I_{N}-D^{-1/2}AD^{-1/2}$ where $I_N$ is an identity matrix.
Since $\hat{L}$ is real symmetric, it also has a complete set of eigenvectors and eigenvalues.

In \cite{chung1997spectral}, the heat kernel between nodes $p$ and $q$ 
%of a graph $G$ 
is defined in the spectral domain spanned by $U$ as 
%and %quantified by simple exponential:
\begin{equation}
\footnotesize
    h_s(p, q) = \sum_{i=1}^{N}e^{-s\lambda_{i}}u_{i}(p)u_{i}(q)
    \label{eq:heat_kernel_between_nodes}
\end{equation}
where $u_i$ is $i$-th eigenvector of the graph Laplacian, and 
the kernel $e^{-s\lambda_i}$ captures smooth transition between the nodes as a diffusion process within the scale $s$. 
Using convolutional theorem \cite{oppenheim1997signals}, 
graph Fourier transform, i.e., $\hat{x} = U^Tx$, 
offers a way to define the graph convolution $\ast$ of a signal $x(p)$ with a filter $h_s$. 
Using Eq. \eqref{eq:heat_kernel_between_nodes}, heat kernel convolution with $h_s$ as a low-pass filter is defined as
\begin{equation}
\footnotesize
    h_s \ast x(p) = \sum_{i=1}^{N}e^{-s\lambda_{i}}\hat{x}(i)u_{i}(p)
    \label{eq:heat_kernel_conv0}
\end{equation}
whose band-width is controlled by the scale $s$.

\noindent\textbf{Approximation of Convolution with Heat Kernel.} 
The exact computation %of convolution in 
of Eq. \eqref{eq:heat_kernel_conv0} requires 
diagonalization of a graph Laplacian which can be computationally challenging. 
Existing literature uses Chebyshev polynomial as a basis to approximate the kernel convolution 
as a linear transform \cite{he2022convolutional}. 
%\cite{defferrard2016convolutional, he2022convolutional}. 
In \cite{huang2020fast}, 
approximation of heat kernel convolution was introduced using several orthogonal polynomials %\cite{olver2010nist, 
such as Chebyshev, Hermite and Laguerre.  
The analytic %closed-form 
solutions to the polynomial %expansion 
coefficients $c_{s,n}$ for scale $s$ were derived for  
Chebyshev polynomial $P^T_n$, Hermite polynomial $P^H_n$ and Laguerre polynomial $P^L_n$, 
where $n$ denotes the degree of each polynomial.

A polynomial $P_n \in \{P^T_n, P^H_n, P^L_n\}$ is often defined by a second order recurrence as 
\begin{equation}
\footnotesize
    P_{n+1}(\lambda) = (\alpha_n\lambda + \beta_n)P_n(\lambda) + \gamma_n P_{n-1}(\lambda)
    \label{eq:second order recurrence}
\end{equation}
where initial conditions $P_{-1}(\lambda)=0$ and $P_0(\lambda)=1$ for $n\geq0$ and 
parameters $\alpha_n$, $\beta_n$ and $\gamma_n$ determine the type of polynomial. 
Then, the heat kernel $e^{-s\lambda}$ can be defined with polynomials $P_n$ and expansion coefficients $c_{s,n}$ as
\begin{equation}
\footnotesize
    e^{-s\lambda}=\sum_{n=0}^{\infty}c_{s,n}P_n(\lambda),
    \label{eq:redefined_heat_kernel}
\end{equation}

Now, the solution to the heat diffusion 
in Eq. \eqref{eq:heat_kernel_conv0} can be expressed in terms of $P_n$ and $c_{s,n}$ 
via Eq. \eqref{eq:redefined_heat_kernel} %and \ref{eq:redefined_coefficient} 
as
\begin{equation}
\footnotesize
    %h_s \ast x(p) = \sum_{n=0}^{\infty}c_{s,n}\sum_{j=0}^{\infty}P_n(\lambda_j)\hat{x}(j)u_j(p). 
    h_s \ast x(p) = \sum_{n=0}^{\infty}c_{s,n}\sum_{j=1}^{N}P_n(\lambda_j)\hat{x}(j)u_j(p). 
    \label{eq:heat_kernel_convolution_approximation0}
\end{equation}
Since $\hat{L}u_j=\lambda_ju_j$, 
the Eq. \eqref{eq:heat_kernel_convolution_approximation0} 
can be further written as %without $u_j$ as
%the heat diffusion equation 
%is solved by polynomial expansion involving $\hat{L}$ but without the eigenvectors of Laplacian:
\begin{equation}
\footnotesize
    h_s \ast x(p) = \sum_{n=0}^{\infty}c_{s,n}P_n(\hat{L})x(p)
    \label{eq:heat_kernel_convolution_approximation1}
\end{equation}
where initial conditions $P_{-1}(\hat{L})x(p)=0$ and $P_0(\hat{L})x(p)=x(p)$ from the second order recurrence. 
Notice that Eq. \eqref{eq:heat_kernel_convolution_approximation1} represents the convolution operation 
as a simple linear combination of $c_{s,n}$ and $P_n$ without $u_j$, and it is often approximated 
at the order of $m$ %to truncate the summation 
for practical purposes. 

\section{Learning to Approximate Kernel Convolution}

\begin{figure*}[!t]
    \centering  
    \includegraphics[width=.86\linewidth]{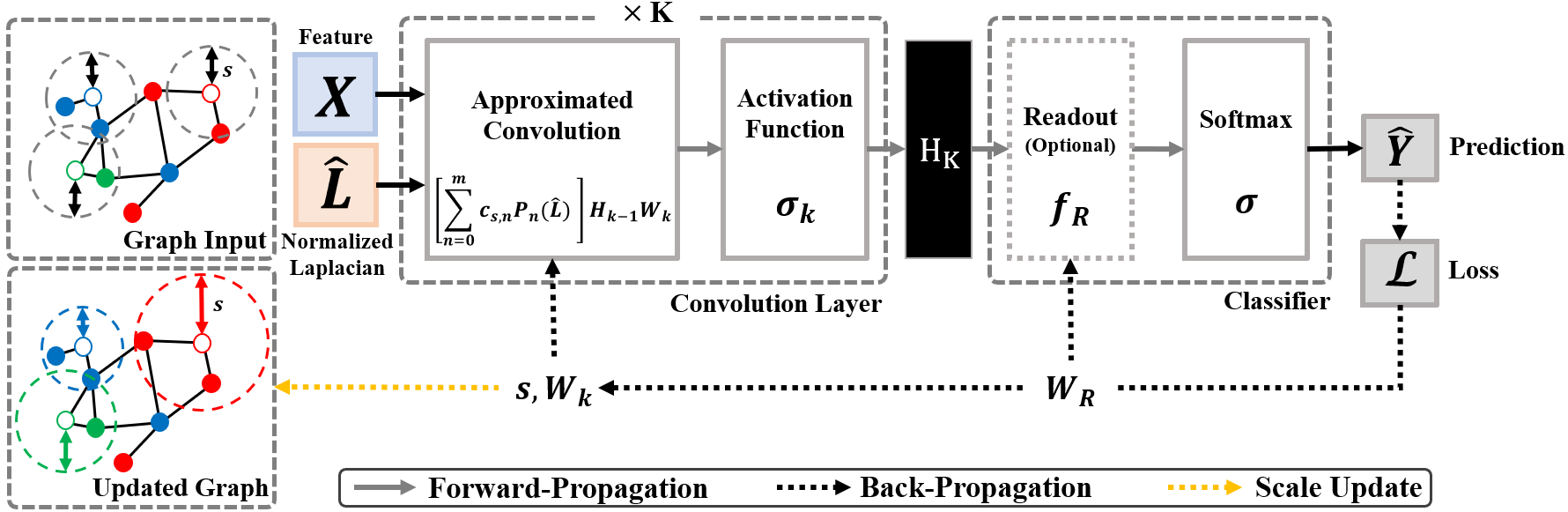}
    % \vspace{-8pt}
    \caption{ 
    Illustration of LSAP. 
    A graph (as normalized Laplacian $\hat L$) and node feature $X$ are inputted to the convolution layer. % via approximation. 
    The output $H_K$ %from these layers 
    is inputted to a 
    downstream classifier which yields a prediction $\hat Y$. The loss from $\hat Y$ is backpropagated to update the classifier and convolution approximation with $\textbf{s} = [s_1,\dots,s_N]$ to adaptively adjust the scale of each node.
    }
    \label{fig:Overall Architecture}
    % \vspace{-14pt}
\end{figure*}

We introduce our model, i.e., LSAP, 
%that ``learn'' the scale $s$ embedded in the polynomial coefficients $c_{s,n}$ 
%such that 
that {\em trains on the approximations} for the optimal range of neighborhood at individual nodes. 
%can be efficiently obtained 
%based on a specific task. 
The derivatives to train $s$ in Eq.~\eqref{eq:heat_kernel_convolution_approximation1} %via backpropagation 
for two separate tasks, i.e., node classification and graph classification, are derived in their closed-forms.

\subsection{Model Architecture}

In most of GCN frameworks \cite{kipf2016semi, yang2021selfsagcn}, 
a convolution operation at the $k$-th layer is given as   %for a input $H_{k-1}$ is given as
\begin{equation}
\footnotesize
    H_k=\sigma_k(\tilde A H_{k-1}W_k) 
    \label{eq:gc}
\end{equation}
where $\tilde A$ is a normalized adjacency matrix, $W_k$ is a trainable weight matrix, and $\sigma_k$ is a non-linear activation function. 
It takes an input $H_{k-1}$ and outputs a new representation $H_{k}$. 
Each convolution operation takes features from direct neighbors of each node according to $\tilde A$ 
to design the new node representation. 
As more convolution layers are stacked,  
%to incorporate information from farther nodes, 
%where 
the range of aggregation is uniformly increased for all nodes 
causing the infamous ``oversmoothing'' issue. 
%In this regards, 
To alleviate oversmoothing, 
we propose to utilize a diffusion kernel, i.e., a heat kernel, on the graph %along the structure of a given graph 
to define ranges of neighborhood for individual nodes, so that 
the node-wise range is adaptively 
%of feature aggregation is 
defined to avoid the oversmoothing. 

The architecture of \ourmodel is given in Fig. \ref{fig:Overall Architecture}. 
The overall components are similar to the original GCN \cite{kipf2016semi}, 
however, \ourmodel redefines the convolution with approximated heat kernel. 
%using the heat kernel 
%which is provided with polynomial approximations for efficient computation. 
%The expansions of approximations are 
Consider a graph $G$ given as 
%an adjacency matrix $A$, 
a Laplacian $\hat L$, feature $X$ defined on its nodes and 
either node-wise or graph-wise label $Y$. % for the graph. 
Our framework takes $\hat L$ and $X$ as inputs,  
performs approximated heat kernel convolutions and outputs a prediction $\hat Y$. 
The $\hat Y$ is then compared with the ground truth $Y$, and the error is backpropagated 
to update model parameters including the scale $s$.

\subsubsection{Convolution Layer.} 
Based on Eq. \eqref{eq:heat_kernel_convolution_approximation1}, Eq. \eqref{eq:gc} is reformulated 
by replacing the normalized adjacency matrix $\tilde A$ with 
the heat kernel with polynomial approximation as
%Each hidden state at the $k$-th layer can be updated by a convolution operation as
\begin{equation}
\footnotesize
    %H_k=\sigma_k(c_{s,n}P_n(\hat{L})H_{k-1}W_k)
    H_k=\sigma_k([\sum_{n=0}^m c_{s,n}P_n(\hat{L})]H_{k-1}W_k). 
    \label{eq:Heat_Kernel_Forard_Propagation}
\end{equation}
%where the convolution operation with a normalized adjacency matrix $\tilde A$ is replaced with 
%the heat kernel with polynomial approximation. 
While $\tilde A$ let the model combine information from nodes within 1-hop distance only, 
Eq. \eqref{eq:Heat_Kernel_Forard_Propagation} let it 
aggregate information within a ``range'' of each node defined by $s$ %embedded 
within $c_{s,n}$. 
The Eq. \eqref{eq:Heat_Kernel_Forard_Propagation} defines a convolution layer, 
and we can stack $K$ of them to achieve a better representation of the original $X$. 

\subsubsection{Output Layer.} 
The output layer %using softmax function 
yields a prediction of $Y$, i.e., $\hat Y$, via softmax. 
Depending on the task, it may include a simple Multi-layer Perceptron (MLP) that can be trained. 
A loss $L_{\textbf{err}}$ is computed at this layer which quantifies the error between $\hat Y$ and $Y$ using cross-entropy for
different tasks, e.g., node classification or graph classification. 

\subsubsection{Model Update.} 
The loss $L_{\textbf{err}}$ 
is backpropagated to update the model parameters, i.e., $W_k$ and $s$. 
To maintain the scale positive, an $\ell_1$-norm regularization on $s$ %$\mathcal{L}_{s}$ 
is imposed.  
The overall objective function $\mathcal{L}$ is given as 
\begin{equation}
\footnotesize
    \mathcal{L} = \mathcal{L}_{\textbf{err}}+\alpha|{\bf s}|%\alpha\mathcal{L}_{s}
    \label{eq:Loss_Function}
\end{equation}
where $\alpha$ is a hyperparameter. %learning rate. % tuning the weight of regularization. 
The weight $W_k$ can be easily trained with backpropagation,  %as in conventional approaches, e.g., Adam optimizer \cite{kingma2014adam}. 
%To train the scale $s$ separately we need to 
%If there exists a gradient on scale $s$, 
and a multi-variate $\textbf s$ across all nodes can be also
trained %with the backpropagation 
given a gradient on scale $s$ as
\begin{equation}
\footnotesize
    \textbf{s}\gets \textbf{s}-\beta_{s}\frac{\partial \mathcal{L}}{\partial \textbf s}
    \label{eq:Scale_Update}
\end{equation}
where $\beta_s$ is a learning rate. %hyper-parameter to control the magnitude of update. 
It requires $\frac{\partial \mathcal{L}}{\partial \textbf{s}}$ to make the framework trainable, % in an end-to-end manner, 
and  
%In the following section, 
we derive the $\frac{\partial \mathcal{L}}{\partial \textbf{s}}$ 
in a closed-form to train on the approximations of %the heat kernel convolution in 
Eq. \eqref{eq:Heat_Kernel_Forard_Propagation} in the following. 

\subsection{Gradients of Polynomial Coefficients with Scale}
% \vspace{-5pt}

\label{sec:poly}
We denote expansion coefficients as $c_{\textbf{s},n}^{T}$, $c_{\textbf{s},n}^{H}$ and $c_{\textbf{s},n}^{L}$ that correspond to $P^T_n$, $P^H_n$ and $P^L_n$. 
%for Chebyshev, Hermite and Laguerre polynomials.
 %It is possible to provide 
As introduced in \cite{huang2020fast}, one can obtain a solution to the heat diffusion by obtaining expansion coefficient with each polynomial in $P_n$. 
In order to design a gradient-based ``learning'' framework of node-wise range (i.e., scale) based on these expansions, 
we derived gradients of loss $\frac{\partial \mathcal{L}}{\partial \textbf{s}}$ in closed-forms.   
This is an essential component of \ourmodel as it let the model efficiently train without diagonalization of $\hat L$. 
The $\frac{\partial \mathcal{L}}{\partial \textbf{s}}$ can be achieved using the chain rule in a traditional way, 
and to obtain $\frac{\partial \mathcal{L}}{\partial \textbf{s}}$ in terms of the $H_k$, we compute $\frac{\partial c_{\textbf{s},n}}{\partial \textbf{s}}$ for each coefficient.

\subsubsection{Chebyshev Polynomial.} 
The recurrence relation and expansion coefficient for Chebyshev polynomial are given as
\begin{equation}
\footnotesize
     \begin{aligned}
     P^T_{n+1}(\hat{L}) &= (2-\delta_{n0})\hat{L} P^T_n(\hat{L})-P^T_{n-1}(\hat{L}),\\
     c_{\textbf{s},n}^T&=(2-\delta_{n0})(-1)^ne^{-\frac{\textbf{s}b}{2}}I_n\left(\frac{\textbf{s}b}{2}\right) 
     \end{aligned}
 \label{eq:Chebyshev_Polynomial_Coefficient}
\end{equation}
where $b>0$ is a hyper-parameter, and $I_n$ is the {\em modified Bessel function of the first kind} \cite{olver2010nist}. 

%\noindent\textbf{Hermite Polynomial.}
\subsubsection{Hermite Polynomial.}
The recurrence relation and expansion coefficient for Hermite polynomial are written as
\begin{equation}
\footnotesize
 \begin{aligned}
 P^H_{n+1}(\hat{L}) &= 2\hat{L} P^H_n(\hat{L})-2n P^H_{n-1}(\hat{L}),\\
 c_{\textbf{s},n}^H&=\frac{1}{n!}\left(\frac{-\textbf{s}}{2}\right)^n e^{\frac{\textbf{s}^2}{4}},
 \end{aligned}
\label{eq:Hermite_Polynomial_Coefficient}
\end{equation}

%\noindent\textbf{Laguerre Polynomial.}
\subsubsection{Laguerre Polynomial.}
For Laguerre polynomial, the recurrence relation and expansion coefficient are 
\begin{equation}
\footnotesize
 \begin{aligned}
 P^L_{n+1}(\hat{L})&=\frac{(2n+1-\hat{L})P^L_n(\hat{L}) - n P^L_{n-1}(\hat{L})}{n+1},\\
 c_{\textbf{s},n}^L&=\frac{\textbf{s}^n}{(\textbf{s}+1)^{n+1}}. 
 \end{aligned}
\label{eq:Lagurre_Polynomial_Coefficient}
\end{equation}

Notice that all the expansion coefficients above are defined by $\textbf{s}$. 
If they are differentiable with respect to $\textbf{s}$, then we do not need to learn expensive parameters but 
simply train on these polynomial approximations with $\textbf{s}$ directly. 

%\textcolor{red}{
\newtheorem{lemma}{Lemma}
\begin{lemma}
% Consider an orthogonal polynomial $P_n$ over interval $[a,b]$ with inner product $\int_a^b P_n(\lambda)P_k(\lambda)w(\lambda)d\lambda=\delta_{nk}$, where $w(\lambda)$ is the weight function. If $ P_n$ expands the heat kernel, the expansion coefficients $c_{\textbf{s},n}$ with respect to $\textbf s$ are differentiable.
Consider an orthogonal polynomial $P_n$ over interval $[a,b]$ with inner product $\int_a^b P_n(\lambda)P_k(\lambda)w(\lambda)d\lambda=\delta_{nk}$, where $w(\lambda)$ is the weight function. If $ P_n$ expands the heat kernel, the expansion coefficients $c_{\textbf{s},n}$ with respect to $\textbf s$ are differentiable and $\frac{\partial{c_{\textbf{s},n}}}{\partial{\textbf{s}}} = -\int_a^b \lambda e^{-\bf{s}\lambda}P_n(\lambda)w(\lambda)d\lambda$.
\label{lem:1}
\end{lemma}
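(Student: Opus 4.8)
The plan is to reduce the claim to differentiation under the integral sign. Since $P_n$ expands the heat kernel, $e^{-\textbf{s}\lambda} = \sum_{k\ge 0} c_{\textbf{s},k}P_k(\lambda)$ with convergence in $L^2([a,b],w)$. First I would pair both sides with $P_n$ in the weighted inner product $\langle f,g\rangle = \int_a^b f(\lambda)g(\lambda)w(\lambda)\,d\lambda$; continuity of the inner product lets the sum pass inside, and orthonormality $\langle P_n,P_k\rangle = \delta_{nk}$ collapses it to the closed form
\begin{equation}
\footnotesize
c_{\textbf{s},n} = \int_a^b e^{-\textbf{s}\lambda}P_n(\lambda)w(\lambda)\,d\lambda .
\end{equation}
It then suffices to show this integral is differentiable in $\textbf{s}$ with derivative obtained by differentiating the integrand, which formally gives $-\int_a^b \lambda e^{-\textbf{s}\lambda}P_n(\lambda)w(\lambda)\,d\lambda$.

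To make the interchange rigorous I would fix $s_0 > 0$, restrict $\textbf{s}$ to a neighborhood $[s_0-\varepsilon, s_0+\varepsilon]$ with $0 < \varepsilon < s_0$, and apply the Leibniz rule: for each fixed $\lambda$ the integrand is smooth in $\textbf{s}$ with $\partial_{\textbf{s}}$-derivative $-\lambda e^{-\textbf{s}\lambda}P_n(\lambda)w(\lambda)$, so all that is needed is a single $w$-integrable majorant $g(\lambda)$ bounding $|\lambda e^{-\textbf{s}\lambda}P_n(\lambda)w(\lambda)|$ uniformly over that neighborhood. On a bounded interval (the Chebyshev case) this is immediate, since $|\lambda|$, $|P_n(\lambda)|$ and $e^{-\textbf{s}\lambda}$ are bounded there and $w \in L^1$. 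For the Hermite and Laguerre families, whose natural intervals are unbounded, I would instead absorb the polynomial growth of $\lambda P_n(\lambda)$ and the factor $e^{-\textbf{s}\lambda}$ into the Gaussian or exponential decay of $w(\lambda) = e^{-\lambda^2}$ or $e^{-\lambda}$, which still yields an integrable majorant on any such neighborhood.

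With the majorant in hand, dominated convergence applied to the difference quotients justifies exchanging $\partial_{\textbf{s}}$ with $\int_a^b$, giving $\frac{\partial c_{\textbf{s},n}}{\partial \textbf{s}} = -\int_a^b \lambda e^{-\textbf{s}\lambda}P_n(\lambda)w(\lambda)\,d\lambda$; since $s_0 > 0$ was arbitrary, the coefficients are differentiable on $(0,\infty)$. I expect the domination/integrability bookkeeping on the unbounded Hermite and Laguerre intervals to be the only real obstacle; everything else is formal. As an independent sanity check one can differentiate the explicit coefficients in Eqs.~\eqref{eq:Chebyshev_Polynomial_Coefficient}, \eqref{eq:Hermite_Polynomial_Coefficient} and \eqref{eq:Lagurre_Polynomial_Coefficient} directly — using, e.g., $I_n'(z) = \tfrac{1}{2}(I_{n-1}(z) + I_{n+1}(z))$ for the Bessel terms — and confirm agreement with the integral formula.
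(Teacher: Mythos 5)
Your proposal follows essentially the same route as the paper's own proof: project the expansion $e^{-\textbf{s}\lambda}=\sum_n c_{\textbf{s},n}P_n(\lambda)$ onto $P_n$ via the weighted inner product and orthonormality to obtain $c_{\textbf{s},n}=\int_a^b e^{-\textbf{s}\lambda}P_n(\lambda)w(\lambda)\,d\lambda$, then differentiate under the integral sign. The only difference is that you supply the dominated-convergence justification for the interchange (which the paper simply asserts), so your argument is correct and, if anything, more careful.
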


Lemma \ref{lem:1} (with a proof in the supplementary) says that 
the $c_{\textbf{s},n}$ in  Eq. \eqref{eq:Chebyshev_Polynomial_Coefficient}, \eqref{eq:Hermite_Polynomial_Coefficient} and \eqref{eq:Lagurre_Polynomial_Coefficient} have the derivatives with respect to $\textbf{s}$. 
These will be used in the following two sections to define 
gradients on loss functions for different tasks. 

\subsection{Semi-supervised Node Classification}
\label{sec:NC}

The goal of node classification is to predict labels of unlabeled nodes based on the information from other nodes.
The output layer (after $K$-convolution layers) of \ourmodel produces a prediction $\hat Y = \sigma(H_K)$, and 
%the objective of a training should be focused on reducing the error between $\hat Y$ and the true $Y$.
the training should be performed to reduce the error between $\hat Y$ and the true $Y$. 
%$Y$ is 1 if a node has a label, otherwise 0, and it is compared with the prediction $\hat{Y}$. 
%we have to reduce error between ground truth $Y$ and our prediction $\hat Y$.
%\textcolor{red}{
\begin{lemma}
Let a graph convolution be operated by Eq. \eqref{eq:Heat_Kernel_Forard_Propagation}, which approximates %the heat kernel 
the convolution with $P_n$ and $c_{\textbf{s},n}$. %polynomial approximation. 
If a loss $\mathcal{L}_{\textbf{err}}$ for node-wise classification 
is defined as cross-entropy between a 
%prediction $\hat Y$ and the true $Y$, 
prediction $\hat Y = \sigma(H_K)$, where $\sigma(\cdot)$ is a softmax function, and the true $Y$, then
\begin{equation}
\footnotesize
    \begin{aligned}
    \frac{\partial \mathcal{L}_\textbf{err}}{\partial \textbf s}=&(\hat{Y}-Y) \times  \sigma^{'}_k([\sum_{n=0}^mc_{\textbf{s},n}P_n(\hat{L})]H_{k-1}W_k) W_k^{\mathsf{T}}\\ &\times(\sum_{n=0}^m P_n(\hat{L})H_{k-1}^{\mathsf{T}}+[\sum_{n=0}^m c_{\textbf{s},n}P_n(\hat{L})]\frac{\partial H_{k-1}}{\partial c_{\textbf{s},n}}) \frac{\partial c_{\textbf{s},n}}{\partial \textbf{s}}
    %-\alpha[s<0].
    \label{eq:dl_ds_NC}
    \end{aligned}
\end{equation}
where $\sigma^{'}_k$ is the derivative of $\sigma_k$.
\label{lem:2}
\end{lemma}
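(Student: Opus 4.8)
The plan is to obtain $\frac{\partial \mathcal{L}_{\textbf{err}}}{\partial \textbf{s}}$ by differentiating the forward recursion of Eq.~\eqref{eq:Heat_Kernel_Forard_Propagation} from the output layer backward via the chain rule, and then substituting the closed-form $\frac{\partial c_{\textbf{s},n}}{\partial \textbf{s}}$ guaranteed by Lemma~\ref{lem:1}. First I would dispatch the output layer: with $\hat Y = \sigma(H_K)$ for the softmax $\sigma$ and $\mathcal{L}_{\textbf{err}}$ the cross-entropy against $Y$, the standard softmax--cross-entropy identity collapses the softmax Jacobian and the log-loss gradient into the single residual $\frac{\partial \mathcal{L}_{\textbf{err}}}{\partial H_K} = \hat Y - Y$. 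This produces the leading factor $(\hat Y - Y)$ in Eq.~\eqref{eq:dl_ds_NC} and removes the need to carry an activation derivative at the top.

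Next I would peel off one convolution layer. Writing $Z_k = \big[\sum_{n=0}^m c_{\textbf{s},n}P_n(\hat L)\big]H_{k-1}W_k$ so that $H_k = \sigma_k(Z_k)$, the chain rule gives $\frac{\partial \mathcal{L}_{\textbf{err}}}{\partial Z_k} = \frac{\partial \mathcal{L}_{\textbf{err}}}{\partial H_k}\odot\sigma_k'(Z_k)$, and differentiating $Z_k$ through the right factor $W_k$ contributes $W_k^{\mathsf T}$. The subtle point is that $\textbf{s}$ enters $Z_k$ twice: explicitly through the coefficients $c_{\textbf{s},n}$, and implicitly through $H_{k-1}$, which is itself a function of $\textbf{s}$ through the earlier layers. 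Applying the product rule to $\big[\sum_n c_{\textbf{s},n}P_n(\hat L)\big]H_{k-1}$ therefore yields exactly the two summands inside the parentheses of Eq.~\eqref{eq:dl_ds_NC}: the term $\sum_n P_n(\hat L)H_{k-1}^{\mathsf T}$ from the explicit dependence, and $\big[\sum_n c_{\textbf{s},n}P_n(\hat L)\big]\frac{\partial H_{k-1}}{\partial c_{\textbf{s},n}}$ from the recursive dependence, which unrolls by the same argument down to $H_0 = X$, where the recursion terminates because $X$ does not depend on $\textbf{s}$.

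Finally I would close the chain by multiplying through by $\frac{\partial c_{\textbf{s},n}}{\partial \textbf{s}}$, which by Lemma~\ref{lem:1} equals $-\int_a^b \lambda e^{-\textbf{s}\lambda}P_n(\lambda)w(\lambda)\,d\lambda$ and, through Eqs.~\eqref{eq:Chebyshev_Polynomial_Coefficient}--\eqref{eq:Lagurre_Polynomial_Coefficient}, has an explicit form for each of the Chebyshev, Hermite, and Laguerre families. Collecting the leading residual $(\hat Y - Y)$, the activation derivative $\sigma_k'(Z_k)$, the weight transpose $W_k^{\mathsf T}$, the two-summand middle factor, and this coefficient derivative reproduces Eq.~\eqref{eq:dl_ds_NC}.

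The main obstacle I anticipate is bookkeeping rather than any single hard estimate: $\textbf{s}$ is node-wise, so each ``derivative'' here is really a Jacobian (indeed a higher-order tensor once the $N$ nodes, the feature channels, the degree index $n$, and the $K$ stacked layers are all tracked), and the compact matrix notation of Eq.~\eqref{eq:dl_ds_NC} suppresses the index contractions and the sum over $n$. I would fix an explicit dimension convention so that the product rule on $\big[\sum_n c_{\textbf{s},n}P_n(\hat L)\big]H_{k-1}$ is unambiguous, and check that the recursive summand telescopes correctly through all $K$ layers with base case $H_0 = X$; given that, the softmax--cross-entropy collapse and the appeal to Lemma~\ref{lem:1} are routine.
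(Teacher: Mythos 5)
Your proposal is correct and follows essentially the same route as the paper's proof: the chain-rule factorization $\frac{\partial \mathcal{L}_{\textbf{err}}}{\partial \textbf{s}}=\frac{\partial\mathcal{L}_{\textbf{err}}}{\partial H_K}\frac{\partial H_K}{\partial c_{\textbf{s},n}}\frac{\partial c_{\textbf{s},n}}{\partial \textbf{s}}$, the softmax--cross-entropy collapse to $\hat Y - Y$, the product rule on $\bigl[\sum_n c_{\textbf{s},n}P_n(\hat L)\bigr]H_{k-1}$ yielding the explicit and recursive summands, and the appeal to Lemma~\ref{lem:1} for $\frac{\partial c_{\textbf{s},n}}{\partial \textbf{s}}$. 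Your added remarks on the recursion terminating at $H_0=X$ and on the tensor bookkeeping are more explicit than the paper's treatment but do not change the argument.
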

Lemma \ref{lem:2} let \ourmodel backpropagate the error to update $\textbf{s}$ %using   %with Equation \ref{eq:dl_ds_NC}
to obtain the optimal scale per node for node classification. 

\subsection{Graph Classification}

\label{sec:GC}
%\textbf{Forward Propagation.} 
Consider a population of graphs $\{G_t\}_{t=1}^T$ with corresponding labels $\{Y_t\}_{t=1}^T$, 
%The objective of training a graph classification model is to find a function 
and learning a graph classification model finds a function $f(G_t) = Y_t$. 
%For graph classification, \ourmodel consists of $K$ multiple convolution layers and output layers as well as node classification task. 
%So, they are same as Equation %\ref{eq:Heat_Kernel_Forard_Propagation} before the last %some layers that predicts the graph label.
For this, %we designed 
the output layer %(after $K$ number of convolution layers) 
%of \ourmodel 
consists of a readout layer $f_R(\cdot)$ (i.e., MLP with ReLU) and a softmax $\sigma(\cdot)$ at the end to construct pseudo-probability for each class. %and yields $\hat{Y} = \sigma(h_G)$.
%Specifically, the %readout layer 
The $f_R(\cdot)$ with weights $W_R$ takes the output $H_K$ from the convolution layers as an input and returns $H_R$ as
%What differs from node classification is that some layers are added to predict $\hat Y$ at graph-level. 
%These layers are called readout layer $h_G$ to collect all node representations in a graph to form a graph representation.
%The $h_G$ is represented as
\begin{equation}
\footnotesize
    H_R=f_R(H_K;W_R),
    \label{eq:Readout_Layer}
\end{equation}
%where $R$ is readout function such as MLP of the $K$-th hidden state. 
%Then, \ourmodel can predict  
%and $\hat{Y}$ is computed as $\hat{Y} = \sigma(h_G)$.  %for the graph from the $h_G$ with softmax activation function. 
and prediction $\hat{Y} = \sigma(H_R)$.  %for the graph from the $h_G$ with softmax activation function. 
%\textcolor{red}{$Y$ is one-hot encoded ground truth %is 1 if graph $G_t$ has a label, otherwise 0,
%and compared with the $\hat{Y}$.}
%\textcolor{red}{

\begin{lemma}
Let $H_k$ from Eq. \eqref{eq:Heat_Kernel_Forard_Propagation} be a graph convolution with a heat kernel with polynomial $P_n$ and coefficients $c_{\textbf{s},n}$. 
If a loss $\mathcal{L}_{\textbf{err}}$ for classifying graph-wise label 
is defined as cross-entropy between a 
%prediction $\hat Y$ and the true $Y$, 
prediction $\hat Y = \sigma(H_R)$, where $\sigma(\cdot)$ is a softmax function, and the true $Y$, then
% Consider a graph convolution network where its convolution is operated by the heat kernel with polynomial approximation.
% If the loss $\mathcal{L}_{\textbf{err}}$ for graph classification is defined as cross-entropy, then
\begin{equation}
\footnotesize
    \begin{aligned}
    \frac{\partial \mathcal{L}_\textbf{err}}{\partial \textbf{s}}=&(\hat{Y}-Y) \times \frac{\partial H_R}{\partial H_K} \times \sigma^{'}_k([\sum_{n=0}^m c_{\textbf{s},n}P_n(\hat{L})]H_{k-1}W_k) W_k^{\mathsf{T}} \\&\times(\sum_{n=0}^m P_n(\hat{L})H_{k-1}^{\mathsf{T}}+[\sum_{n=0}^m c_{\textbf{s},n}P_n(\hat{L})]\frac{\partial H_{k-1}}{\partial c_{\textbf{s},n}}) \frac{\partial c_{\textbf{s},n}}{\partial \textbf{s}}
    %- \alpha[s<0]
    \label{eq:dl_ds_GC}
    \end{aligned}
\end{equation}
where $\sigma^{'}_k$ is the derivative of $\sigma_k$.
%\textcolor{red}{where $h_G^{'}$ is the derivative of $h_G$ with respect to $H_K$, and $\sigma^{'}_k$ is the derivative of activation function $\sigma$.} 
\label{lem:3}
\end{lemma}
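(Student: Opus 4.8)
The plan is to prove Lemma~\ref{lem:3} by the same chain-rule argument that establishes Lemma~\ref{lem:2}, inserting exactly one extra factor that accounts for the readout map $f_R$. Concretely, I would decompose $\frac{\partial \mathcal{L}_\textbf{err}}{\partial \textbf{s}}$ along the computational path $\textbf{s} \to c_{\textbf{s},n} \to H_K \to H_R \to \hat Y \to \mathcal{L}_\textbf{err}$ and treat each arrow separately, so that the only structural difference from the node-classification case is the presence of $H_R = f_R(H_K;W_R)$ between $H_K$ and the softmax.

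First I would use the standard fact that for a softmax prediction $\hat Y = \sigma(H_R)$ paired with a cross-entropy loss against a one-hot $Y$, the gradient with respect to the pre-softmax logits collapses to $\frac{\partial \mathcal{L}_\textbf{err}}{\partial H_R} = \hat Y - Y$; this supplies the leading factor and avoids differentiating the log and the softmax separately. Next I would apply the chain rule through the readout layer of Eq.~\eqref{eq:Readout_Layer}, introducing the Jacobian $\frac{\partial H_R}{\partial H_K}$ (for an MLP with ReLU this is a product of weight matrices masked by the activation patterns, but for the statement it is enough to keep it symbolic). This $\frac{\partial H_R}{\partial H_K}$ is precisely the term absent from Lemma~\ref{lem:2}, so everything downstream of $H_K$ can be quoted verbatim from there.

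The substantive step is differentiating the convolution layer $H_k=\sigma_k\big([\sum_{n=0}^m c_{\textbf{s},n}P_n(\hat L)]H_{k-1}W_k\big)$ from Eq.~\eqref{eq:Heat_Kernel_Forard_Propagation} with respect to $\textbf{s}$. Writing $\sigma_k'$ for the activation derivative evaluated at its argument and factoring out $W_k^{\mathsf{T}}$, the derivative of the bracketed propagation operator splits by the product rule into (i) the explicit dependence of $\sum_n c_{\textbf{s},n}P_n(\hat L)$ on the coefficients, giving $\sum_{n=0}^m P_n(\hat L)H_{k-1}^{\mathsf{T}}$, and (ii) the implicit dependence of $H_{k-1}$ on $\textbf{s}$ through the coefficients, giving $[\sum_{n=0}^m c_{\textbf{s},n}P_n(\hat L)]\frac{\partial H_{k-1}}{\partial c_{\textbf{s},n}}$. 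Multiplying (i)$+$(ii) by $\frac{\partial c_{\textbf{s},n}}{\partial \textbf{s}}$ closes the chain back to $\textbf{s}$, and Lemma~\ref{lem:1} then lets me substitute $\frac{\partial c_{\textbf{s},n}}{\partial \textbf{s}} = -\int_a^b \lambda e^{-\textbf{s}\lambda}P_n(\lambda)w(\lambda)\,d\lambda$ (equivalently, the explicit closed forms read off from Eqs.~\eqref{eq:Chebyshev_Polynomial_Coefficient}--\eqref{eq:Lagurre_Polynomial_Coefficient}). Premultiplying by $(\hat Y - Y)\frac{\partial H_R}{\partial H_K}\sigma_k'(\cdot)W_k^{\mathsf{T}}$ reproduces Eq.~\eqref{eq:dl_ds_GC}.

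The main obstacle I anticipate is bookkeeping rather than analysis: the displayed formula suppresses the outer sum over layers $k$ and writes $\frac{\partial H_{k-1}}{\partial c_{\textbf{s},n}}$ as a single object, so I would need to state the layerwise recursion for $\frac{\partial H_k}{\partial \textbf{s}}$ explicitly (unrolled from $k=1$ to $K$ with $H_0=X$ independent of $\textbf{s}$) and keep the per-node structure of $\textbf{s}=[s_1,\dots,s_N]$ consistent with the matrix shapes, since each $s_p$ enters only through the coefficients acting on node $p$'s row. Differentiability at every stage is inherited from Lemma~\ref{lem:1} together with the smoothness of $\sigma_k$ and the softmax and the (piecewise-linear, hence almost-everywhere differentiable) readout, so no separate regularity argument is required.
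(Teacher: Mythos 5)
Your proposal matches the paper's own proof essentially step for step: the same chain-rule decomposition $\frac{\partial \mathcal{L}_{\textbf{err}}}{\partial \textbf{s}}=\frac{\partial\mathcal{L}_{\textbf{err}}}{\partial H_R}\frac{\partial H_R}{\partial H_K}\frac{\partial H_K}{\partial c_{\textbf{s},n}}\frac{\partial c_{\textbf{s},n}}{\partial \textbf{s}}$, the same softmax--cross-entropy collapse to $\hat Y - Y$, the same symbolic treatment of the readout Jacobian $\frac{\partial H_R}{\partial H_K}$, and the same product-rule split of $\frac{\partial H_K}{\partial c_{\textbf{s},n}}$ into explicit and recursive implicit terms. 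No substantive differences.
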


Lemma \ref{lem:3} let \ourmodel adaptively update the scale $\textbf{s}$ across all nodes for each node using Eq.~\eqref{eq:Scale_Update} 
towards the minimal error for predicting graph-wise label. 

\section{Experiments}
\label{sec:experiments}

In this section, we compare the performances of \ourmodel and baselines 
on node classification and graph classification. 
\textbf{\ourmodelc}, \textbf{\ourmodelh} and \textbf{\ourmodell} correspond to approximation frameworks with $P^T_n$, $P^H_n$ and $P^L_n$, 
and the model with exact computation of the heat kernel convolution 
%using full eigendecomposition of $\hat L$ 
is referred as \textbf{Exact}. % \cite{choi2022much}.  
%We also refer to the model using heat kernel made by eigendecomposition as \textbf{Exact}.
For the node classification, we used conventional benchmarks for semi-supervised learning task \cite{shchur2018pitfalls}. For the graph classification, we investigated real brain network data from Alzheimer's Disease Neuroimaging Initiative (ADNI) to classify different diagnostic stages towards Alzheimer's Disease (AD) for a practical application. 
These tasks on seven different benchmarks can demonstrate the feasibility of LSAP. 

\subsection{Semi-supervised Node Classification}

\subsubsection{Datasets.} 
We conducted experiments on 
standard node classification datasets (in Table \ref{tab:Datasets_NC})
that provide connected and undirected graphs. 
Cora, Citeseer and Pubmed \cite{sen2008collective} are constructed as citation networks,
% The nodes are papers, the edges are citations of one paper to another, the features are bag-of-words representation of papers, and the labels are specified as an academic topic. 
Amazon Computer and Amazon Photo \cite{shchur2018pitfalls} define co-purchase networks,
and
% The nodes are goods, 
% the edges indicate whether two goods are frequently purchased together, 
% the features are bag-of-words representation of product reviews, 
% and the labels are categories of goods. 
Coauthor CS \cite{shchur2018pitfalls} is a co-authorship network. 

\begin{table}[!h]
\centering
    % \vspace{-10pt}
    \renewcommand{\arraystretch}{1.0} 
    \renewcommand{\tabcolsep}{0.3cm}
    {%\scriptsize
    \scalebox{0.8}{
    \begin{tabular}{c|cccc}
        \Xhline{3\arrayrulewidth}
        \textbf{Dataset} & \textbf{Nodes} & \textbf{Edges} & \textbf{Classes} & \textbf{Features}\\
        \hline
        Cora & 2,708 & 5,429 & 7 & 1,433  \\
        Citeseer & 3,327 & 4,732 & 6 & 3,703 \\
        Pubmed & 19,717 & 44,338 & 3 & 500 \\
        Amazon Computers & 13,752 & 245,861 & 10 & 767 \\
        Amazon Photo  & 7,650 & 119,081 & 8 & 745  \\
        Coauthor CS  & 18,333 & 81,894 & 15 & 6,805 \\
        \Xhline{3\arrayrulewidth}
    \end{tabular}}
    \caption{Summary of node classification datasets.}
    \label{tab:Datasets_NC}
}
% \vspace{-10pt}
\end{table}

%\noindent\textbf{Setup.} 
\subsubsection{Setup.} 
We used the accuracy as the evaluation metric.
For Cora, Citseer and Pubmed data, eighteen different baselines were used to compare the results for the node classification task as listed in Table \ref{tab:Result_CCP}. These standard benchmarks are provided with fixed split of 20 nodes per class for training, 500 nodes for validation and 1000 nodes for testing 
as in other literature \cite{kim2022find, wu2022structural}. 

For Amazon and Coauthor datasets, seven baselines are used as in Table \ref{tab:Result_AAC}. 
For the MLP with 3-layers, GCN and 3ference, results are obtained from \cite{luo2022inferring}, 
and a result for DSF comes from \cite{guo2023graph}.
For others, the experiments were performed %Performance validations with these datasets %are using 
%were performed with fixed 
by randomly splitting the data as 60\%/20\%/20\% for training/validation/testing datasets as in \cite{luo2022inferring} 
and replicating it 10 times to obtain mean and standard deviation of the evaluation metric.

\begin{table}[!b]
% \vspace{-10pt}
%\vspace{-15pt}
\centering
% \vspace{-10pt}
\renewcommand{\arraystretch}{0.9} 
\renewcommand{\tabcolsep}{0.28cm}
\scalebox{0.77}{
\begin{tabular}{l|ccc}
    \Xhline{3\arrayrulewidth}
    \textbf{Model} & \textbf{Cora} & \textbf{Citeseer} & \textbf{Pubmed}\\
    \hline
%    SVM (Linear) & 37.60 & 55.40 & 72.70 \\
%    MLP (2-layers) & 60.10 & 59.60 & 73.30 \\
    GCN \cite{kipf2016semi} & 81.50 & 70.30 & 78.60 \\
    GAT \cite{velivckovic2017graph} & 83.00 & 72.50 & 79.00 \\
    APPNP \cite{klicpera2018predict} & 83.30 & 71.80 & 80.10 \\
    $\text{GDC}^{\dag}$ \cite{klicpera2019diffusion} & 82.20 & 71.80 & 79.10 \\
    SGC \cite{wu2019simplifying} & 81.70 & 71.30 & 78.90 \\
    Bayesian GCN \cite{zhang2019bayesian} & 81.20 & 72.20 & - \\
    Shoestring \cite{lin2020shoestring} & 81.90 & 69.50 & 79.70 \\
    $\text{GraphHeat}^{\dag}$ \cite{xu2020graph} & 83.70 & 72.50 & 80.50 \\
    g-U-Nets \cite{gao2019graph} & 84.40 & 73.20 & 79.60 \\
    GCNII \cite{chen2020simple} & 85.50 & 73.40 & 80.30 \\
    GRAND \cite{feng2020graph} & 85.40 & 75.40 & 82.70 \\
    DAGNN \cite{liu2020towards} & 84.40 & 73.30 & 80.50 \\
    SelfSAGCN \cite{yang2021selfsagcn} & 83.80 & 73.50 & 80.70 \\
    DIAL-GNN \cite{chen2019deep} & 84.50 & 74.10 & - \\
    SuperGAT \cite{kim2022find} & 84.30 & 72.60 & 81.70 \\
    $\text{GRAND}^{\dag}$ \cite{chamberlain2021grand} & 83.60 & 74.10 & 78.80 \\
    $\text{ADC}^{\dag}$ \cite{zhao2021adaptive} & 84.50 & 74.50 & 83.00 \\
    SEP-N \cite{wu2022structural} & 84.80 & 72.90 & 80.20 \\
    \hline
    \cellcolor{gray!20}\textbf{\ourmodelc} & \cellcolor{gray!20}\textbf{87.90}  & \cellcolor{gray!20}\textbf{76.50} & \cellcolor{gray!20}83.30 \\
    \cellcolor{gray!20}\textbf{\ourmodelh} & \cellcolor{gray!20}85.00 & \cellcolor{gray!20}76.10    & \cellcolor{gray!20}82.60 \\
    \cellcolor{gray!20}\textbf{\ourmodell} & \cellcolor{gray!20}85.90 & \cellcolor{gray!20}75.90 &   \cellcolor{gray!20}\textbf{84.10} \\ 
    \cellcolor{gray!20}Exact & \cellcolor{gray!20}\underline{88.20} &  \cellcolor{gray!20}\underline{78.10} & \cellcolor{gray!20}\underline{85.30} \\
    \Xhline{3\arrayrulewidth}
\end{tabular}
}
\\
{\footnotesize $\dag$: graph diffusion-based models.}
\caption{
Accuracy (\%) on Cora, Citeseer, and Pubmed. 
LSAP yields better performances over existing baselines (in bold)  
similar to Exact achieving the best results (underline). 
}
\label{tab:Result_CCP}
%\vspace{-5pt}
\end{table}

\begin{table}[!t]
% \vspace{-10pt}
\centering
    \renewcommand{\arraystretch}{1.0} 
    \renewcommand{\tabcolsep}{0.45cm}
  {
  \scalebox{0.73}{
  \begin{tabular}{l|ccc}
    \Xhline{3\arrayrulewidth}
    \multirow{2}{*}{\textbf{Model}} & 
    \textbf{Amazon} & \textbf{Amazon} & \textbf{Coauthor} \\ 
    & \textbf{Computer} & \textbf{Photo} & \textbf{CS} \\
    \hline
%    SVM (Linear) & 80.22 & 89.86 & 94.33 \\
    MLP (3-layers) & 84.63 & 91.96 & 95.63 \\
    GCN & 90.49 & 93.91 & 93.32 \\
    % GCN \cite{kipf2016semi} & 90.49 & 93.91 & 93.32 \\
    3ference & 90.74 & 95.05 & {\bf 95.99} \\
    GAT  & 91.18 $\pm$ 0.74  & 94.49 $\pm$ 0.54  & 93.42 $\pm$ 0.31  \\
    GDC & 86.03 $\pm$ 2.26  & 93.28 $\pm$ 1.03  & 92.68 $\pm$ 0.53  \\
    GraphHeat & 89.59 $\pm$ 3.15 & 94.04 $\pm$ 0.75 & 92.93 $\pm$ 0.20 \\
    DSF & 92.84 $\pm$ 0.10 & 95.73 $\pm$ 0.08 & - \\
    \hline 
    \cellcolor{gray!20}\textbf{\ourmodelc} & \cellcolor{gray!20}\underline{\textbf{94.43 $\pm$ 1.16}}  & \cellcolor{gray!20}95.96 $\pm$ 1.65  & \cellcolor{gray!20}\underline{94.81 $\pm$ 0.55}  \\
    \cellcolor{gray!20}\textbf{\ourmodelh} & \cellcolor{gray!20}93.64 $\pm$ 0.86 & \cellcolor{gray!20}\underline{\textbf{96.65 $\pm$ 0.67}} & \cellcolor{gray!20}93.52 $\pm$ 0.97 \\
    \cellcolor{gray!20}\textbf{\ourmodell} & \cellcolor{gray!20}92.76 $\pm$ 0.48 & \cellcolor{gray!20}95.35 $\pm$ 0.85 & \cellcolor{gray!20}93.58 $\pm$ 0.82 \\
    \cellcolor{gray!20}Exact & \cellcolor{gray!20}93.52 $\pm$ 0.65 & \cellcolor{gray!20}96.41 $\pm$ 1.54 & \cellcolor{gray!20}93.71 $\pm$ 1.16 \\
    \Xhline{3\arrayrulewidth}
  \end{tabular}}
\caption{
Mean node classification accuracy (\%) and s.d. on Amazon Computers, Amazon Photo, and Coauthor CS. 
The best results are in bold, and the best results within experiments with replicates are underlined. 
}
\label{tab:Result_AAC}
}
% \vspace{-10pt}
\end{table}

\subsubsection{Results.} 
%\noindent\textbf{Results.} 
Table \ref{tab:Result_CCP} and \ref{tab:Result_AAC} show 
the 
%node classification 
performance comparisons between \ourmodel and baseline models. 
As shown in Table \ref{tab:Result_CCP}, on the node classification benchmarks, 
learning node-wise adaptive scale performs the best in both Exact and its approximations. 
\ourmodel showed improved performance over existing models; exceeding previous best baseline performances by 
2.4\% (on Cora) and 1.1\% (on Citeseer and Pubmed) %and 1.1\% 
The similar performance of \ourmodel with that of Exact %was quite similar to that of Exact 
demonstrates accurate convolution approximation.  
Despite slight decreases, training on adaptive scales using \ourmodel was much faster. %, 
%which will be discussed later.
%in Section \ref{sub:analysis}.
%and it will be explained in Section \ref{sub:analysis}. 

For additional datasets in Table \ref{tab:Result_AAC}, 
the performance of \ourmodel outperformed the baselines.
The results for MLP, GCN and 3ference were adopted from \cite{luo2022inferring}, 
which reported the best performance out of 10 replicated experiments.
We ran the same experiments 
%(i.e., 10 replicates) 
for GAT, GDC, GraphHeat, Exact and LSAP, 
and the mean and standard deviation of metrics are given.
%as well as the best ones in parentheses. 
%Still, we show that our average performances outperformed them on the Amazon datasets, and the highest accuracy performs the best over the baselines.
%Based on the highest value of accuracy which is written in parentheses in Table \ref{tab:Result_AAC}, 
%As seen in Table \ref{tab:Result_AAC}, 
\ourmodel shows %about 4\%p, 2\&p and 1\%p 
significant improvements on the Amazon Computer (94.43\%, \ourmodelc) and Amazon Photo (96.65\%, \ourmodelh). 
On the Coauther CS, 
we also achieve the highest mean accuracy (94.81\%, \ourmodelc) among the experiments with random replicates. 

\begin{table}[!b]
% \vspace{-5pt}
  % \vspace{-10pt}
    \centering
    \renewcommand{\arraystretch}{1.1} 
    \renewcommand{\tabcolsep}{0.1cm}
    \scalebox{0.7}{%\footnotesize
    \begin{tabular}{c|c||ccccc}
        \Xhline{3\arrayrulewidth}
        \textbf{Biomarker} & \textbf{Category} & \textbf{CN} & \textbf{SMC} & \textbf{EMCI} & \textbf{LMCI} & \textbf{AD}\\
        \hline
        \multirow{3}{*}{\shortstack{Cortical\\Thickness}} & \# of subjects & 359 & 181 & 437 & 180 & 166 \\
        \cline{2-7}
        & Gender (M / F) & 178 / 181 & 69 / 112 & 249 / 188 & 119 / 61 & 102 / 64 \\
        \cline{2-7}
        & Age (Mean$\pm$Std) & 72.8$\pm$1.4 & 72.0$\pm$5.2 & 71.0$\pm$7.9 & 70.9$\pm$6.1 & 74.8$\pm$8.7 \\
        \hline
        \multirow{3}{*}{FDG} & \# of subjects & 345 & 186 & 461 & 231 & 162 \\
        \cline{2-7}
        & Gender (M / F) & 173 / 172 & 66 / 120 & 262 / 199 & 152 / 79 & 102 / 60 \\
        \cline{2-7}
        & Age (Mean$\pm$Std) & 73.0$\pm$1.3 & 71.7$\pm$5.2 & 71.7$\pm$7.8 & 71.1$\pm$7.0 & 74.9$\pm$8.8 \\
        \Xhline{3\arrayrulewidth}
    \end{tabular}}
\caption{Demographics of the ADNI dataset.}
\label{tab:Datasets_ADNI}
% \vspace{-10pt}
\end{table}

% \vspace{-5pt}
\subsection{Graph Classification}
\label{sec:classification}

%\textbf{Datasets.} 
\subsubsection{Datasets.} 
%The ADNI study aims for prevention and treatment of AD.
Using the magnetic resonance images (MRI) from the ADNI data, 
each brain was partitioned into 148 cortical regions and 12 sub-cortical regions using Destrieux atlas \cite{destrieux2010automatic}, 
and tractography on diffusion-weighted imaging (DWI) was applied to calculate the number of white matter fibers connecting the 160 brain regions to construct $160\times 160$ structural network (i.e., graph). 
On the same parcellation, region-wise imaging features such as Standard Uptake Value Ratio (SUVR) of metabolism level from FDG-PET and cortical thickness from MRI were measured. For the SUVR normalization, Cerebellum was used as the reference. 
% Total of $N$=1584 subjects were used for the experiment with 5 different diagnostic classes demonstrating progression of AD, i.e.,
The dataset consists of 5 AD-specific progressive groups: Control (CN), Significant Memory Concern (SMC), Early Mild Cognitive Impairment (EMCI), Late Mild Cognitive Impairment (LMCI) and AD.
The demographics of ADNI dataset are summarized in Table \ref{tab:Datasets_ADNI}. % for our graph classification experiment. 

\begin{table}[!t]
% \vspace{-10pt}
\centering
{
\renewcommand{\arraystretch}{1.0} 
\renewcommand{\tabcolsep}{0.15cm}
\scalebox{0.72}{
\begin{tabular}{c|l|ccc}
    \Xhline{3\arrayrulewidth}
    \multirow{2}{*}{\textbf{Feature}} & \multirow{2}{*}{\textbf{Model}} & \multicolumn{3}{c}{\textbf{Classification (ADNI)}}\\ \cline{3-5}
     & & \textbf{Accuracy (\%)} & \textbf{Precision} &  \textbf{Recall} \\ 
    \hline
    \multirow{10}{*}{\shortstack{\textbf{Cortical}\\\textbf{Thickness}}} 
    & SVM (Linear) & 82.39 $\pm$ 2.73 & 0.822 $\pm$ 0.033 & 0.852 $\pm$ 0.025 \\
    & MLP (2-layers) & 78.76 $\pm$ 2.21 & 0.792 $\pm$ 0.036 & 0.799 $\pm$ 0.026  \\
    & GCN & 61.37 $\pm$ 3.09 & 0.598 $\pm$ 0.025 & 0.626 $\pm$ 0.044 \\
    & GAT & 64.17 $\pm$ 5.46 & 0.627 $\pm$ 0.067 & 0.668 $\pm$ 0.046  \\
    & GDC & 77.10 $\pm$ 4.25 & 0.769 $\pm$ 0.050 & 0.785 $\pm$ 0.044  \\
    & GraphHeat & 70.90 $\pm$ 3.17 & 0.703 $\pm$ 0.030 & 0.718 $\pm$ 0.026  \\ 
    & ADC & 82.10 $\pm$ 2.41 & 0.776 $\pm$ 0.019 &	0.728 $\pm$ 0.067\\  \cline{2-5}
    & \cellcolor{gray!20}\textbf{\ourmodelc} & \cellcolor{gray!20}\textbf{87.00 $\pm$ 2.16} & \cellcolor{gray!20}\textbf{0.868 $\pm$ 0.027} & \cellcolor{gray!20}\textbf{0.885 $\pm$ 0.027} \\
    & \cellcolor{gray!20}\textbf{\ourmodelh} & \cellcolor{gray!20}85.41 $\pm$ 2.32 & \cellcolor{gray!20}0.859 $\pm$ 0.031 & \cellcolor{gray!20}0.867 $\pm$ 0.030 \\
    & \cellcolor{gray!20}\textbf{\ourmodell} & \cellcolor{gray!20}85.64 $\pm$ 1.86 & \cellcolor{gray!20}0.859 $\pm$ 0.022 & \cellcolor{gray!20}0.866 $\pm$ 0.022 \\
    & \cellcolor{gray!20}Exact & \cellcolor{gray!20}86.24 $\pm$ 1.96 & \cellcolor{gray!20}0.866 $\pm$ 0.017 & \cellcolor{gray!20}0.867 $\pm$ 0.023 \\
    \hline
    \multirow{10}{*}{\shortstack{\textbf{FDG}}} 
    & SVM (Linear) & 85.27 $\pm$ 2.09 & 0.857 $\pm$ 0.027 & 0.869 $\pm$ 0.021 \\
    & MLP (2-layers) & 87.51 $\pm$ 1.62 & 0.882 $\pm$ 0.024 & 0.882 $\pm$ 0.014  \\
    & GCN & 68.81 $\pm$ 1.95 & 0.677 $\pm$ 0.028 & 0.697 $\pm$ 0.025  \\
    & GAT & 69.24 $\pm$ 7.13 & 0.670 $\pm$ 0.106 & 0.736 $\pm$ 0.037  \\
    & GDC & 86.21 $\pm$ 3.24 & 0.867 $\pm$ 0.033 & 0.870 $\pm$ 0.029  \\
    & GraphHeat & 76.97 $\pm$ 2.42 & 0.775 $\pm$ 0.035 & 0.773 $\pm$ 0.010  \\ 
    & ADC & 88.60 $\pm$ 2.81 & 0.708 $\pm$ 0.062 &	0.753 $\pm$ 0.053\\  \cline{2-5}
    & \cellcolor{gray!20}\textbf{\ourmodelc} & \cellcolor{gray!20}89.24 $\pm$ 2.23 & \cellcolor{gray!20}0.895 $\pm$ 0.022 & \cellcolor{gray!20}0.904 $\pm$ 0.023 \\
    & \cellcolor{gray!20}\textbf{\ourmodelh} & \cellcolor{gray!20}90.11 $\pm$ 2.44 & \cellcolor{gray!20}0.903 $\pm$ 0.027 & \cellcolor{gray!20}0.910 $\pm$ 0.022 \\
    & \cellcolor{gray!20}\textbf{\ourmodell} & \cellcolor{gray!20}\textbf{90.40 $\pm$ 1.38} & \cellcolor{gray!20}\textbf{0.909 $\pm$ 0.018} & \cellcolor{gray!20}\textbf{0.914 $\pm$ 0.015} \\ 
    & \cellcolor{gray!20}Exact & \cellcolor{gray!20}90.18 $\pm$ 2.67 & \cellcolor{gray!20}0.907 $\pm$ 0.028 & \cellcolor{gray!20}0.907 $\pm$ 0.028 \\
    \Xhline{3\arrayrulewidth}
    \end{tabular}}
\caption{
Classification performances on ADNI dataset.
%(for CN / SMC / EMCI / LMCI / AD). 
}
\label{tab:Result_ADNI}
}
% \vspace{-10pt}
\end{table}

\subsubsection{Setup.} 
5-way classification %experiment 
was designed to classify the different groups in Table \ref{tab:Datasets_ADNI}. 
5-fold cross validation was used to obtain unbiased results, 
and accuracy, precision, and recall in their mean were used as evaluation metrics. 
As the baseline, we adopted 
Linear Support Vector Machine (SVM), Multi-Layer Perceptron (MLP) with 2 layers, GCN \cite{kipf2016semi}, GAT \cite{velivckovic2017graph}, GDC \cite{klicpera2019diffusion}, GraphHeat \cite{xu2020graph} and ADC \cite{zhao2021adaptive}. 
Each sample is given with a graph (i.e., brain network) and two node features (i.e., cortical thickness and FDG measure) 
which are well-known as useful biomarkers for AD diagnosis.

\begin{figure}[!b]
% \vspace{-10pt}
    \centering
    \footnotesize{
    \setlength{\tabcolsep}{0pt}
    \renewcommand{\arraystretch}{0.7}
    \scalebox{0.90}{
        \begin{tabular}{cc}
        \raisebox{0\height}[0pt][0pt]{\textbf{Cora}} & \raisebox{0\height}[0pt][0pt]{\textbf{Citeseer}} \\
        \includegraphics[width=0.48\linewidth]{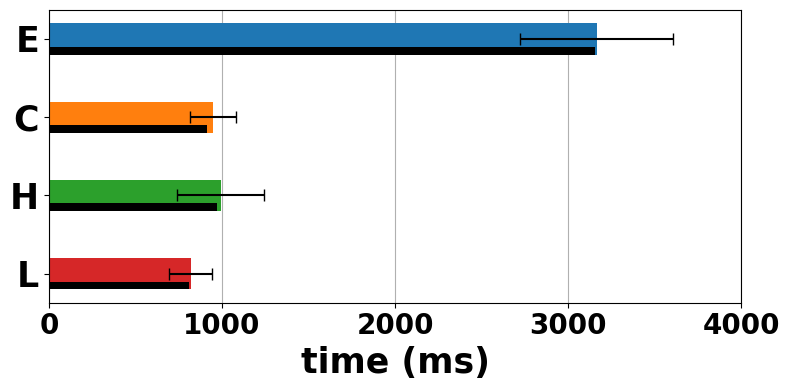} &
        \includegraphics[width=0.48\linewidth]{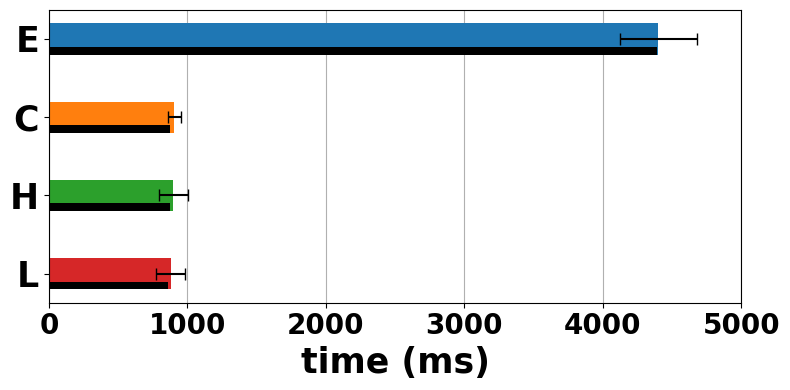} \\
        \raisebox{0\height}[0pt][0pt]{\textbf{Pubmed}} & \raisebox{0\height}[0pt][0pt]{\textbf{ADNI}}\\
        \includegraphics[width=0.48\linewidth]{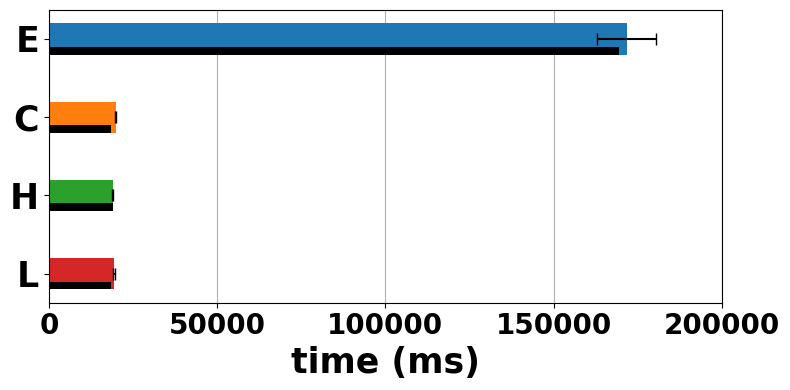} &
        \includegraphics[width=0.48\linewidth]{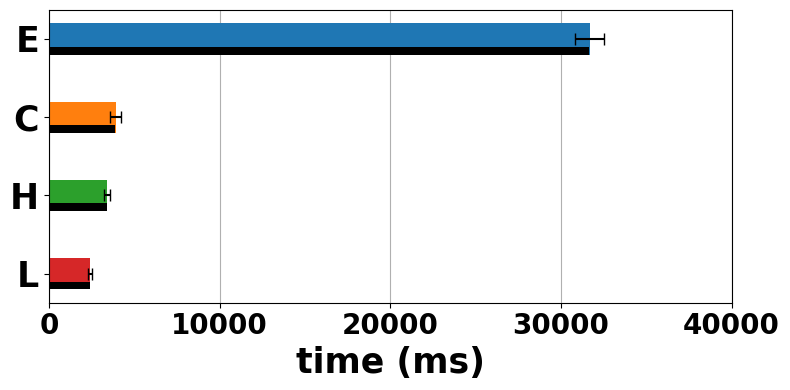} \\
        \multicolumn{2}{c}{\includegraphics[width=0.8\linewidth]{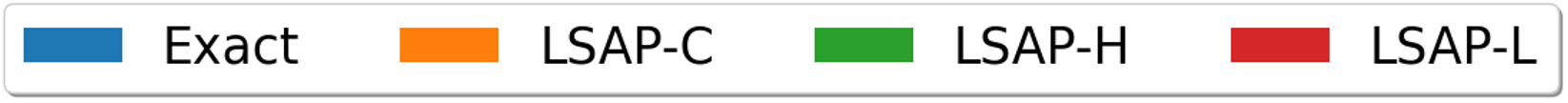}}\\
    \end{tabular}}
    }
    % \vspace{-10pt}
    \caption{
    Comparisons of computation time (in {\em ms}) for one epoch (Forward and backpropagation). 
    Within the epoch, time for heat kernel convolution is given in black bar.  
    Results were obtained %from node classification and a graph classification 
    with 10 repetitions.
    % , and LSAP saves majority of the computation.
    % with approximation.
    }
    \label{fig:Training_Time_NC}
\end{figure}

\begin{figure*}[t!]
  \centering
  \renewcommand{\arraystretch}{1.0}
  \renewcommand{\tabcolsep}{0.5cm}
  \normalsize{
  \scalebox{0.71}{
  \begin{tabular}{cccccl}
    %& 
    \raisebox{.1\height}[0pt][0pt]{\textbf{Exact}} 
    & \raisebox{.1\height}[0pt][0pt]{\textbf{\ourmodelc}} 
    & \raisebox{.1\height}[0pt][0pt]{\textbf{\ourmodelh}} 
    & \raisebox{.1\height}[0pt][0pt]{\textbf{\ourmodell}} 
    & \\ 
    
    %\raisebox{5\height}[0pt][0pt]{\textbf{Inner}} & 
    \includegraphics[width=0.21\linewidth]{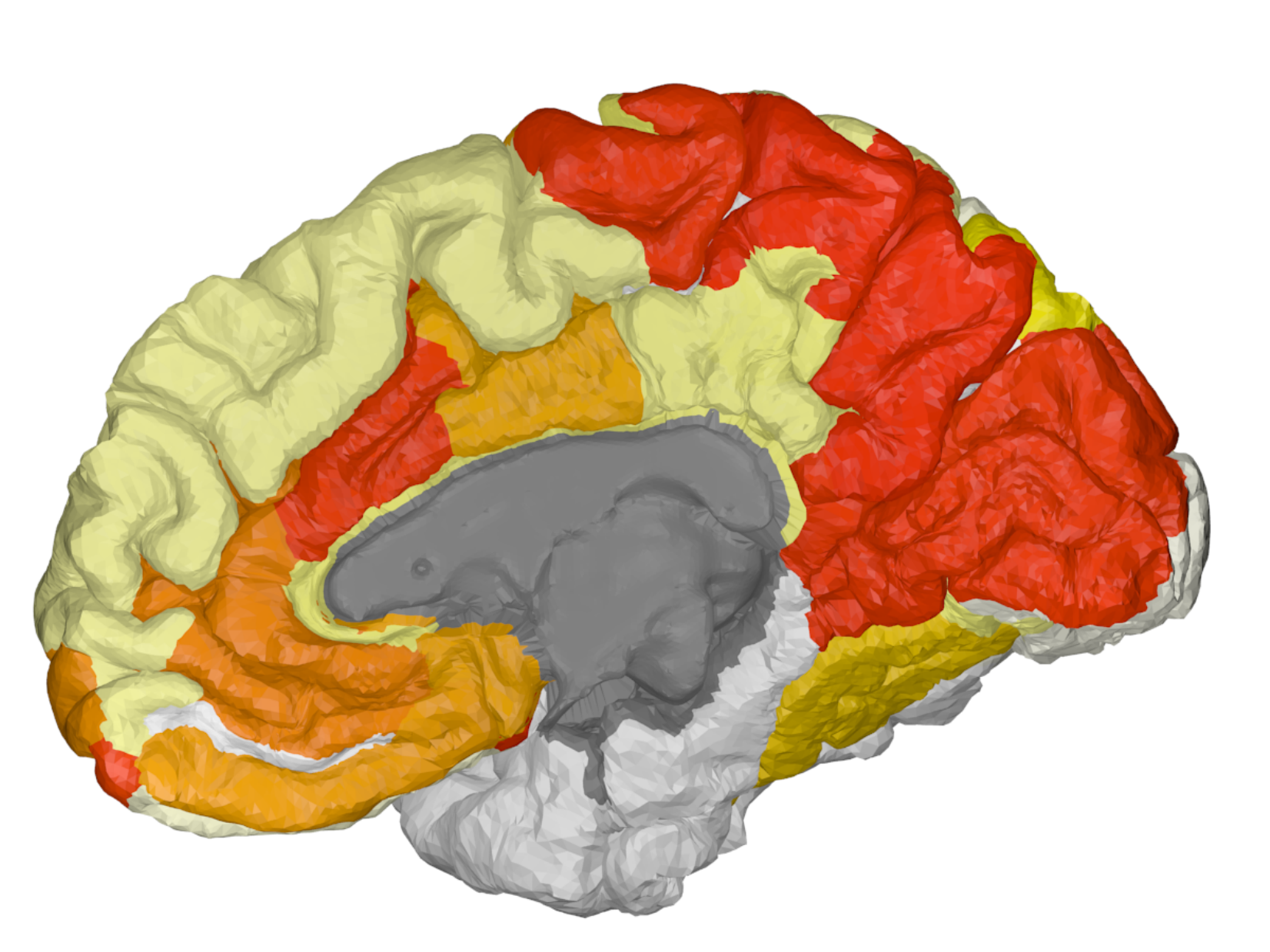} &
    \includegraphics[width=0.21\linewidth]{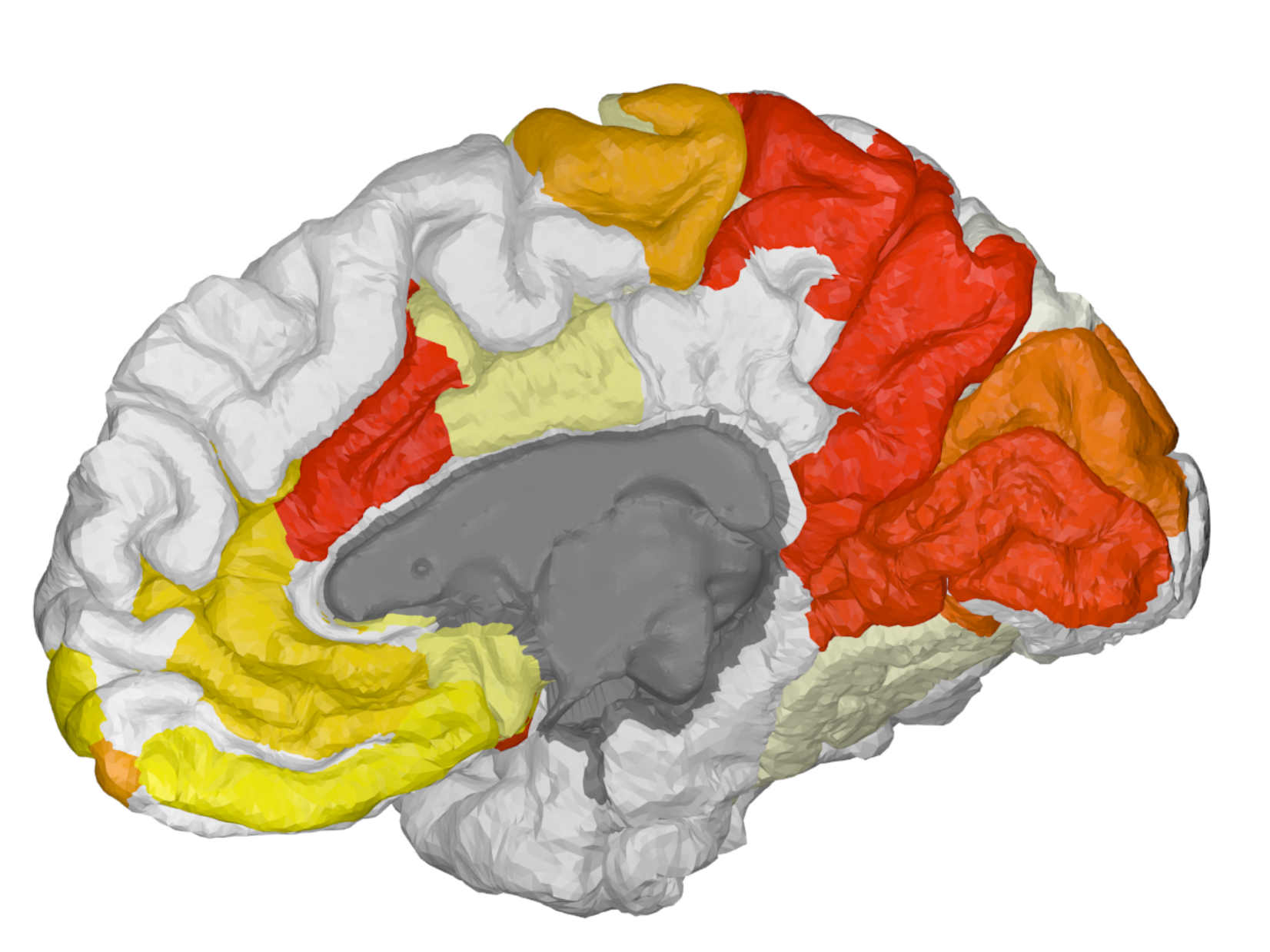} &
    \includegraphics[width=0.21\linewidth]{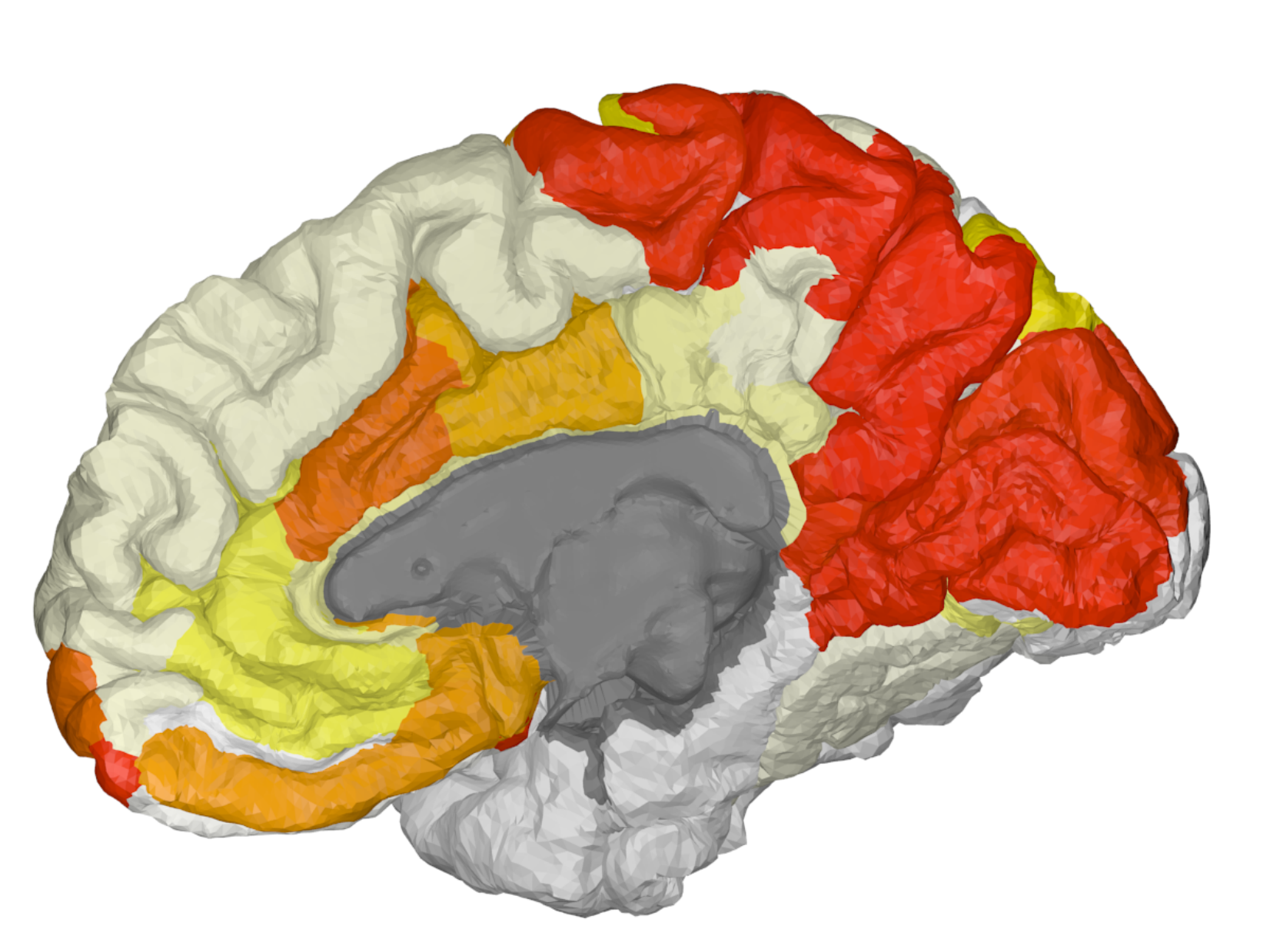} &
    \includegraphics[width=0.21\linewidth]{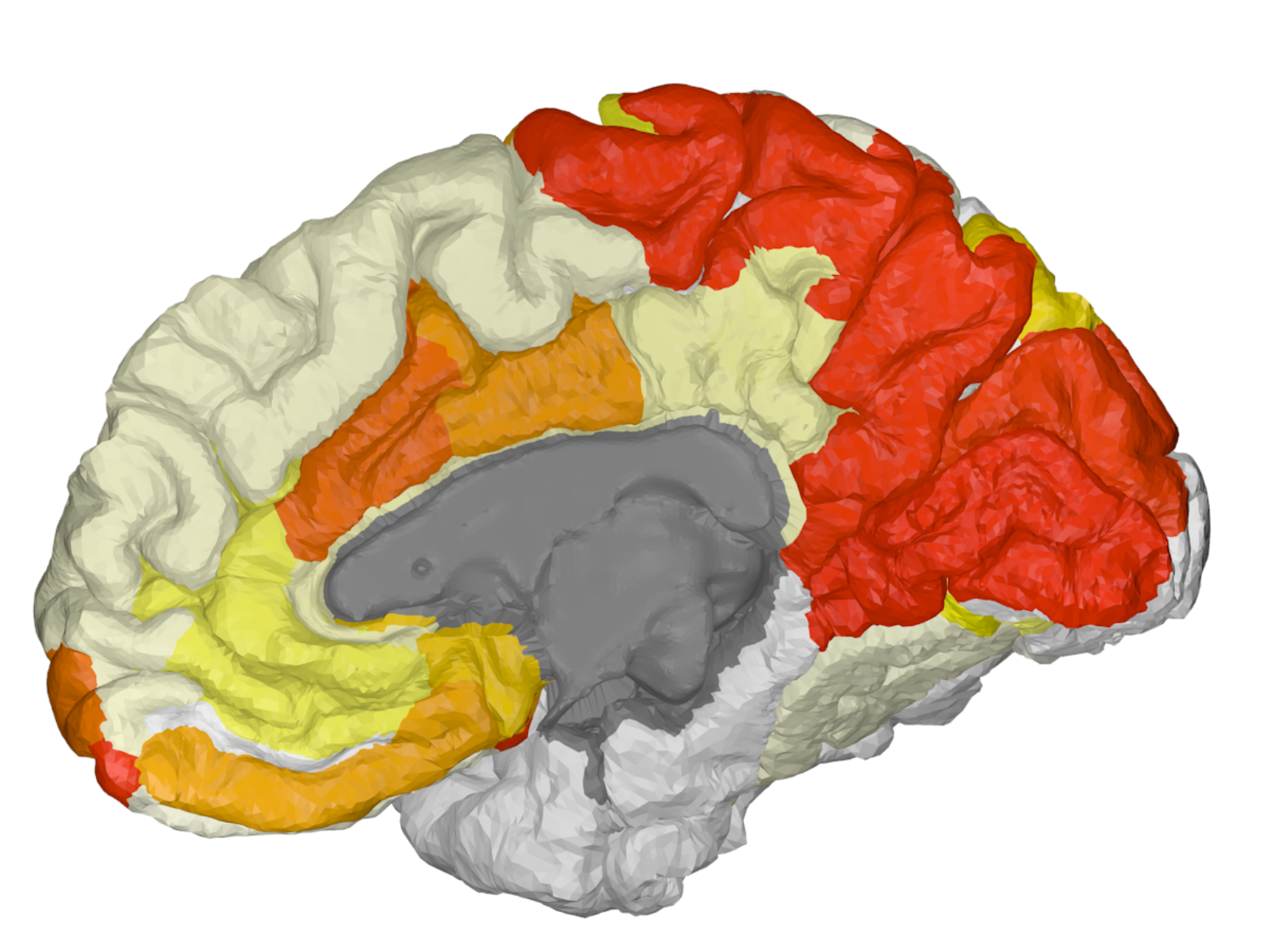} & \\
    
    %\raisebox{5\height}[0pt][0pt]{\textbf{Outer}} &
    \includegraphics[width=0.21\linewidth]{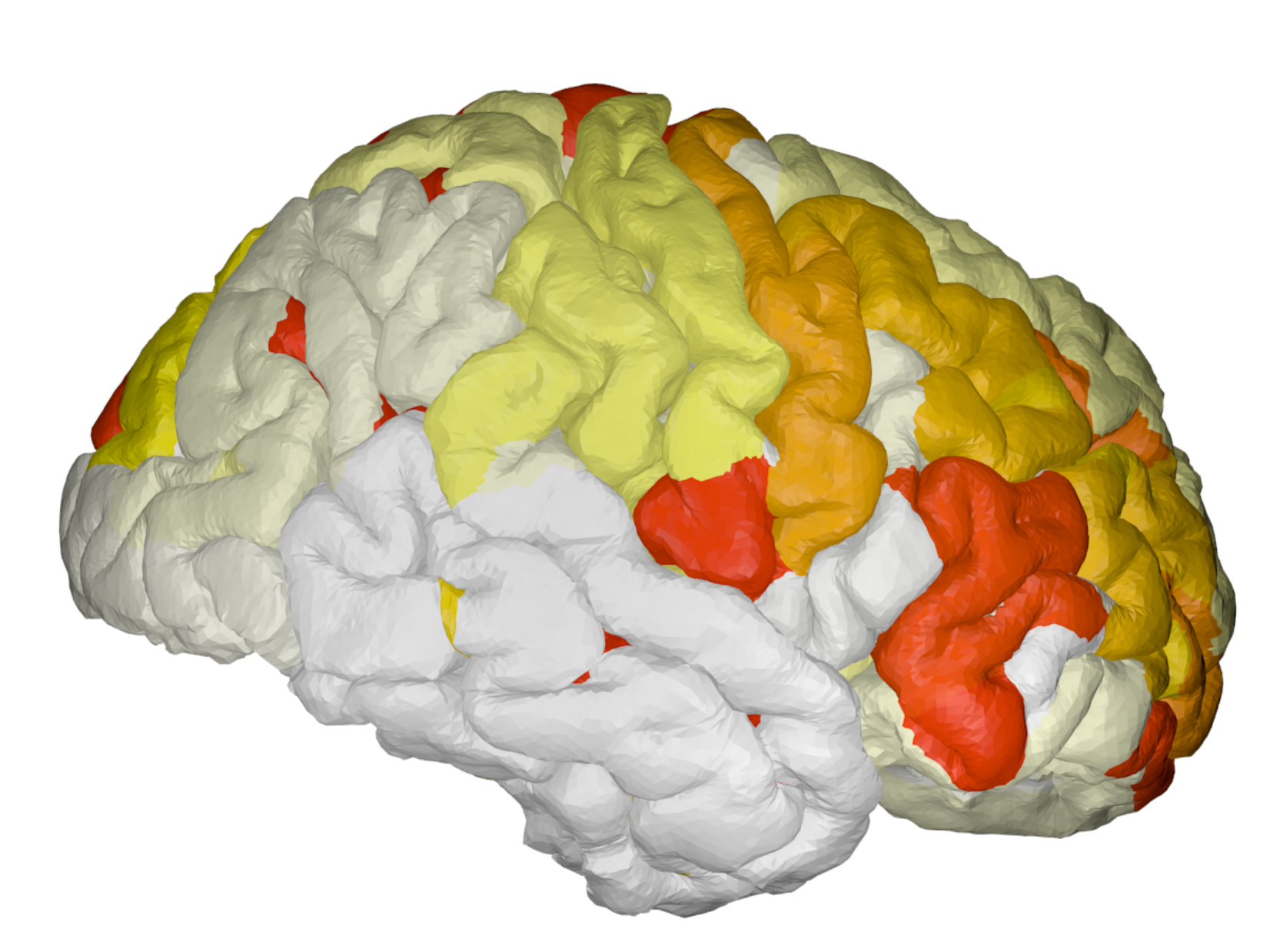}&
    \includegraphics[width=0.21\linewidth]{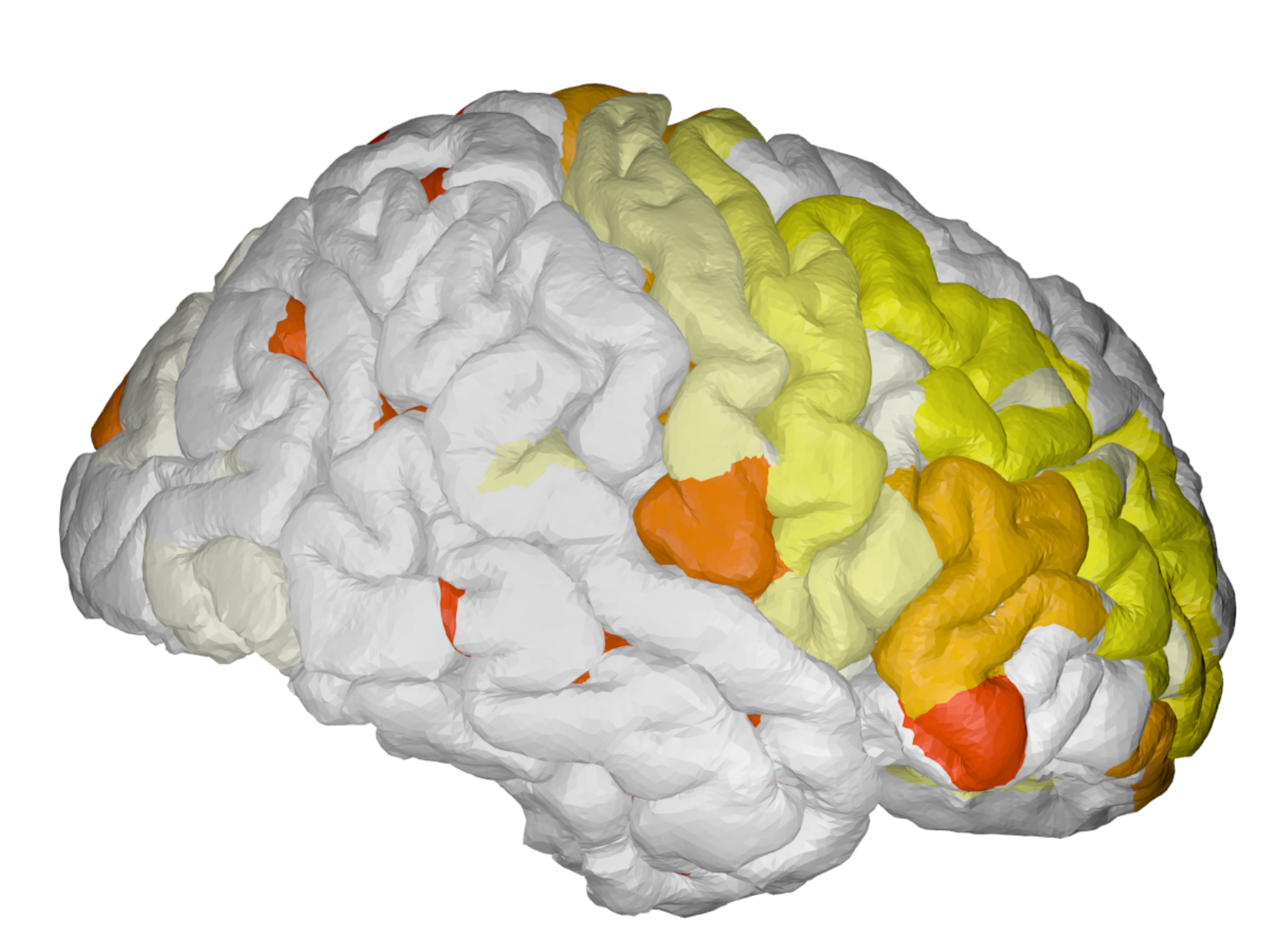} &
    \includegraphics[width=0.21\linewidth]{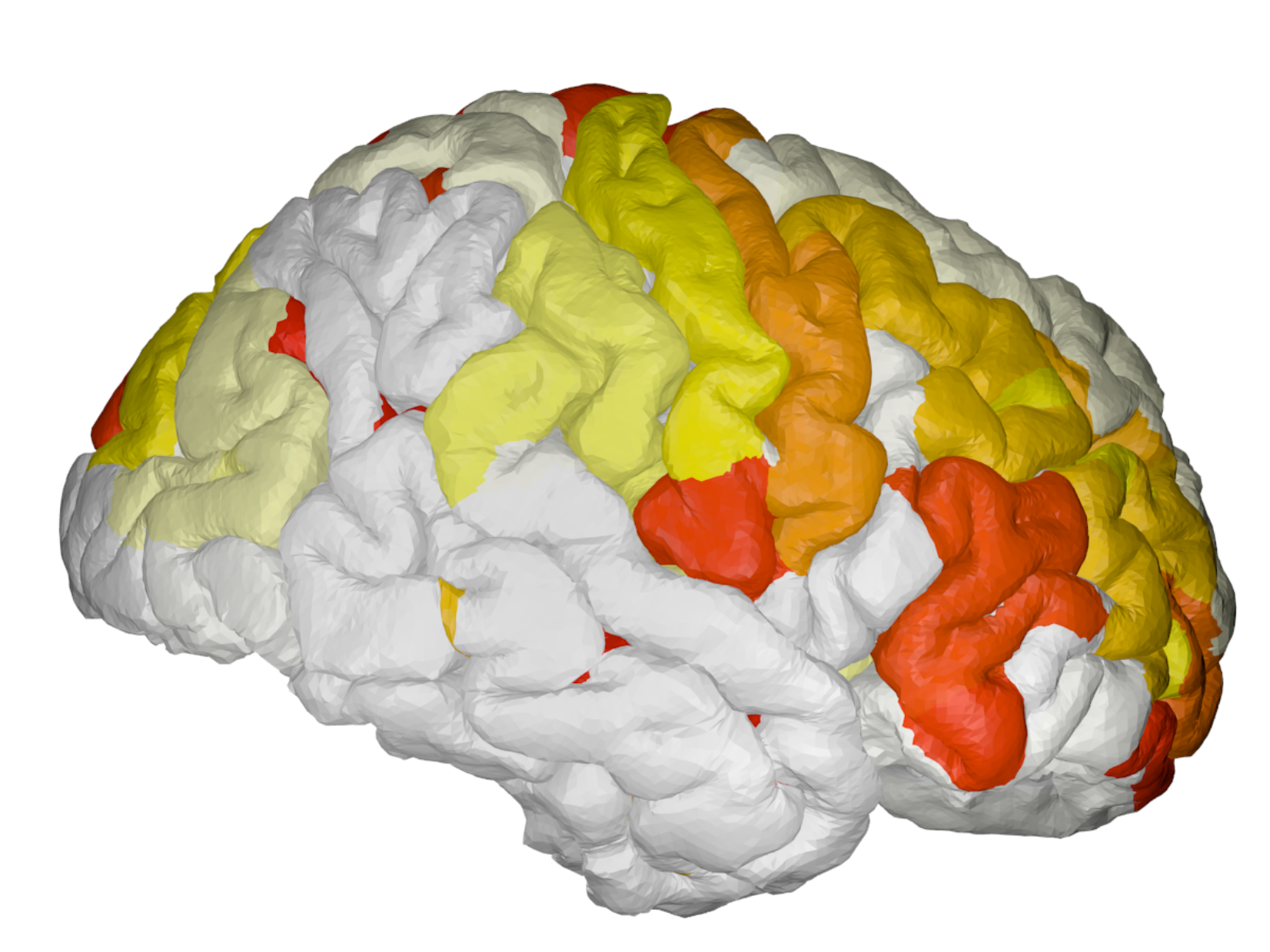} &
    \includegraphics[width=0.21 \linewidth]{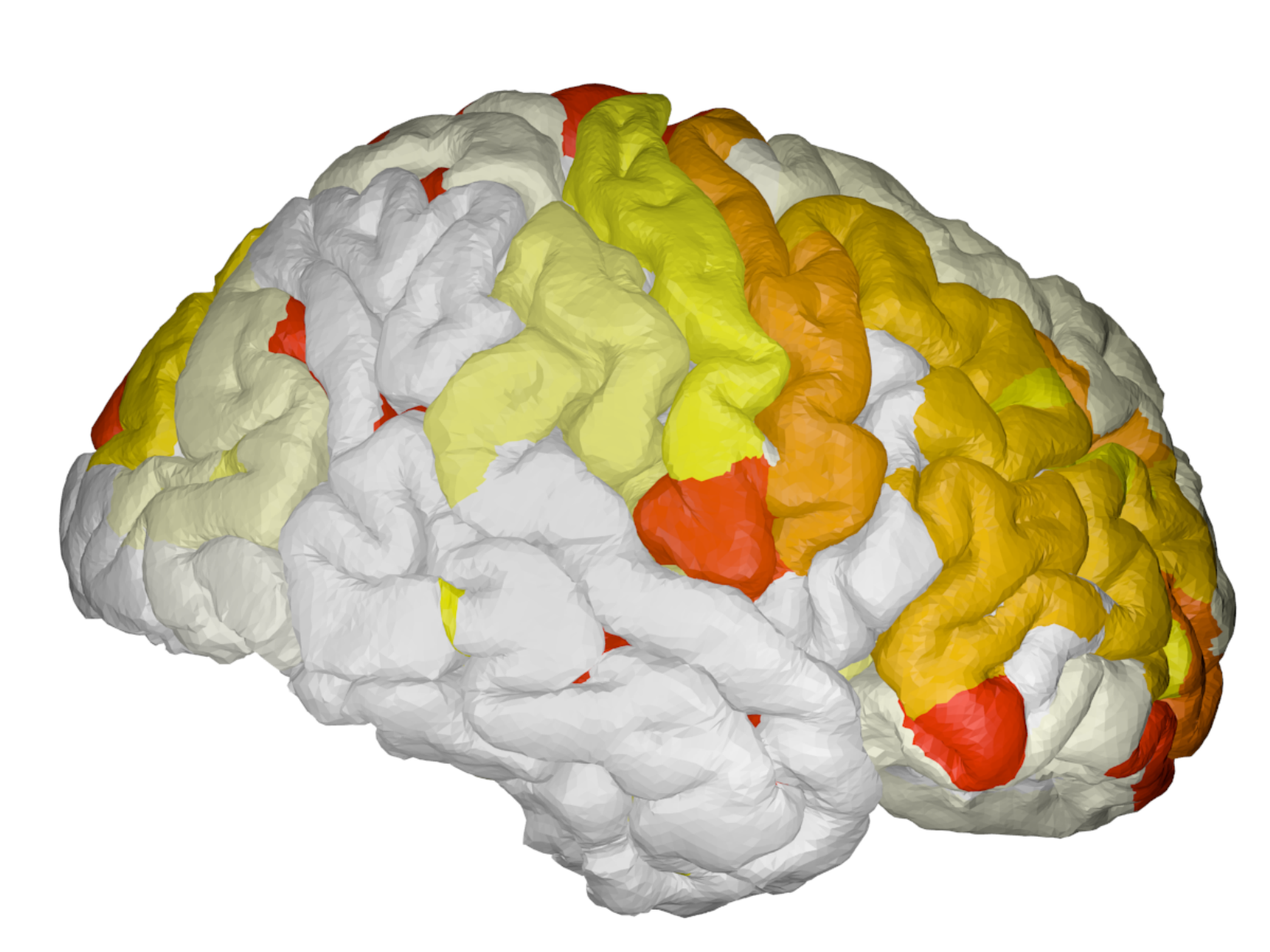} & 
    \raisebox{0\height}[0pt][0pt]{\includegraphics[width=0.09\linewidth]{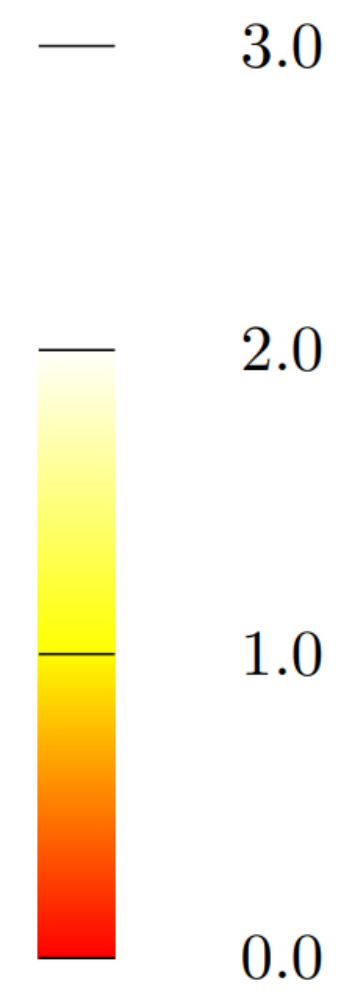}}\\
    
\end{tabular}}}
    % \vspace{-10pt}
    \caption{
    Visualization of the learned scales on the cortical regions of a brain.
    This visualization shows the scale of each ROI from the classification result using FDG feature.
    Top: Inner part of right hemisphere, Bottom: Outer part of right hemisphere.
    }
    \label{fig:Brain_Visualization}
    % \vspace{-10pt}
\end{figure*}

%\textcolor{red}{
%\noindent\textbf{Results.} 
\subsubsection{Results.} 
The performances including accuracy, precision, and recall between \ourmodel and seven baselines on the ADNI dataset are shown in Table \ref{tab:Result_ADNI}.
\ourmodel showed accuracy $\sim$86\% using cortical thickness and $\sim$90\% with FDG in classifying the 5 diagnostic stages of AD, 
with precision and recall $\sim$0.86 and 0.91, respectively. 
These numbers are approximately the same with the results from Exact 
and low standard deviations from \ourmodel demonstrate feasibility of the approximation. 
\ourmodel performed the best surpassing the second best methods by 4.61\%p and 1.80\%p in accuracy for cortical thickness and FDG experiments, respectively.

\subsection{Model Behavior Analysis}
\label{sub:analysis}

\subsubsection{Computation Time with Kernel Convolution.} 
Fig. \ref{fig:Training_Time_NC} compares averaged empirical time (in {\em ms}) spent for one epoch of training process of Exact and \ourmodel 
on node classification task (on Cora, Citesser and Pubmed) and graph classification task (on ADNI) with 10 replicates. 
The colors denote the type of methods, and as seen in Fig. \ref{fig:Training_Time_NC}, \ourmodel takes far less time than {Exact} computation. 
Notice that the computation of kernel convolution takes 
the majority of time (in black bar), and the approximations 
make this process efficient. 
For the node classification, approximation on Pubmed showed the best efficiency as its graph had the largest number of nodes (19717) compared to Cora (2708) and Citeseer (3327). 
For the graph classification, approximations were even more efficient as eigendecomposition of $\hat L$ from all subjects had to be performed for Exact. 
Comparing the computation time of a single epoch on Exact and \ourmodell, the time is saved by $\sim$93\%.

\begin{table}[!t]
%\vspace{-12pt}
    \centering
    % \vspace{-10pt}
    \renewcommand{\arraystretch}{1.0} 
    \renewcommand{\tabcolsep}{0.2cm}
    \scalebox{0.76}{
    {
    \begin{tabular}{l||c|c|c|c}
        \Xhline{3\arrayrulewidth}
        \textbf{ROI \cite{destrieux2010automatic}} & \textbf{Exact} & \textbf{\ourmodelc} & \textbf{\ourmodelh} & \textbf{\ourmodell}\\
        \hline
        (L) G\&S.paracentral & 0.034 & 0.052 & 0.049 & 0.066 \\ 
        (L) G.front.inf.Orbital & 0.036 & 0.071 & 0.060 & 0.043 \\
        (R) G.precuneus & 0.041 & 0.044 & 0.034 & 0.051 \\
        (R) S.ortibal.med.olfact & 0.047 & 0.078 & 0.054 & 0.059 \\
        (R) G.cingul.Post.ventral & 0.055 & 0.056 & 0.055 & 0.051 \\
        (R) S.oc.temp.lat& 0.055 & 0.065 & 0.045 & 0.063 \\
        (R) G.oc.temp.med.Lingual& 0.055 & 0.076 & 0.043 & 0.040 \\
        (L) Sub.put & 0.058 & 0.077 & 0.047 & 0.060 \\
        (L) S.postcentral & 0.061 & 0.069 & 0.060 & 0.018 \\
        (R) G.front.inf.Orbital& 0.063 & 0.093 & 0.069 & 0.050 \\
        %\hline
        %\textbf{RMSE} for all ROIs & - & {0.5827} & {0.2904} & {0.2899} \\
        \Xhline{3\arrayrulewidth}
    \end{tabular}}}
    \caption{
    10 ROIs with the smallest trained scales for AD classification. 
    %RMSE on scales between Exact and \ourmodel are given at the bottom. 
    (L)/(R) denote the left/right hemisphere.
    }    
\label{tab:RoIs}
    % \vspace{-10pt}
\end{table}

\subsubsection{Discussions on the Scales for Graph Classification.} 
In AD classification, we performed graph classification to distinguish the different diagnostic labels of AD. 
The trained model yields node-wise optimized scale 
where the node corresponds to specific region of interest (ROI) in the brain. 
The trained scales denote the optimal ranges of neighborhood for each ROI.
Therefore, if the trained scale is small for a specific node, 
it means that the node does not have to look far to contribute to the classification. 
On the other hand, the nodes with large scales need to aggregate information from far distances 
to constitute an effective embedding as it is not very useful on its own. 
The trained scales on the brain network classification with Exact and \ourmodel are visualized in Fig. \ref{fig:Brain_Visualization} 
conveying two important perspectives. 
First, the scales % can be interpreted as a measure that 
delineate which of the ROIs are independently behaving to classify AD-specific labels. 
Second, the trained scales with \ourmodel are quite similar to the result from Exact meaning that 
the approximation is feasibly accurate for practical uses. 

In Table \ref{tab:RoIs}, 10 ROIs with the smallest scales that appear in common across Exact and LSAPs are listed. 
{\em Inferior frontal orbital gyrus} on both %left and right 
hemispheres is captured, and 
several {\em temporal/orbital regions, precuneous}, and {\em left putamen} are shown to yield small scales. 
These ROIs are known as highly AD-specific by various literature \cite{galton2001differing, bailly2015precuneus, de2008strongly, van2000orbitofrontal}. 

\subsubsection{Effect of $K$.} 
We examined the performance of \ourmodel with respect to the number of convolution layers $K$ on Cora and ADNI experiments.   
When $K$ was varied from 1 to 4 under the same setting, 
$K$=2 showed the best performance in both experiments. 
The performance decreases when $K$=3 and 4 may be due to lack of training samples 
as the model sizes are drastically increased. 
We observed the same pattern across {Exact} and LSAP, which demonstrates
\ourmodel with approximation is able to train on the scales properly.

\begin{figure}[!t]
  \centering
  \footnotesize{
  \setlength{\tabcolsep}{1pt}
  \renewcommand{\arraystretch}{0.7}
  \scalebox{1.0}{
  \begin{tabular}{ccc}
    \raisebox{0\height}[0pt][0pt]{\textbf{$\quad$Cora}} & \raisebox{0\height}[0pt][0pt]{\textbf{$\quad$ADNI}} &\\ 
    \includegraphics[width=0.42\linewidth]{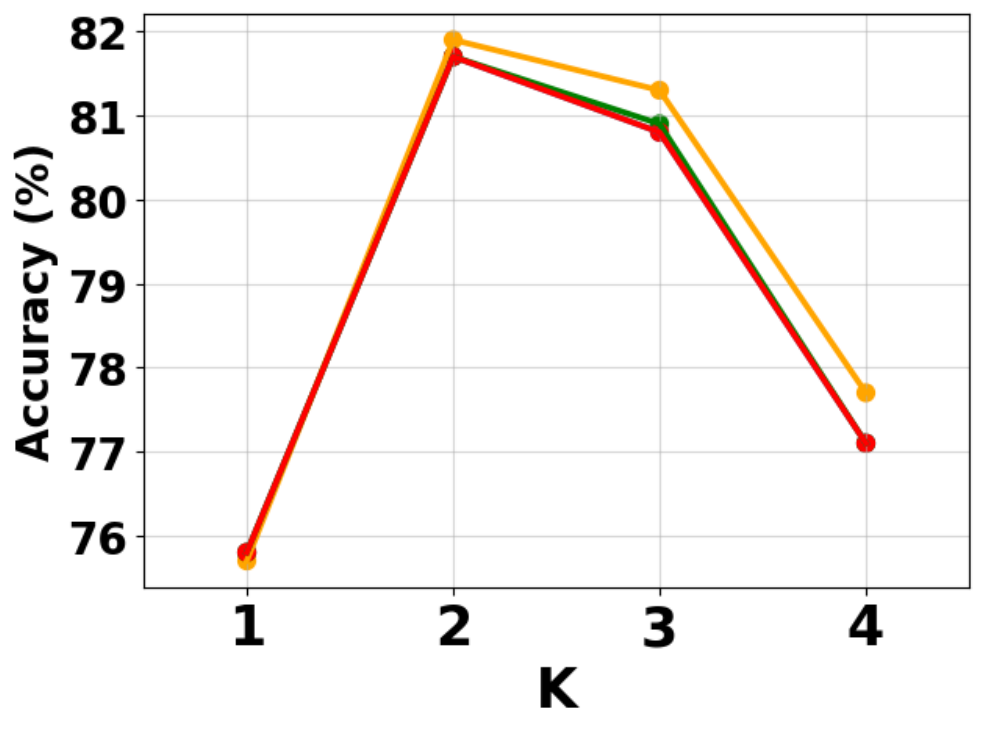} &
    \includegraphics[width=0.42\linewidth]{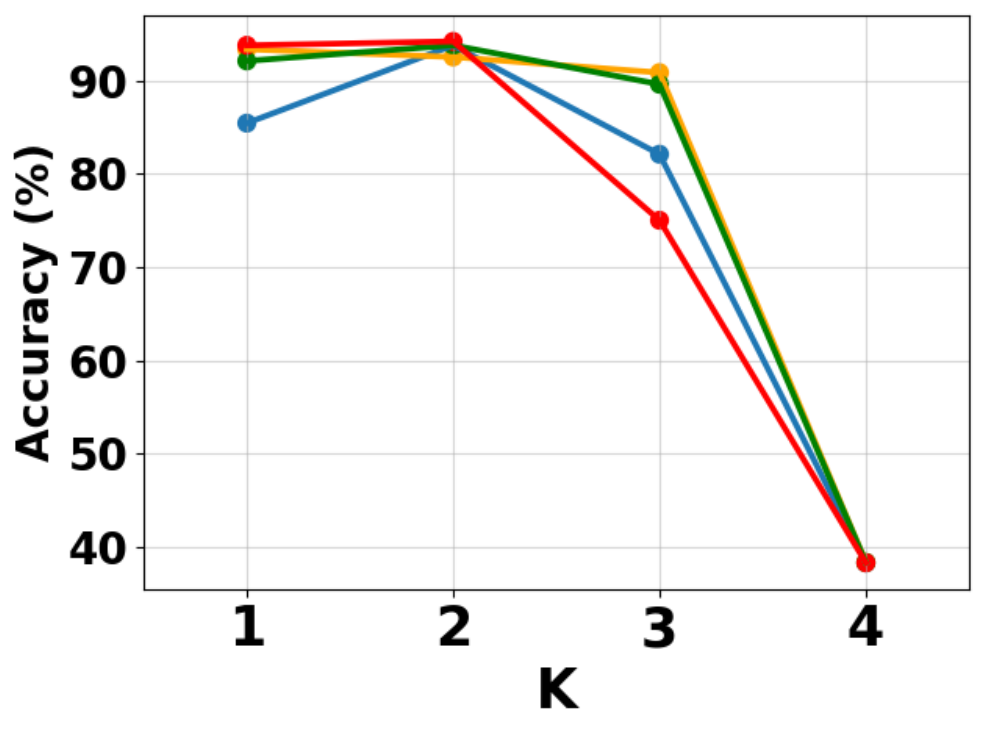} &\\ 
    \multicolumn{2}{c}{\includegraphics[width=0.8\linewidth]{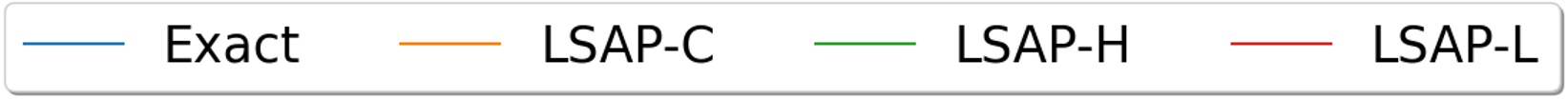}} \\
  \end{tabular}}}
    \caption{ 
    Effect of the number of layers $K$ on model performance. Left: accuracy of node classification on Cora, Right: accuracy of graph classification on ADNI. 
    }
    \label{fig:Ablation}
\end{figure}

% \vspace{-5pt}
\section{Conclusions}
\label{sec:conclusions}

In this work, we proposed  
efficient trainable methods to bypass exact computation of spectral kernel convolution 
that define adaptive ranges of neighbor for each node. 
We have derived closed-form derivatives on polynomial coefficients to train the scale 
with conventional backpropagation, and the developed framework \ourmodel 
demonstrates {\em SOTA} performance on node classification and brain network classification. 
The brain network analysis provides neuroscientifically interpretable results corroborated by 
previous AD literature.

\section{Acknowledgments}
This research was supported by NRF-2022R1A2C2092336 (50\%), IITP-2022-0-00290 (20\%), IITP-2019-0-01906 (AI Graduate Program at POSTECH, 10\%) funded by MSIT, HU22C0171 (10\%), HU22C0168 (10\%) funded by MOHW from South Korea, and NSF IIS CRII 1948510 from the U.S.

\bibliography{aaai24}

\clearpage

\section{Summary}
\label{sec:summary}
This material presents the supplementary paper from the main paper due to the space limitation.
In Section 1, the detailed calculation process of gradient for coefficients are presented.
In Section 2, the proofs of each Lemma from the main paper are provided.
Additional details for LSAP architecture that were omitted from the main manuscript are presented in Section 3. 
In Section 4, hyperparameters of LSAP and baselines for both node classification and graph classification and more results on brain network classification on ADNI data are shown.  
The implementation details are written in Section 5, and
detailed model complexity is in Section 6.
Adjustment of the trained scales at training phase is visualized in Section 7. 
Finally, we additionally discuss our framework in Section 8.

\section{1. Gradients of Polynomial Coefficients with Scale}
\label{sec:gradients}
In this section, we are going to show the detailed %calculation process 
derivation of gradient $\frac{\partial c_{s,n}}{\partial s}$ for the three polynomials, i.e., Chebyshev, Hermite and Laguerre.

\noindent\textbf{Chebyshev Polynomial.} The expansion coefficient for Chebyshev polynomial is given as
\begin{equation}\footnotesize
    \begin{aligned}
        c_{s,n}^T&=(2-\delta_{n0})(-1)^ne^{-\frac{sb}{2}}I_n\left(\frac{sb}{2}\right),
    \end{aligned}
    \label{eq:che_coeff_supple}
\end{equation}
and the derivative of $c_{s,n}^T$ with respect to $s$ is obtained as
\begin{equation}
    {\footnotesize
    \begin{aligned}
    \frac{\partial c_{s,n}^T}{\partial s} 
    &= 2(-1)^n e^{\frac{-sb}{2}}\left(\frac{-b}{2}\right)I_n\left(\frac{sb}{2}\right)
    \\& \quad +2(-1)^ne^{\frac{-sb}{2}}I_n\left(\frac{sb}{2}\right)\frac{b}{2}\\
    &=2(-1)^ne^{\frac{-sb}{2}}\left(-\frac{b}{2}I_n\left(\frac{sb}{2}\right)+\frac{b}{2}I_n^{'}\left(\frac{sb}{2}\right)\right)\\
    &=(-1)^nbe^{\frac{-sb}{2}}\left(I_n^{'}\left(\frac{sb}{2}\right)-I_n\left(\frac{sb}{2}\right)\right)\\
    &=(-1)^nbe^{\frac{-sb}{2}}\left(I_{n-1}\left(\frac{sb}{2}\right)-\frac{n}{\frac{sb}{2}}I_n\left(\frac{sb}{2}\right)-I_n\left(\frac{sb}{2}\right)\right)\\
    &=(-1)^nbe^{\frac{-sb}{2}}\left(I_{n-1}\left(\frac{sb}{2}\right)-\left(\frac{2n}{sb}+1\right)I_n\left(\frac{sb}{2}\right)\right)\\
    &\left(\because I_n^{'}(x)=I_{n-1}(x)-\frac{n}{x}I_n(x)\right).
    \end{aligned}}
\label{eq:dc_ds_che_supple}
\end{equation}

\noindent\textbf{Hermite Polynomial.} The expansion coefficient for Hermite polynomial is written as
\begin{equation}\footnotesize
    \begin{aligned}
        c_{s,n}^H&=\frac{1}{n!}\left(\frac{-s}{2}\right)^n e^{\frac{s^2}{4}},
    \end{aligned}
    \label{eq:her_coeff_supple}
\end{equation}
and the derivative of $c_{s,n}^H$ with respect to $s$ is computed as
\begin{equation}\footnotesize
    \begin{aligned}
    \frac{\partial c_{s,n}^H}{\partial s}
    &=\frac{n}{n!}\left(\frac{-s}{2}\right)^{n-1}\left(\frac{-1}{2}\right)e^{\frac{s^2}{4}}+\frac{1}{n!}\left(\frac{-s}{2}\right)^ne^{\frac{s^2}{4}}\frac{s}{2}\\
    &=\frac{1}{n!}e^{\frac{s^2}{4}}\left(\frac{-s}{2}\right)^n\left(\frac{-n}{2}\left(\frac{-s}{2}\right)^{-1}+\frac{s}{2}\right)\\
    &=\frac{1}{n!}e^{\frac{s^2}{4}}\left(\frac{-s}{2}\right)^n\left(\frac{n}{s}+\frac{s}{2}\right)\\
    &=\frac{1}{n!}e^{\frac{s^2}{4}}\left(\frac{-s}{2}\right)^n\frac{n}{s}\frac{s}{2}\frac{2}{s}+\frac{1}{n!}e^{\frac{s^2}{4}}\left(\frac{-s}{2}\right)^n\frac{s}{2}\\
    &=\frac{se^{\frac{s^2}{4}}}{2n!}\left(\frac{-s}{2}\right)^n\left(\frac{2n}{s^2}+1\right).
    \end{aligned}
\label{eq:dc_ds_her_supple}
\end{equation}

\noindent\textbf{Laguerre Polynomial.} The expansion coefficient for Laguerre polynomial is written as
\begin{equation}\footnotesize
    \begin{aligned}
        c_{s,n}^L&=\frac{s^n}{(s+1)^{n+1}},
    \end{aligned}
    \label{eq:lag_coeff_supple}
\end{equation}
and the derivative of $c_{s,n}^L$ with respect to $s$ is calculated as
\begin{equation}\footnotesize
    \begin{aligned}
    \frac{\partial c_{s,n}^L}{\partial s}
    &=\frac{ns^{n-1}\left(s+1\right)^{n+1}-s^n\left(n+1\right)\left(s+1\right)^n}{\left\{\left(s+1\right)^{n+1}\right\}^2}\\
    &=\frac{ns^{n-1}\left(s+1\right)^{n+1}-s^n\left(n+1\right)\left(s+1\right)^n}{\left(s+1\right)^{2n}\left(s+1\right)^2}\\
    &=\frac{ns^{n-1}\left(s+1\right)-s^n\left(n+1\right)}{\left(s+1\right)^{n}\left(s+1\right)^2}\\
    &=\frac{ns^n+ns^{n-1}-ns^n-s^n}{\left(s+1\right)^{n+2}}\\
    &=\frac{ns^{n-1}-s^n}{\left(s+1\right)^{n+2}}\\
    &=\frac{\left(\frac{n}{s}-1\right)s^n}{\left(s+1\right)^{n+2}}\\
    &=\frac{s^{n-1}(n-s)}{(s+1)^{n+2}}.
    \end{aligned}
\label{eq:dc_ds_lag_supple}
\end{equation}

\section{2. Proofs of Lemma}
\label{sec:proofs}
\newtheorem{lemmaa}{Lemma}
\begin{lemmaa}
Consider an orthogonal polynomial $P_n$ over interval $[a,b]$ with inner product $\int_a^b P_n(\lambda)P_k(\lambda)w(\lambda)d\lambda=\delta_{nk}$, where $w(\lambda)$ is the weight function. If $ P_n$ expands the heat kernel, the expansion coefficients $c_{\textbf{s},n}$ with respect to $\textbf s$ are differentiable and $\frac{\partial{c_{\textbf{s},n}}}{\partial{\textbf{s}}} = -\int_a^b \lambda e^{-\bf{s}\lambda}P_n(\lambda)w(\lambda)d\lambda$.
\end{lemmaa}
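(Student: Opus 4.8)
The plan is to produce an explicit integral representation of $c_{\mathbf{s},n}$ and then differentiate it under the integral sign. First I would project the defining expansion of the heat kernel, $e^{-\mathbf{s}\lambda}=\sum_{k=0}^{\infty}c_{\mathbf{s},k}P_k(\lambda)$, onto $P_n$: multiply both sides by $P_n(\lambda)w(\lambda)$, integrate over $[a,b]$, and invoke the orthonormality hypothesis $\int_a^b P_n(\lambda)P_k(\lambda)w(\lambda)\,d\lambda=\delta_{nk}$ to collapse the sum to a single term. This gives the closed form
\begin{equation}
\footnotesize
c_{\mathbf{s},n}=\int_a^b e^{-\mathbf{s}\lambda}\,P_n(\lambda)\,w(\lambda)\,d\lambda ,
\end{equation}
i.e.\ $c_{\mathbf{s},n}$ is an $\mathbf{s}$-parametrized integral with a fixed, smooth integrand.

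Next, treating $\mathbf{s}$ componentwise (the node-wise scales act independently, so it suffices to differentiate in one scalar variable), I would apply the Leibniz rule for differentiation under the integral sign. The integrand $g(\mathbf{s},\lambda)=e^{-\mathbf{s}\lambda}P_n(\lambda)w(\lambda)$ and its partial derivative $\partial_{\mathbf{s}}g=-\lambda e^{-\mathbf{s}\lambda}P_n(\lambda)w(\lambda)$ are jointly continuous; on the finite interval $[a,b]$, for $\mathbf{s}$ ranging over a compact neighborhood of any fixed $\mathbf{s}_0$, $\partial_{\mathbf{s}}g$ is bounded, hence dominated by an integrable constant, so the dominated form of the Leibniz rule applies and yields
\begin{equation}
\footnotesize
\frac{\partial c_{\mathbf{s},n}}{\partial \mathbf{s}}=\int_a^b \frac{\partial}{\partial \mathbf{s}}\bigl(e^{-\mathbf{s}\lambda}P_n(\lambda)w(\lambda)\bigr)\,d\lambda=-\int_a^b \lambda\,e^{-\mathbf{s}\lambda}\,P_n(\lambda)\,w(\lambda)\,d\lambda ,
\end{equation}
which is exactly the claimed formula and in particular establishes differentiability.

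The step I expect to be the main obstacle is the projection itself: extracting $c_{\mathbf{s},n}$ from the series requires interchanging an infinite sum with the integral against $P_n w$. This is legitimate because $\lambda\mapsto e^{-\mathbf{s}\lambda}$ is bounded and continuous on $[a,b]$ and therefore lies in $L^2([a,b],w\,d\lambda)$, so its orthonormal-polynomial expansion converges in that Hilbert space; continuity of the inner product then shows $c_{\mathbf{s},n}=\langle e^{-\mathbf{s}\cdot},P_n\rangle_w$ and justifies termwise integration. For the Hermite and Laguerre instances, where the natural domain is unbounded, the same argument goes through because the Gaussian, resp.\ exponential, weight $w$ dominates $e^{-2\mathbf{s}\lambda}$ and $\lambda^2 e^{-2\mathbf{s}\lambda}$ for $\mathbf{s}$ in a neighborhood of any positive value, keeping both $g$ and $\partial_{\mathbf{s}}g$ integrable. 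Finally I would note, as a consistency check, that specializing the resulting integral to the Chebyshev, Hermite and Laguerre weight functions reproduces the closed-form derivatives $\partial_{\mathbf{s}}c^T_{\mathbf{s},n}$, $\partial_{\mathbf{s}}c^H_{\mathbf{s},n}$, $\partial_{\mathbf{s}}c^L_{\mathbf{s},n}$ obtained by direct computation in Section~1.
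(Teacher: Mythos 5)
Your proposal is correct and follows essentially the same route as the paper's own proof: project the expansion onto $P_n$ via orthonormality to get $c_{\mathbf{s},n}=\int_a^b e^{-\mathbf{s}\lambda}P_n(\lambda)w(\lambda)\,d\lambda$, then differentiate under the integral sign. The additional care you take in justifying the sum--integral interchange and the Leibniz rule (including the unbounded-domain caveat for the Hermite and Laguerre weights) goes beyond what the paper writes down, but does not change the argument.
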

\begin{proof}
    From \cite{huang2020fast} and Eq. (4) in the main paper, the exponential weight of the heat kernel can be expanded by polynomials $P_n$ and coefficients $c_{\textbf{s},n}$ as
\begin{equation}\footnotesize
    e^{-\textbf{s}\lambda}=\sum_{n=0}^{\infty}c_{\textbf{s},n}P_n(\lambda).
    \label{eq:redefined_heat_kernel_supple}
\end{equation}
Multiplying both sides of Eq. \eqref{eq:redefined_heat_kernel_supple} by $P_k(\lambda)w(\lambda)$, and taking the integral over $\lambda$ from a to b,
% Multiplying $P_k(\lambda)w(\lambda)$ and taking the integral over $\lambda$ from a to b both side of Equation  \eqref{eq:redefined_heat_kernel}, 
%Then, $P_k(\lambda)w(\lambda)$ is multiplied, and the integral over $\lambda$ from a to b is taken to both side of Equation  \eqref{eq:redefined_heat_kernel} as 
\begin{equation}\footnotesize
    \begin{aligned}
    \int_a^b e^{-\textbf{s}\lambda}P_k(\lambda)w(\lambda)d\lambda=\int_a^b \sum_{n=0}^{\infty}c_{\textbf{s},n}P_n(\lambda)P_k(\lambda)w(\lambda)d\lambda.
    \label{eq:coeff_1_supple}
    \end{aligned}
\end{equation}
Since $P_n$ is an orthogonal polynomial and satisfies the inner product equation, $\int_a^b P_n(\lambda)P_k(\lambda)w(\lambda)d\lambda=\delta_{nk}$, so the right-hand side of Eq. \eqref{eq:coeff_1_supple} follows as 
\begin{equation}\footnotesize
    \begin{aligned}
    &\int_a^b \sum_{n=0}^{\infty}c_{\textbf{s},n}P_n(\lambda)P_k(\lambda)w(\lambda)d\lambda
    \\&=\sum_{n=0}^{\infty}c_{\textbf{s},n}\int_a^b P_n(\lambda)P_k(\lambda)w(\lambda)d\lambda
    \\&=\sum_{n=0}^{\infty}c_{\textbf{s},n}\delta_{nk}.
    \label{eq:coeff_2_supple}
    \end{aligned}
\end{equation}
$\delta_{nk}$ in Eq. \eqref{eq:coeff_2_supple} is equal to one when $n=k$, otherwise zero. So, $c_{\textbf{s},n}$ can be expressed as
\begin{equation}\footnotesize
    c_{\textbf{s},n}=\int_a^b e^{-\textbf{s}\lambda}P_n(\lambda)w(\lambda)d\lambda.
    \label{eq:redefined_coefficient_supple}
\end{equation}
From the Eq. \eqref{eq:redefined_coefficient_supple}, the $\textbf{s}$ only depends on the exponential term, $e^{-\textbf{s}\lambda}$.
Therefore, the expansion coefficients of the heat kernel are differentiable, and the derivative is
\begin{equation}\footnotesize
    \frac{\partial{c_{\textbf{s},n}}}{\partial{\textbf{s}}} = -\int_a^b \lambda e^{-\bf{s}\lambda}P_n(\lambda)w(\lambda)d\lambda.
    \label{eq:coefficient_derivative_supple}
\end{equation}
\end{proof}
\begin{lemmaa}
Let a graph convolution be operated by Eq. (8) (in the main paper), which approximates 
the convolution with $P_n$ and $c_{\textbf{s},n}$. 
If a loss $\mathcal{L}_{\textbf{err}}$ for node-wise classification 
is defined as cross-entropy between a prediction $\hat Y = \sigma(H_K)$, where $\sigma(\cdot)$ is a softmax function, and the true $Y$, then
\begin{equation}\footnotesize
    \begin{aligned}
    \frac{\partial \mathcal{L}_\textbf{err}}{\partial \textbf{s}}=&(\hat{Y}-Y) \times  \sigma^{'}_k([\sum_{n=0}^mc_{\textbf{s},n}P_n(\hat{L})]H_{k-1}W_k) W_k^{\mathsf{T}}
    \\ &\times(\sum_{n=0}^m P_n(\hat{L})H_{k-1}^{\mathsf{T}}+[\sum_{n=0}^m c_{\textbf{s},n}P_n(\hat{L})]\frac{\partial H_{k-1}}{\partial c_{\textbf{s},n}})\frac{\partial c_{\textbf{s},n}}{\partial \textbf{s}}
    %-\alpha[s<0].
    \label{eq:dl_ds_NC_full_supple}
    \end{aligned}
\end{equation}
where $\sigma^{'}_k$ is the derivative of $\sigma_k$.
\end{lemmaa}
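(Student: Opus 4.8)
The plan is to unfold $\mathcal{L}_{\textbf{err}}$ by repeated application of the chain rule, tracking the single place where $\mathbf{s}$ enters the computation graph, namely the coefficients $c_{\mathbf{s},n}$ inside each convolution layer of Eq.~\eqref{eq:Heat_Kernel_Forard_Propagation}. First I would use the standard identity for softmax composed with cross-entropy: for $\hat Y=\sigma(H_K)$ with $\sigma$ the softmax and $\mathcal{L}_{\textbf{err}}$ the cross-entropy against the labels $Y$, one has $\partial\mathcal{L}_{\textbf{err}}/\partial H_K=\hat Y-Y$. This produces the leading factor $(\hat Y-Y)$ and reduces the problem to computing $\partial H_K/\partial\mathbf{s}$, which is then obtained by back-propagating through the convolution layers.

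Next I would differentiate a single convolution layer. Writing $Z_k=[\sum_{n=0}^m c_{\mathbf{s},n}P_n(\hat L)]H_{k-1}W_k$ so that $H_k=\sigma_k(Z_k)$, the chain rule gives $\partial H_k/\partial\mathbf{s}=\sigma_k'(Z_k)\,\partial Z_k/\partial\mathbf{s}$, which is the source of the factor $\sigma_k'([\sum_n c_{\mathbf{s},n}P_n(\hat L)]H_{k-1}W_k)$. A product rule applied to $Z_k$ then splits $\partial Z_k/\partial\mathbf{s}$ into two contributions: (i) the term in which the derivative falls on the coefficients, giving $\sum_{n=0}^m P_n(\hat L)H_{k-1}^{\mathsf T}$ paired with $\partial c_{\mathbf{s},n}/\partial\mathbf{s}$, and (ii) the term in which it falls on $H_{k-1}$, giving $[\sum_{n=0}^m c_{\mathbf{s},n}P_n(\hat L)]\,\partial H_{k-1}/\partial c_{\mathbf{s},n}$, i.e.\ the recursive dependence of earlier layers on $\mathbf{s}$. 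The transposed matrices $W_k^{\mathsf T}$ and $H_{k-1}^{\mathsf T}$ arise because the gradient of the scalar loss is transported as the adjoint of each linear map (left multiplication by $W_k$ becomes right multiplication by $W_k^{\mathsf T}$ on the cotangent side). Assembling (i) and (ii), prepending the factors $(\hat Y-Y)$ and $\sigma_k'(\cdot)$ carried down from the first step, and finally invoking Lemma~\ref{lem:1} --- or equivalently the closed forms in Eqs.~\eqref{eq:Chebyshev_Polynomial_Coefficient}, \eqref{eq:Hermite_Polynomial_Coefficient}, \eqref{eq:Lagurre_Polynomial_Coefficient} --- to guarantee that $\partial c_{\mathbf{s},n}/\partial\mathbf{s}$ exists and to substitute its value, yields the stated formula.

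The main obstacle I anticipate is bookkeeping rather than anything conceptual. Since $\mathbf{s}=[s_1,\dots,s_N]$ is node-wise, each $c_{\mathbf{s},n}$ is an $N$-indexed quantity acting diagonally on $P_n(\hat L)$, so ``$\partial c_{\mathbf{s},n}/\partial\mathbf{s}$'' is really a collection of per-node scalar derivatives that must be attached to the matrix factors on the correct side; similarly $\partial H_{k-1}/\partial c_{\mathbf{s},n}$ is itself defined by the same recursion and must be unfolded down to $H_0=X$, which is independent of $\mathbf{s}$ and terminates the recursion. The delicate step is therefore fixing a consistent matrix-calculus convention and then checking that the order of the matrix products and the placement of every transpose matches the forward pass of Eq.~\eqref{eq:Heat_Kernel_Forard_Propagation}; once that convention is pinned down, each individual differentiation is a routine chain or product rule.
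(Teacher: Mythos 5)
Your proposal follows essentially the same route as the paper's proof: the softmax--cross-entropy identity $\partial\mathcal{L}_{\textbf{err}}/\partial H_K=\hat Y-Y$, the chain rule through $\sigma_k$, the product rule on $[\sum_{n}c_{\textbf{s},n}P_n(\hat L)]H_{k-1}W_k$ splitting the coefficient term from the recursive $\partial H_{k-1}/\partial c_{\textbf{s},n}$ term, and finally Lemma~1 to supply $\partial c_{\textbf{s},n}/\partial\textbf{s}$. Your added remarks on the recursion terminating at $H_0=X$ and on the per-node bookkeeping of $\textbf{s}$ are correct refinements of the same argument, not a different one.
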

\begin{proof}
Let $N$ be the number of nodes, and $J$ be the number of labels. The loss $\mathcal{L}_{\textbf{err}}$ across all labeled nodes $V^L$ is defined as
\begin{equation}\footnotesize
    \mathcal{L}_{\textbf{err}} = -\frac{1}{N}\sum_{i\in V^L}\sum_{j=1}^J Y_{ij} \text{ln} \hat{Y}_{ij}.
    \label{eq:Loss_Function_CE_NC_supple}
\end{equation}
The derivative of $\mathcal{L}$ can be calculated with respect to scale $\textbf{s}$ of $c_{\textbf{s},n}$ via chain rule as,
\begin{equation}\footnotesize
    \frac{\partial \mathcal{L}_{\textbf{err}}}{\partial \textbf{s}}=\frac{\partial\mathcal{L}_{\textbf{err}}}{\partial H_K}\frac{\partial H_K}{\partial c_{\textbf{s},n}}\frac{\partial c_{\textbf{s},n}}{\partial \textbf{s}}.
    \label{eq:dl_ds_NC_supple}
\end{equation}
Since we use softmax to produce pseudo-probability,
the 
$\frac{\partial \mathcal{L}_{\textbf{err}}}{\partial H_K}$ is derived as
\begin{equation}\footnotesize
    \frac{\partial\mathcal{L}_{\textbf{err}}}{\partial H_K}=\hat{Y}-Y.
    \label{eq:dl_dh_supple}
\end{equation}
From Eq. (8) in the main paper, the 
$\frac{\partial H_K}{\partial c_{\textbf{s},n}}$ becomes
\begin{equation}\footnotesize
    \begin{aligned}
    \frac{\partial H_K}{\partial c_{\textbf{s},n}}&=\sigma^{'}_k([\sum_{n=0}^m c_{\textbf{s},n}P_n(\hat{L})]H_{k-1}W_k)\times W_k^{\mathsf{T}}
    \\&\times(\sum_{n=0}^m P_n(\hat{L})H_{k-1}^{\mathsf{T}}+[\sum_{n=0}^m c_{\textbf{s},n}P_n(\hat{L})]\frac{\partial H_{k-1}}{\partial c_{\textbf{s},n}})
    \label{eq:dh_dc_supple}
    \end{aligned}
\end{equation}
where the derivative of $H_k$ is recursively defined with $H_{k-1}$ along the hidden layer.
Using chain rule with Eq. \eqref{eq:dl_dh_supple}, \eqref{eq:dh_dc_supple} and $\frac{\partial c_{\textbf{s},n}}{\partial \textbf{s}}$ that depends on the type of polynomial, the gradient on $\mathcal{L}_{\textbf{err}}$ for node classification is written as 
\begin{equation}\footnotesize
    \begin{aligned}
    \frac{\partial \mathcal{L}_\textbf{err}}{\partial \textbf s}=&(\hat{Y}-Y) \times  \sigma^{'}_k([\sum_{n=0}^mc_{\textbf{s},n}P_n(\hat{L})]H_{k-1}W_k) W_k^{\mathsf{T}}
    \\ &\times(\sum_{n=0}^m P_n(\hat{L})H_{k-1}^{\mathsf{T}}+[\sum_{n=0}^m c_{\textbf{s},n}P_n(\hat{L})]\frac{\partial H_{k-1}}{\partial c_{\textbf{s},n}}) \frac{\partial c_{\textbf{s},n}}{\partial \textbf{s}}.
    %-\alpha[s<0].
    \end{aligned}
\end{equation}
\end{proof}
\begin{lemmaa}
Let $H_k$ from Eq. (8) (in the main paper) be a graph convolution operation, which is operated by the heat kernel with polynomial $P_n$ and coefficients $c_{\textbf{s},n}$. 
If a loss $\mathcal{L}_{\textbf{err}}$ for classifying graph-wise label 
is defined as cross-entropy between a prediction $\hat Y = \sigma(H_R)$, where $\sigma(\cdot)$ is a softmax function, and the true $Y$, then
\begin{equation}\footnotesize
    \begin{aligned}
    \frac{\partial \mathcal{L}_\textbf{err}}{\partial \textbf{s}}=&(\hat{Y}-Y) \times \frac{\partial H_R}{\partial H_K} \times \sigma^{'}_k([\sum_{n=0}^m c_{\textbf{s},n}P_n(\hat{L})]H_{k-1}W_k) W_k^{\mathsf{T}} \\&\times(\sum_{n=0}^m P_n(\hat{L})H_{k-1}^{\mathsf{T}}+[\sum_{n=0}^m c_{\textbf{s},n}P_n(\hat{L})]\frac{\partial H_{k-1}}{\partial c_{\textbf{s},n}}) \frac{\partial c_{\textbf{s},n}}{\partial \textbf{s}}
    %- \alpha[s<0]
    \label{eq:dl_ds_GC_supple}
    \end{aligned}
\end{equation}
where $\sigma^{'}_k$ is the derivative of $\sigma_k$.
\end{lemmaa}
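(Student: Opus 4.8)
The plan is to mirror the proof of Lemma~\ref{lem:2} but insert the readout layer $f_R$ between the convolution output $H_K$ and the softmax. First I would write out the cross-entropy loss for the graph-wise task, $\mathcal{L}_{\textbf{err}} = -\frac{1}{T}\sum_{t=1}^{T}\sum_{j=1}^{J} Y_{tj}\ln\hat{Y}_{tj}$, where $\hat{Y} = \sigma(H_R)$ and $H_R = f_R(H_K;W_R)$ from Eq.~\eqref{eq:Readout_Layer}. Then I would expand $\frac{\partial \mathcal{L}_{\textbf{err}}}{\partial \textbf{s}}$ by the chain rule through the composition $\textbf{s}\mapsto c_{\textbf{s},n}\mapsto H_K \mapsto H_R \mapsto \hat{Y}\mapsto \mathcal{L}_{\textbf{err}}$, i.e.
\begin{equation}
\footnotesize
\frac{\partial \mathcal{L}_{\textbf{err}}}{\partial \textbf{s}} = \frac{\partial \mathcal{L}_{\textbf{err}}}{\partial H_R}\frac{\partial H_R}{\partial H_K}\frac{\partial H_K}{\partial c_{\textbf{s},n}}\frac{\partial c_{\textbf{s},n}}{\partial \textbf{s}}.
\end{equation}

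Next I would evaluate each factor. The softmax–cross-entropy combination again collapses to $\frac{\partial \mathcal{L}_{\textbf{err}}}{\partial H_R} = \hat{Y}-Y$ exactly as in Eq.~\eqref{eq:dl_dh_supple}, so that step is immediate. The factor $\frac{\partial H_R}{\partial H_K}$ is left symbolic since $f_R$ is a generic MLP-with-ReLU readout; it is the only new term relative to Lemma~\ref{lem:2}. The factor $\frac{\partial H_K}{\partial c_{\textbf{s},n}}$ is identical to the one already computed in Eq.~\eqref{eq:dh_dc_supple}, namely $\sigma'_k([\sum_{n=0}^m c_{\textbf{s},n}P_n(\hat{L})]H_{k-1}W_k)W_k^{\mathsf{T}}(\sum_{n=0}^m P_n(\hat{L})H_{k-1}^{\mathsf{T}} + [\sum_{n=0}^m c_{\textbf{s},n}P_n(\hat{L})]\frac{\partial H_{k-1}}{\partial c_{\textbf{s},n}})$, which unwinds recursively down the $K$ convolution layers using the base case $H_0 = X$ (so $\frac{\partial H_0}{\partial c_{\textbf{s},n}} = 0$). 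Finally, $\frac{\partial c_{\textbf{s},n}}{\partial \textbf{s}}$ is supplied by Lemma~\ref{lem:1} (and the explicit closed forms in Eq.~\eqref{eq:dc_ds_che_supple}, \eqref{eq:dc_ds_her_supple}, \eqref{eq:dc_ds_lag_supple} of the supplement for the three polynomial families). Substituting the four pieces back into the chain-rule product yields exactly Eq.~\eqref{eq:dl_ds_GC}.

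Honestly, there is no real obstacle here — the argument is a one-line modification of Lemma~\ref{lem:2}, the extra factor $\frac{\partial H_R}{\partial H_K}$ just riding along by the chain rule. The only point that deserves a sentence of care is making the recursive definition of $\frac{\partial H_{k-1}}{\partial c_{\textbf{s},n}}$ precise and noting that it terminates at the input layer, and being explicit that $\frac{\partial H_R}{\partial H_K}$ is well-defined (ReLU is differentiable almost everywhere, with the usual subgradient convention at $0$, which is standard in backpropagation). I would therefore present the proof compactly: state the loss, write the chain-rule expansion, cite Eq.~\eqref{eq:dl_dh_supple} and Eq.~\eqref{eq:dh_dc_supple} for two of the factors, cite Lemma~\ref{lem:1} for the last, and assemble.
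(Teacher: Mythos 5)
Your proposal is correct and follows essentially the same route as the paper's own proof: the chain-rule factorization $\frac{\partial \mathcal{L}_{\textbf{err}}}{\partial \textbf{s}}=\frac{\partial\mathcal{L}_{\textbf{err}}}{\partial H_R}\frac{\partial H_R}{\partial H_K}\frac{\partial H_K}{\partial c_{\textbf{s},n}}\frac{\partial c_{\textbf{s},n}}{\partial \textbf{s}}$, with the softmax--cross-entropy collapse to $\hat{Y}-Y$, the readout Jacobian left symbolic, and the remaining two factors imported verbatim from the proofs of Lemmas 2 and 1. Your added remarks on the recursion terminating at the input layer and on the almost-everywhere differentiability of ReLU are points the paper leaves implicit, but they do not change the argument.
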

\begin{proof}
Let $T$ be the sample size, and $J$ be the set of class labels. As in the node classification, the loss $\mathcal{L}_{\textbf{err}}$ is defined with cross-entropy as
\begin{equation}\footnotesize
    \mathcal{L}_{\textbf{err}} = -\frac{1}{T}\sum_{t=1}^T\sum_{j\in J} Y_{tj} \text{ln} \hat{Y}_{tj}
    \label{eq:Loss_Function_CE_GC_supple}
\end{equation}
Also, the derivative of $\mathcal{L}$ can be calculated in terms of scale $s$ of $c_{s,n}$ via chain rule. 
As we use an additional MLP for graph classification, i.e., Eq. (15) in the main paper, the gradient of $\mathcal{L}_{\textbf{err}}$ along $H_R$ must be computed and plugged into Eq. \eqref{eq:dl_ds_NC_full_supple} as 
\begin{equation}\footnotesize
    \frac{\partial \mathcal{L}_{\textbf{err}}}{\partial \textbf{s}}=\frac{\partial\mathcal{L}_{\textbf{err}}}{\partial H_R}\frac{\partial H_R}{\partial H_K}\frac{\partial H_K}{\partial c_{\textbf{s},n}}\frac{\partial c_{\textbf{s},n}}{\partial \textbf{s}}.
    \label{eq:dl_ds_GC_proof_supple}
\end{equation}\\
Since the activation function of output layer is a softmax, the 
%derivative of $\mathcal{L}_{\textbf{err}}$ with respect to $H_R$ 
%$\frac{\partial \mathcal{L}_{\textbf{err}}}{\partial H_R} = \hat Y - Y$ as in Equation  \eqref{eq:dl_dh},
$\frac{\partial \mathcal{L}_{\textbf{err}}}{\partial H_R}$ are the same as Eq. ~\eqref{eq:dl_dh_supple},
\begin{equation}\footnotesize
    \frac{\partial\mathcal{L}_{\textbf{err}}}{\partial H_R}=\hat{Y}-Y.
    \label{eq:dl_dH_RC_supple}
\end{equation}
The %derivative of $H_R$ with respect to $H_k$ 
$\frac{\partial H_R}{\partial H_k}$ varies depending on the choice of $H_R$, and the $\frac{\partial H_k}{\partial c_{\textbf{s},n}}$ is the same as Eq. \eqref{eq:dh_dc_supple},
%derivative of $H_K$ with respect to $c_{s,n}$ 
%$\frac{\partial H_k}{\partial c_{s,n}}$ is also the same as 
\begin{equation}\footnotesize
    \begin{aligned}
    \frac{\partial H_K}{\partial c_{\textbf{s},n}}&=\sigma^{'}_k([\sum_{n=0}^m c_{\textbf{s},n}P_n(\hat{L})]H_{k-1}W_k)\times W_k^{\mathsf{T}}
    \\&\times(\sum_{n=0}^m P_n(\hat{L})H_{k-1}^{\mathsf{T}}+[\sum_{n=0}^m c_{\textbf{s},n}P_n(\hat{L})]\frac{\partial H_{k-1}}{\partial c_{\textbf{s},n}}).
    \label{eq:dh_dc_GC_supple}
    \end{aligned}
\end{equation}
With these components and $\frac{\partial c_{\textbf{s},n}}{\partial \textbf{s}}$ that depends on the type of polynomial, the gradient on $\mathcal{L}_\textbf{err}$ is computed as
\begin{equation}\footnotesize
    \begin{aligned}
    \frac{\partial \mathcal{L}_\textbf{err}}{\partial \textbf{s}}&=(\hat{Y}-Y) \times \frac{\partial H_R}{\partial H_K} \times \sigma^{'}_k([\sum_{n=0}^m c_{\textbf{s},n}P_n(\hat{L})]H_{k-1}W_k) W_k^{\mathsf{T}} \\&\times(\sum_{n=0}^m P_n(\hat{L})H_{k-1}^{\mathsf{T}}+[\sum_{n=0}^m c_{\textbf{s},n}P_n(\hat{L})]\frac{\partial H_{k-1}}{\partial c_{\textbf{s},n}})\frac{\partial c_{\textbf{s},n}}{\partial \textbf{s}}.
    %- \alpha[s<0]
    \label{eq:dl_ds_GC_full_supple}
    \end{aligned}
\end{equation}
\end{proof}
The $\frac{\partial H_R}{\partial H_K}$ for the MLP used within our framework is given in the following Section. 

\section{3. LSAP Architecture}
\label{sec:architecture}

\textbf{Readout for Graph Classification.} 
The $f_R(\cdot)$ with weights $W_R$ takes $H_K$ from the convolution layers as an input and returns $H_R$. In our model architecture, $H_R$ is chosen as 2-layer Multi-layer perceptron (MLP) as 
\begin{equation}\footnotesize
    \begin{aligned}
    H_R=\sigma_{R_2}(\sigma_{R_1}(H_KW_{R_1})W_{R_2}) 
    \end{aligned}
\label{eq:Readout_supple}
\end{equation}
where $R_1$ and $R_2$ correspond to first and second layer for MLP structure, respectively. $W_{R_1}$ and $W_{R_2}$ denote weights and Rectified Linear Unit (ReLU) was used for the non-linear activation functions $\sigma_{R_1}$ and $\sigma_{R_2}$ for each layer. % are non-linear activation function (ReLU) for each layer.
To make our model end-to-end trainable, the derivative of $H_R$ with respect to $H_K$ is computed as
\begin{equation}\footnotesize
    \begin{aligned}
    \frac{\partial H_R}{\partial H_K}
    =\sigma_{R_2}^{'}(\sigma_{R_1}(H_KW_{R_1})W_{R_2})W_{R_2}\sigma_{R_1}^{'}(H_KW_{R_1})W_{R_1}. 
    \end{aligned}
\label{eq:Readout_Back_supple}
\end{equation}

\begin{table}[!t]
\caption{\small
Node classification accuracy (\%) on Amazon Computers, Amazon Photo, and Coauthor CS.
The number is the best accuracy from 10 replicated experiments like \cite{luo2022inferring}, and the best results are in {\bf bold} except {\em Exact} method.
}
\vspace{-10pt}
\centering
    \renewcommand{\arraystretch}{1.0} 
    \renewcommand{\tabcolsep}{0.3cm}
  {\small
  \scalebox{0.95}{
  \begin{tabular}{l|ccc}
    \Xhline{3\arrayrulewidth}
    \multirow{2}{*}{\textbf{Model}} & 
    \textbf{Amazon} & \textbf{Amazon} & \textbf{Coauthor} \\ 
    & \textbf{Computer} & \textbf{Photo} & \textbf{CS} \\
    \hline
    MLP (3-layers) & 84.63 & 91.96 & 95.63 \\
    GCN & 90.49 & 93.91 & 93.32 \\
    3ference & 90.74 & 95.05 & 95.99 \\
    GAT  & 92.51 & 95.16 & 93.70 \\
    GDC & 92.00 & 94.57 & 93.45 \\
    GraphHeat & 92.62 & 94.77 &93.29 \\
    \hline 
    \textbf{LSAP-C} & 95.53 & 97.25 & {\bf 96.37} \\
    \textbf{LSAP-H} & {\bf 96.01} & {\bf 97.38} & {\bf 96.37}\\
    \textbf{LSAP-L} & 93.67 & 96.66 & 95.91 \\
    Exact & 95.34 & 98.17 & 97.14 \\
    \Xhline{3\arrayrulewidth}
  \end{tabular}}
\label{tab:Result_AAC_supple}
}
\vspace{-10pt}
\end{table}

\begin{table*}[t!]
    \caption{Hyperparameters for classification tasks.}
    \scalebox{0.6}{
    \centering
    \renewcommand{\arraystretch}{1.2}
    \renewcommand{\tabcolsep}{0.4cm}
        \begin{tabular}{c|c|c||c|c|c|c|c|c}
            \Xhline{3\arrayrulewidth}
            \textbf{Task} & \textbf{Dataset} & \textbf{Model} & \textbf{Hidden Units} & \textbf{Learning Rate} & \textbf{Dropout Rate} & \textbf{Regularization} ($\alpha$) & \textbf{Scale's Learning Rate} ($\beta_s$) & \textbf{LSAP-C} (b)\\
            \hline
            \multirow{15}{*}{\shortstack{\textbf{Node}\\ \textbf{Classification}}}&\multirow{1}{*}{\textbf{Cora}} 
            &\textbf{LSAP} $\&$ Exact & 64 & 0.01 & 0.5 & 0.1 & 1 & 1.48\\ \cline{2-9}
            &\multirow{1}{*}{\textbf{Citeseer}} 
            &\textbf{LSAP} $\&$ Exact & 32 & 0.01 & 0.5 & 1 & 10 & 1.50\\ \cline{2-9}
            &\multirow{1}{*}{\textbf{Pubmed}} 
            &\textbf{LSAP} $\&$ Exact & 64 & 0.1 & 0.5 & 1 & 10 & 1.65\\ \cline{2-9}
            &\multirow{4}{*}{\shortstack{\textbf{Amazon}\\ \textbf{Computer}}} 
            &GAT & 32 & 0.01 & 0.5 & - & - & -\\ \cline{3-9}
            &&GDC & 32 & 0.0001 & 0.5 & - & - & -\\ \cline{3-9}
            &&GraphHeat & 32 & 0.01 & 0.5 & - & - & -\\ \cline{3-9}
            &&\textbf{LSAP} $\&$ Exact & 32 & 0.001 & 0.5 & 1 & 1 & 1.60\\ \cline{2-9}
            &\multirow{4}{*}{\shortstack{\textbf{Amazon}\\ \textbf{photo}}} 
            &GAT& 32 & 0.01 & 0.5 & - & - & -\\ \cline{3-9}
            &&GDC & 16 & 0.0001 & 0.5 & - & - & -\\ \cline{3-9}
            &&GraphHeat & 32 & 0.01 & 0.5 & - & - & -\\ \cline{3-9}
            &&\textbf{LSAP} $\&$ Exact & 32 & 0.01 & 0.5 & 1 & 10 & 1.59\\ \cline{2-9}
            &\multirow{4}{*}{\shortstack{\textbf{Coauthor}\\ \textbf{CS}}} 
            &GAT & 32 & 0.01 & 0.5 & - & - & -\\ \cline{3-9}
            &&GDC & 16 & 0.0001 & 0.5 & - & - & -\\ \cline{3-9}
            &&GraphHeat & 32 & 0.01 & 0.5 & - &- & -\\ \cline{3-9}
            &&\textbf{LSAP} $\&$ Exact & 32 & 0.01 & 0.5 & 1 & 1 & 1.41\\
            \hline
            \multirow{16}{*}{\shortstack{\textbf{Graph}\\ \textbf{Classification}}}&\multirow{8}{*}{\shortstack{\textbf{Cortical}\\ \textbf{Thickness}}} 
            &SVM (Linear) & - & - & - & - & - & -\\ \cline{3-9}
            &&MLP (2-layers) & 16 & 0.01 & 0.5 & - & - & -\\ \cline{3-9}
            &&GCN & 16 & 0.01 & 0.5 & - & - & -\\ \cline{3-9}
            &&GAT & 16 & 0.01 & 0.1 & - & - & -\\ \cline{3-9}
            &&GDC & 16 & 0.01 & 0.5 & - & - & -\\ \cline{3-9}
            &&GraphHeat & 32 & 0.01 & 0.5 & - & - & -\\ \cline{3-9}
            &&ADC & 16 & 0.01 & 0.5 & - & - & -\\ \cline{3-9}
            &&\textbf{LSAP} $\&$ Exact & 16 & 0.01 & 0.5 & 1 & 1 & 1.20\\\cline{2-9}
            &\multirow{8}{*}{\textbf{FDG}} 
            &SVM (Linear) & -& -& -& -& -& -\\\cline{3-9}
            &&MLP (2-layers) & 16& 0.01& 0.5& -& -& -\\\cline{3-9}
            &&GCN & 16& 0.01& 0.5& -& -& -\\ \cline{3-9}
            &&GAT & 16& 0.01& 0.1& -& -& -\\ \cline{3-9}
            &&GDC & 16& 0.01& 0.5& -& -& - \\ \cline{3-9}
            &&GraphHeat & 16& 0.01& 0.5& -&- &- \\ \cline{3-9}
            &&ADC & 16 & 0.01 & 0.5 & - & - & -\\ \cline{3-9}
            &&\textbf{LSAP} $\&$ Exact & 16 & 0.01 & 0.5 & 1 & 1 & 1.20\\
            \Xhline{3\arrayrulewidth}
        \end{tabular}
        }
\label{tab:hyperparameter_supple}
\end{table*}

\begin{figure*}[t!]
  \centering
  \renewcommand{\arraystretch}{1.0}
  \renewcommand{\tabcolsep}{0.05cm}
  \small{
  \scalebox{0.9}{
  \begin{tabular}{ccccccccl}
    & 
    \raisebox{1\height}[0pt][0pt]{\textbf{Top}} & \raisebox{1\height}[0pt][0pt]{\textbf{Bottom}} &
    \raisebox{1\height}[0pt][0pt]{\textbf{Outer-Left}} & \raisebox{1\height}[0pt][0pt]{\textbf{Outer-Right}} & \raisebox{1\height}[0pt][0pt]{\textbf{Inner-Left}} & \raisebox{1\height}[0pt][0pt]{\textbf{Inner-Right}} & \raisebox{1\height}[0pt][0pt]{\textbf{Sub-Cortical}} & \\ \vspace{-0.2cm}
    \raisebox{4\height}[0pt][0pt]{\textbf{Exact}} & 
    \includegraphics[width=0.13\linewidth]{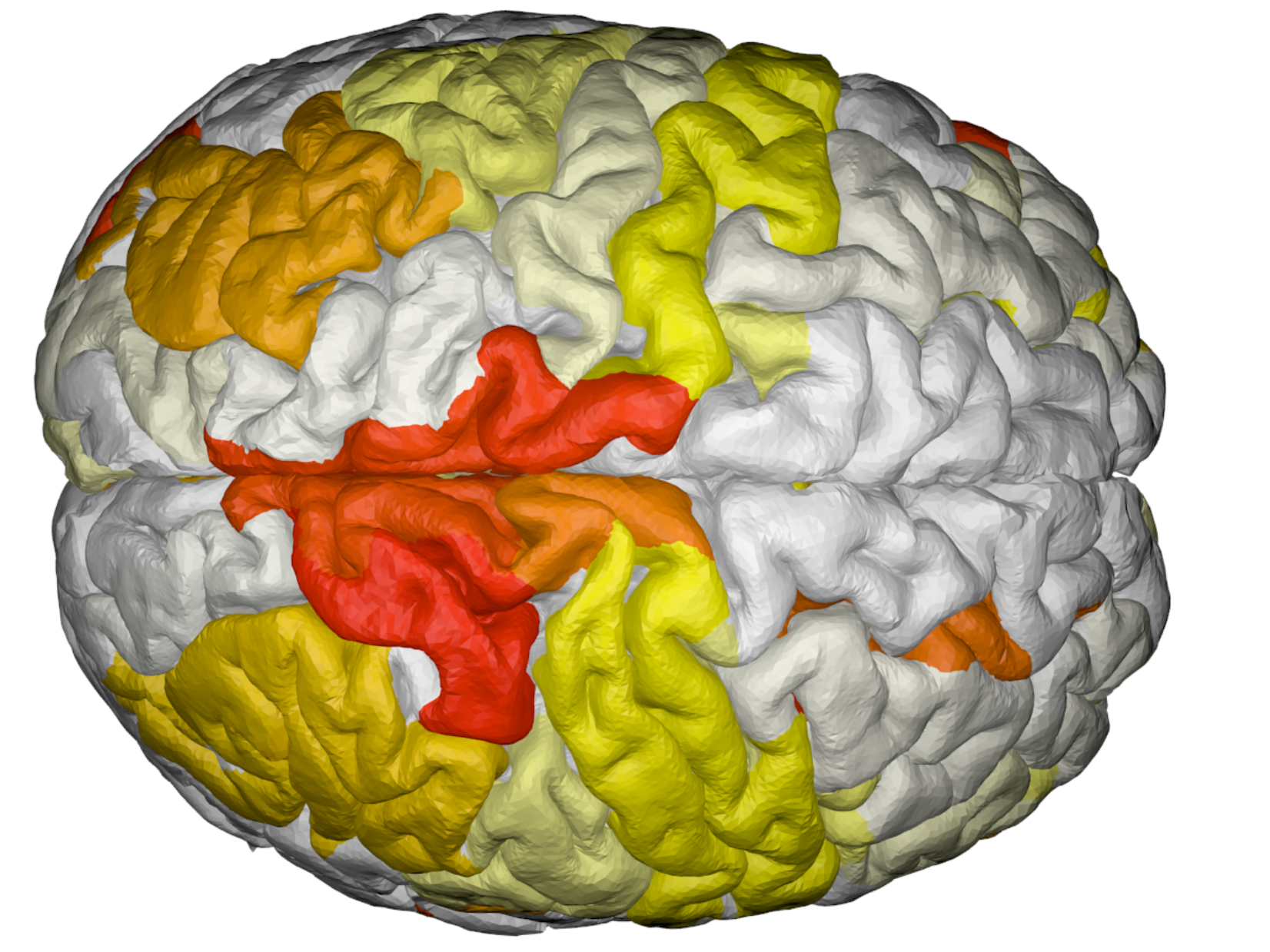} &
    \includegraphics[width=0.13\linewidth]{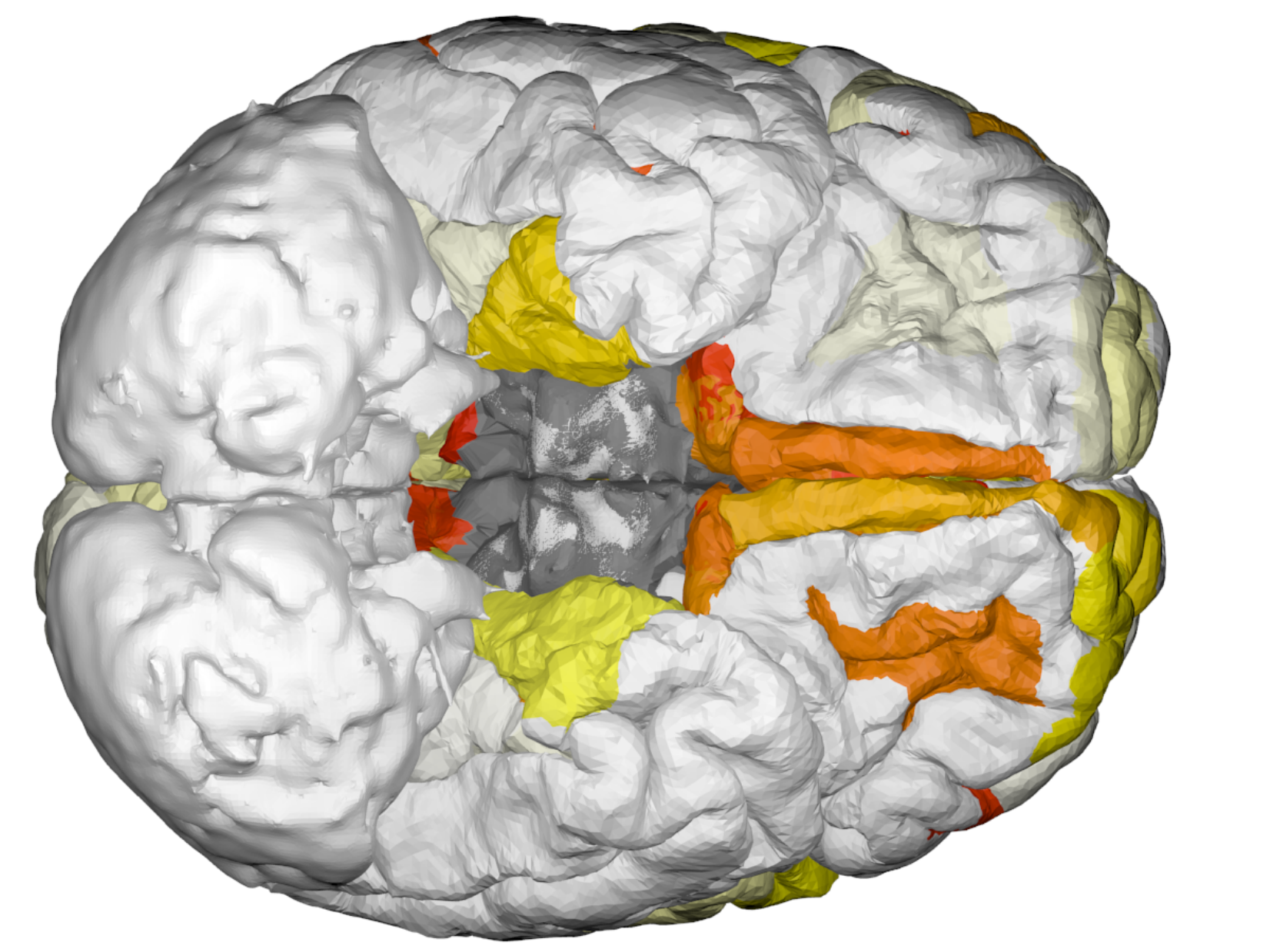} &
    \includegraphics[width=0.13\linewidth]{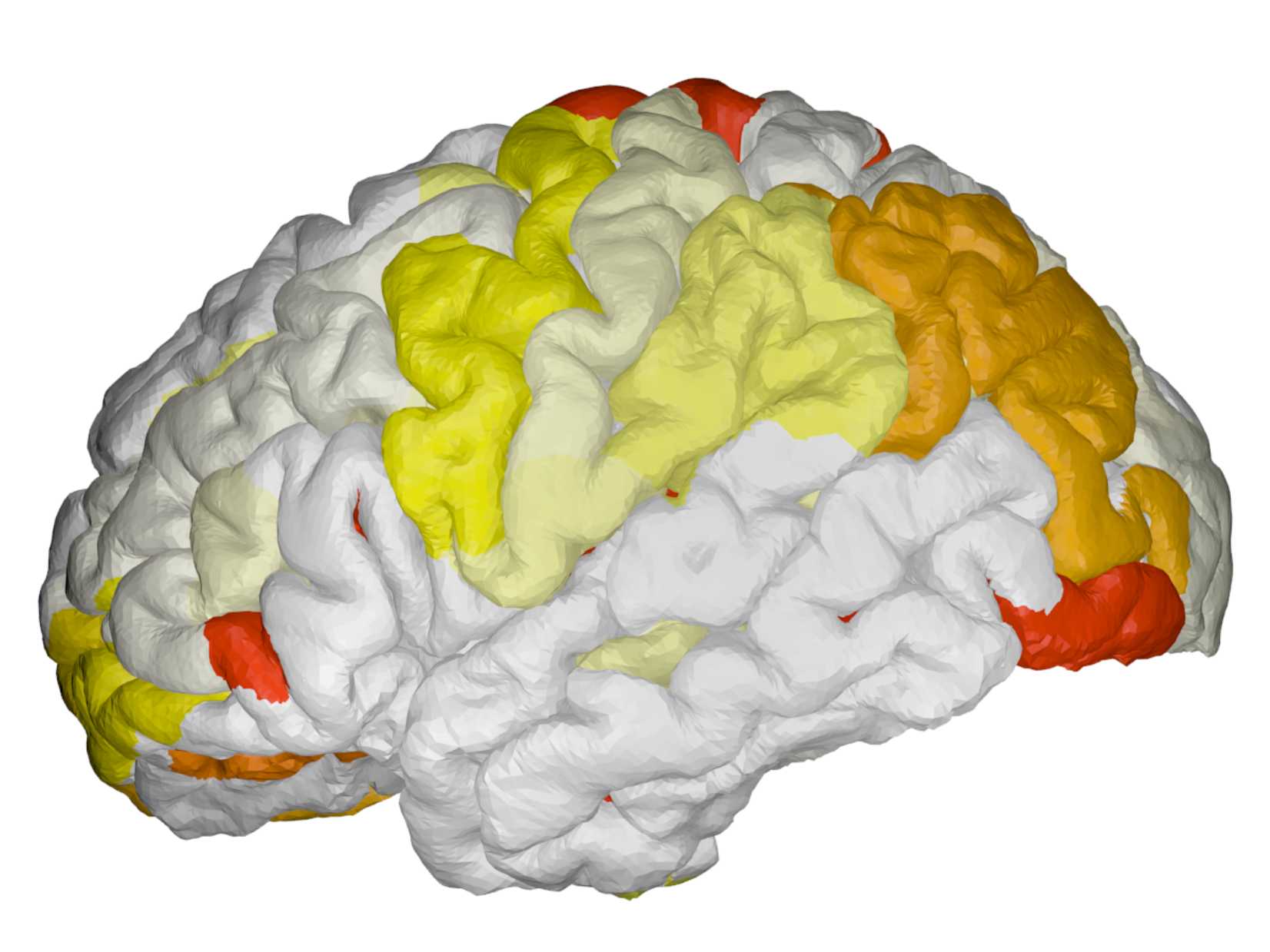} &
    \includegraphics[width=0.13\linewidth]{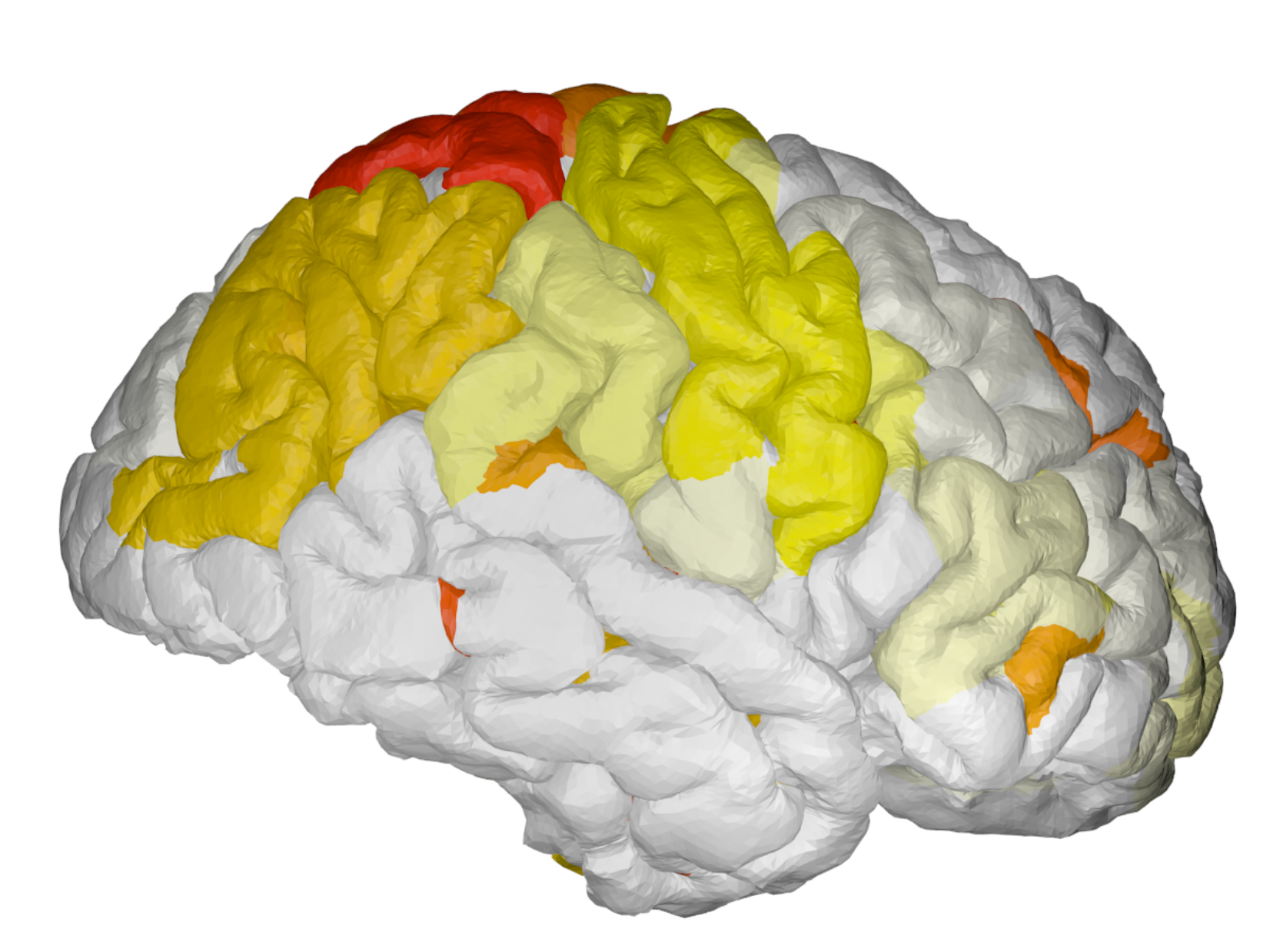} &
    \includegraphics[width=0.13\linewidth]{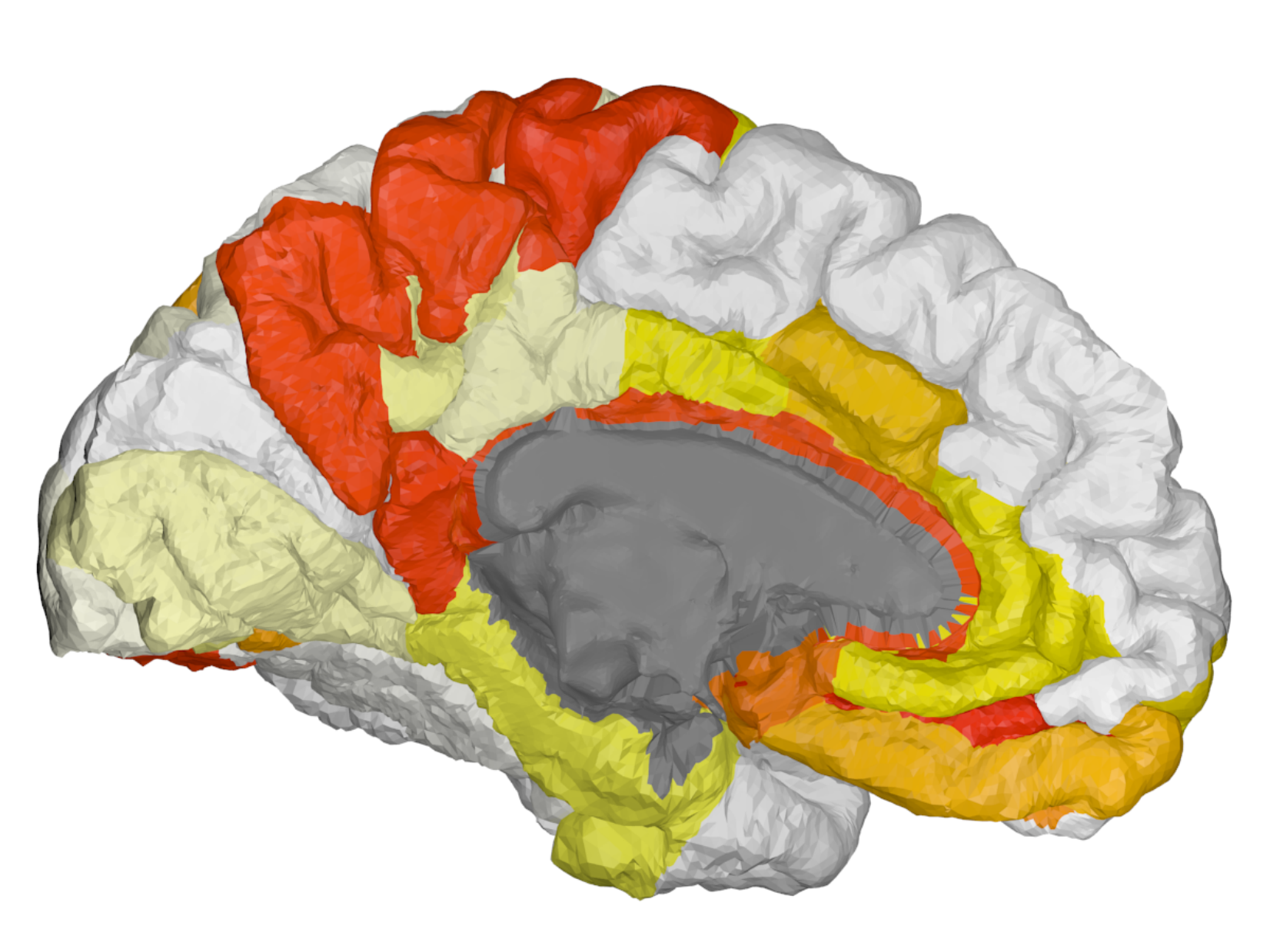} &
    \includegraphics[width=0.13\linewidth]{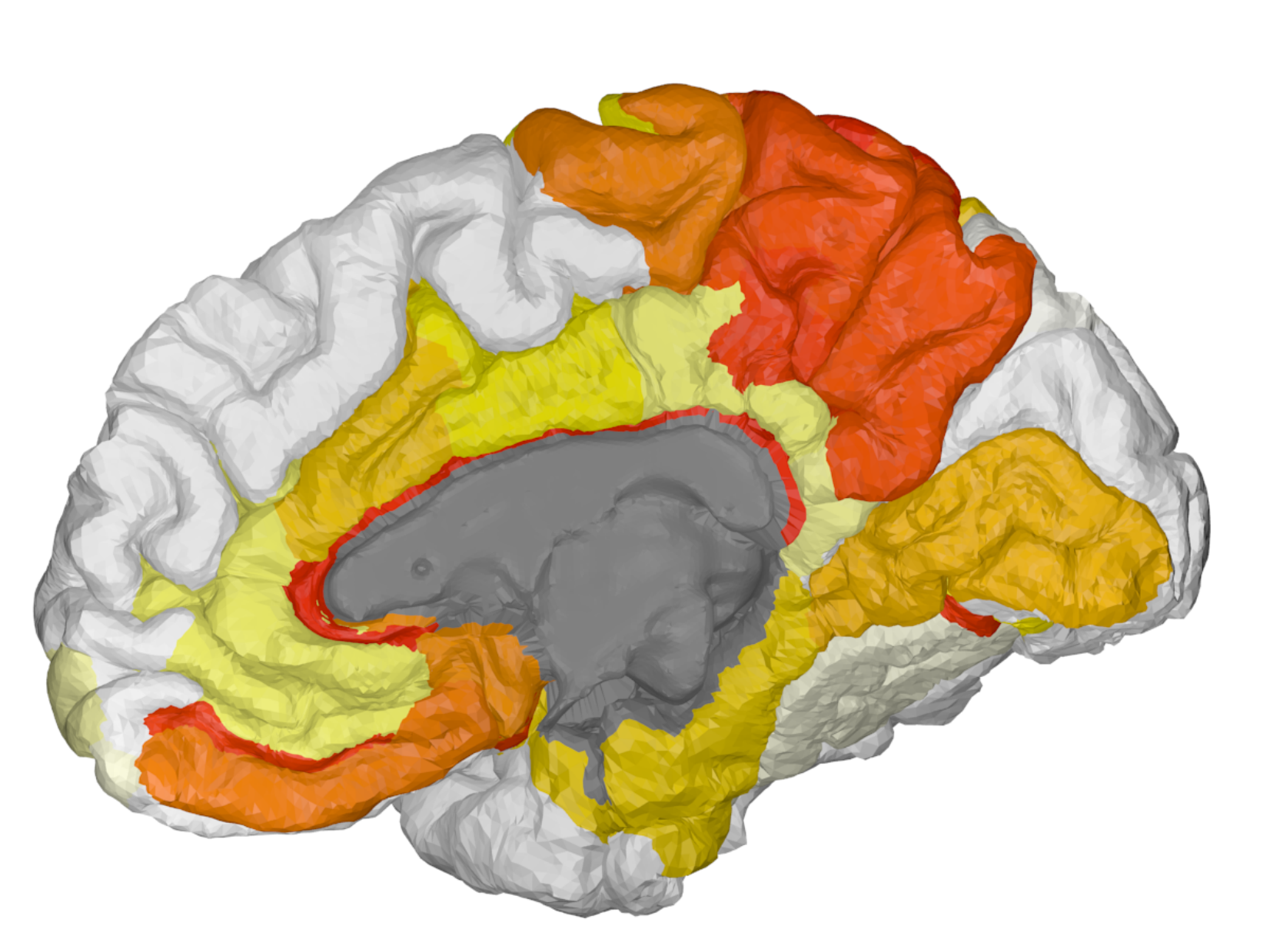} & 
    \includegraphics[width=0.13\linewidth]{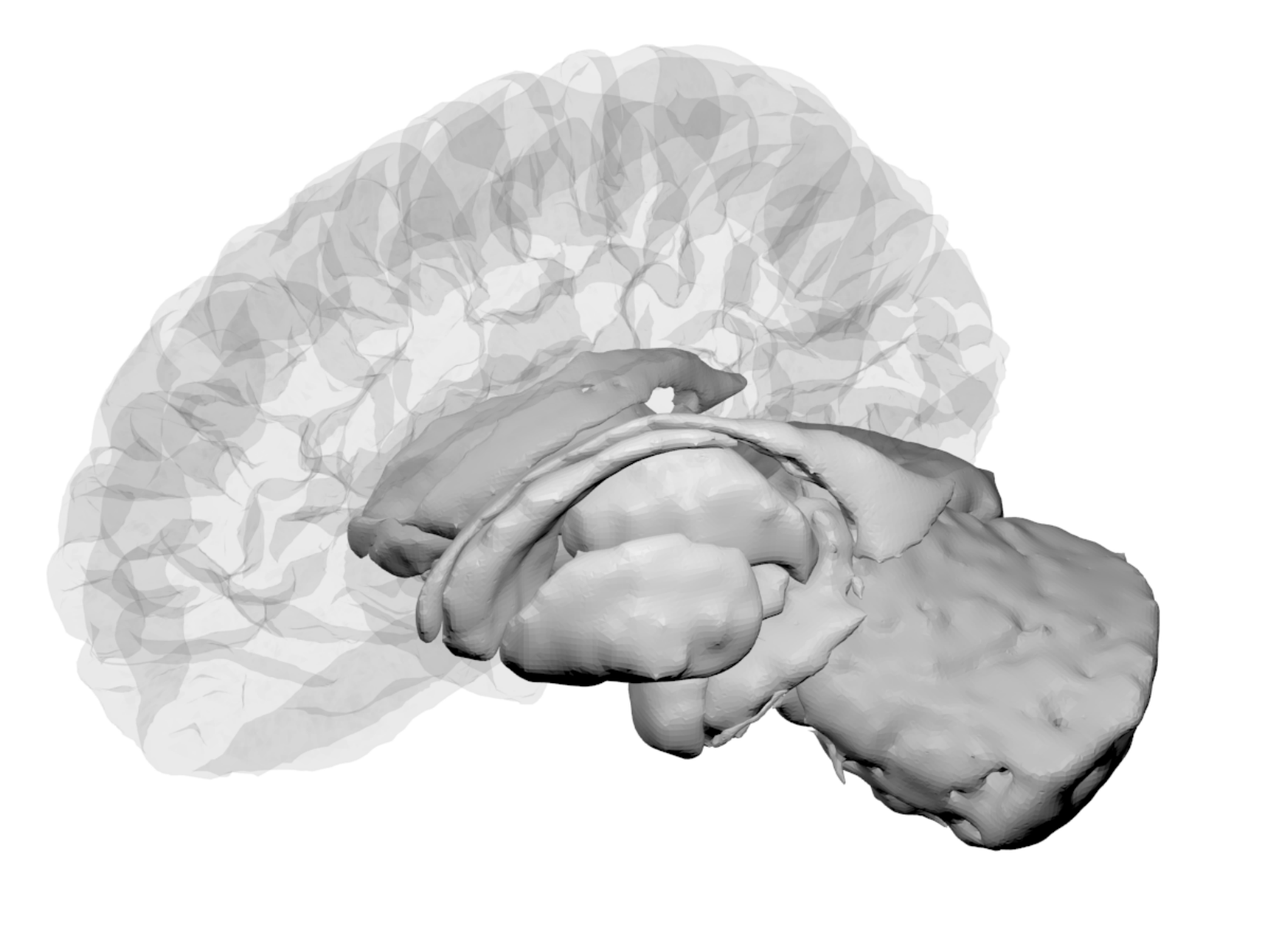} &\\ \vspace{-0.2cm}
    \raisebox{4\height}[0pt][0pt]{\textbf{LSAP-C}} &
    \includegraphics[width=0.13\linewidth]{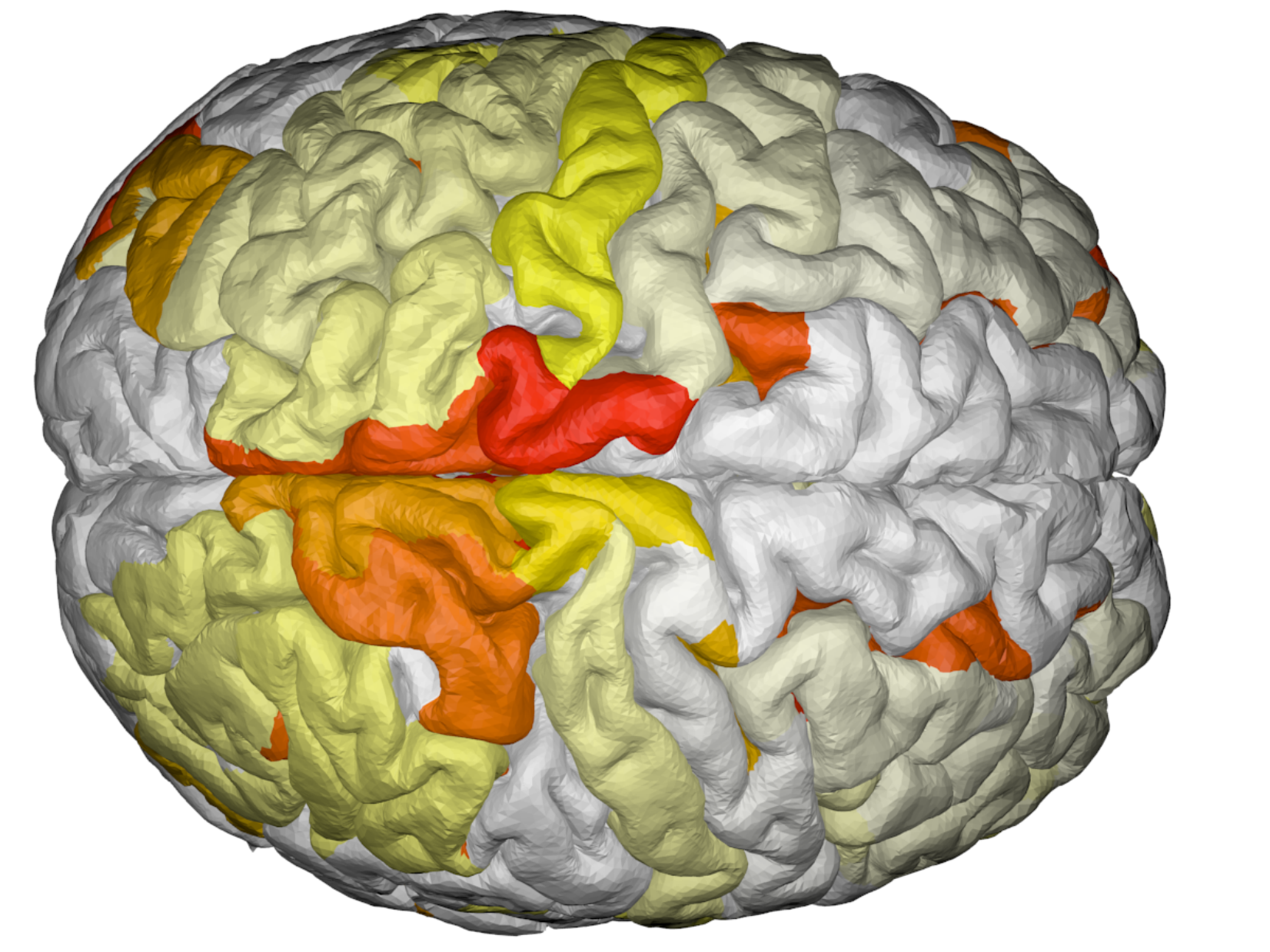} &
    \includegraphics[width=0.13\linewidth]{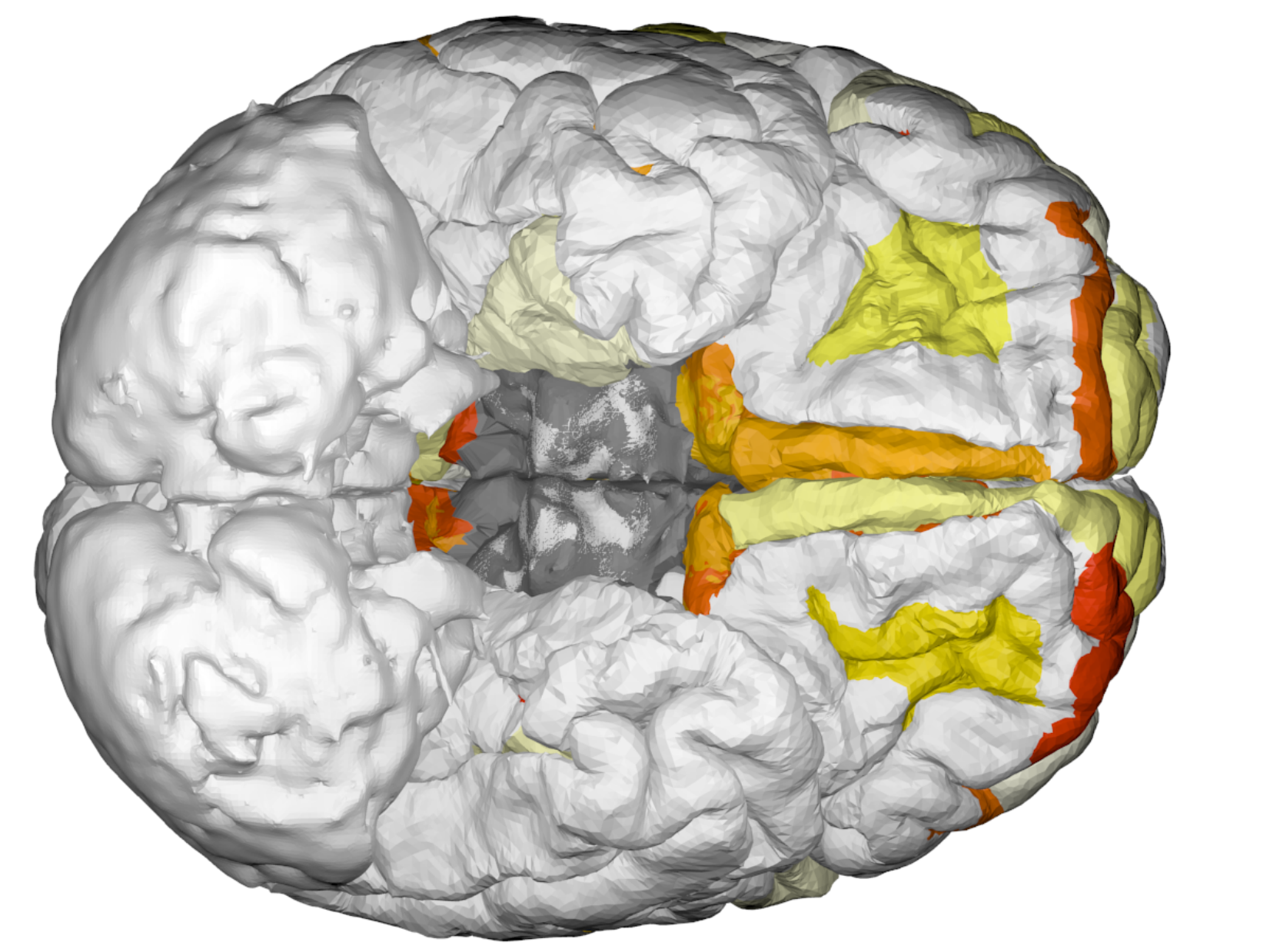} &
    \includegraphics[width=0.13\linewidth]{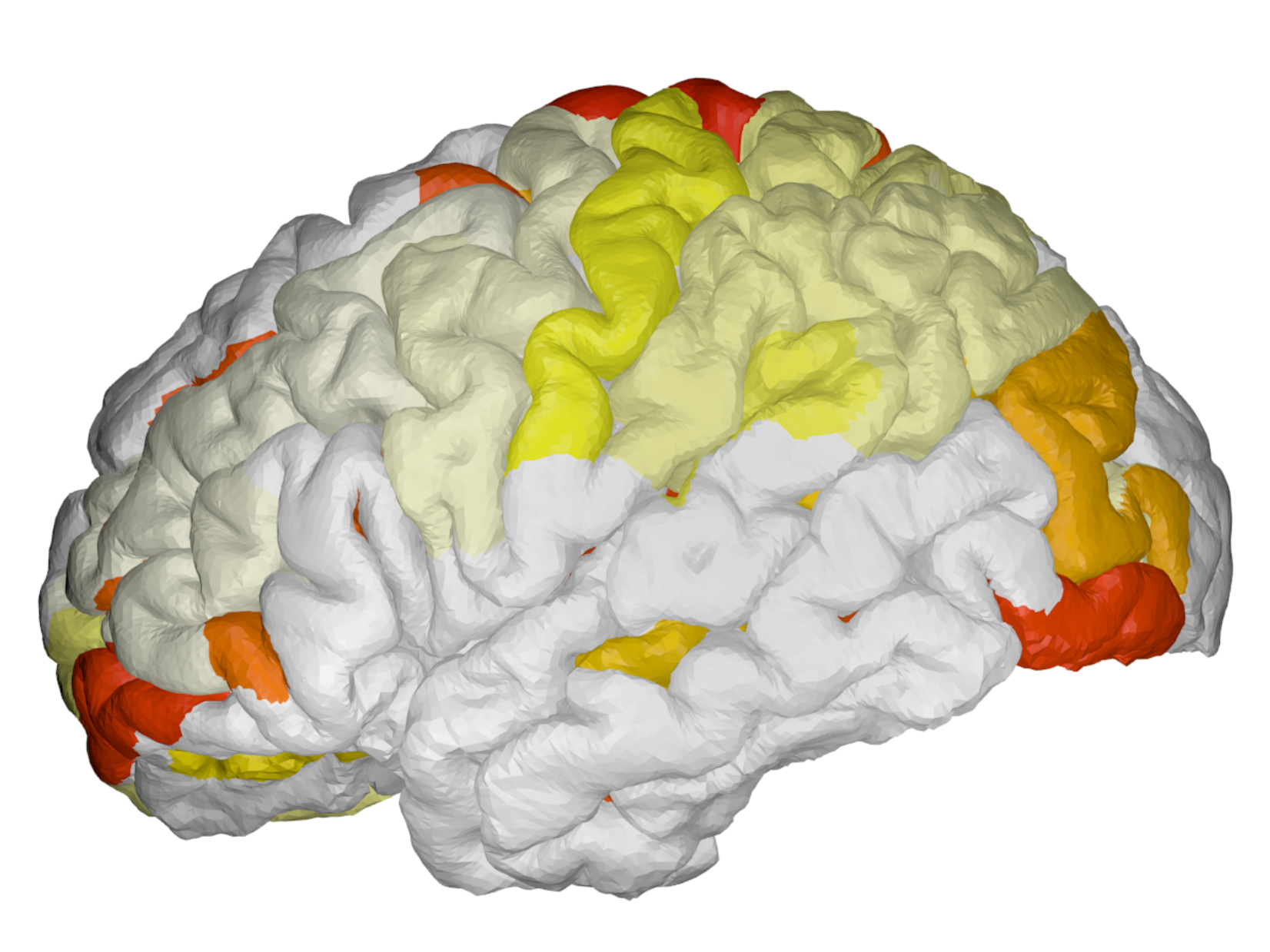} &
    \includegraphics[width=0.13\linewidth]{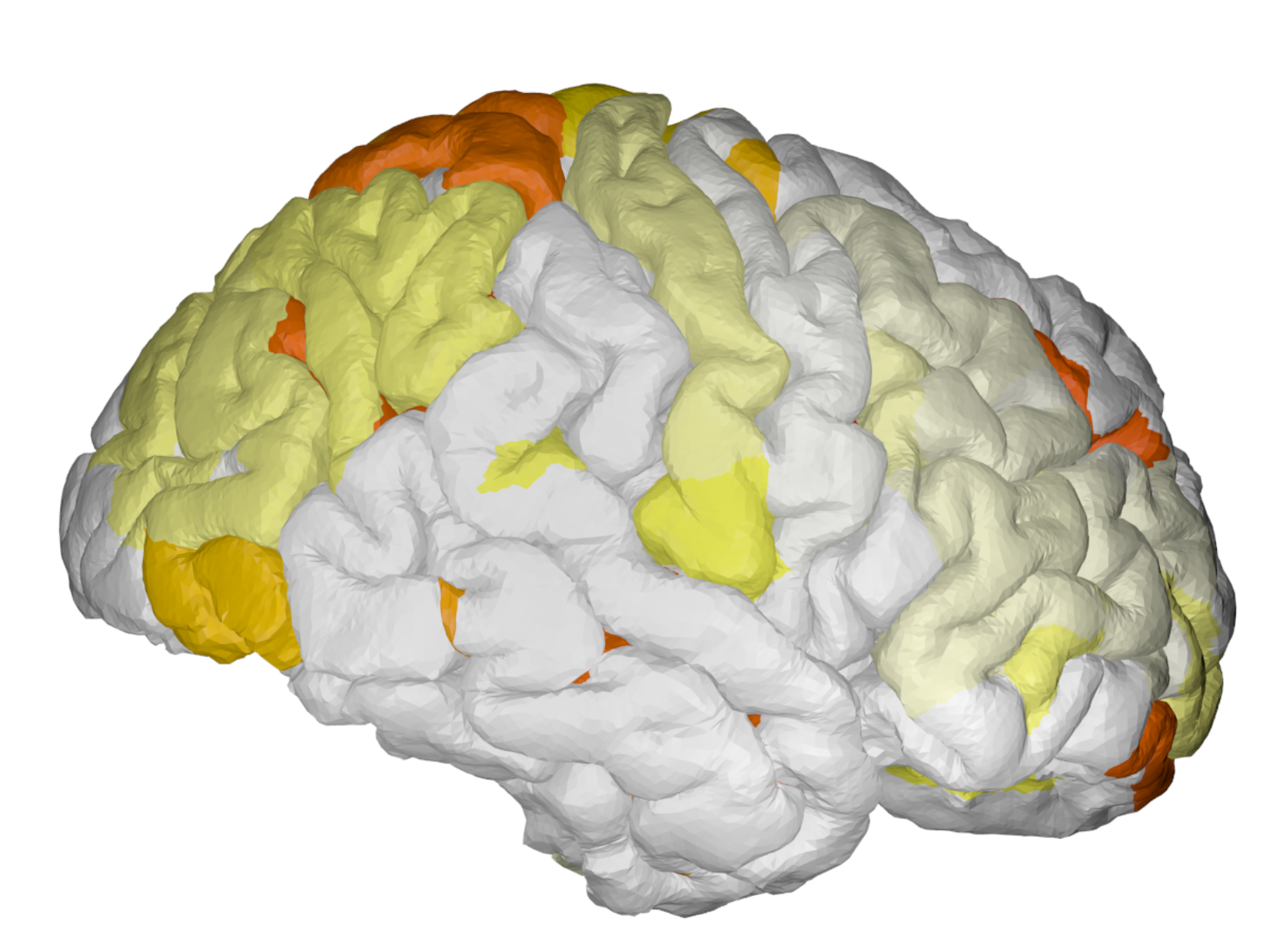} &
    \includegraphics[width=0.13\linewidth]{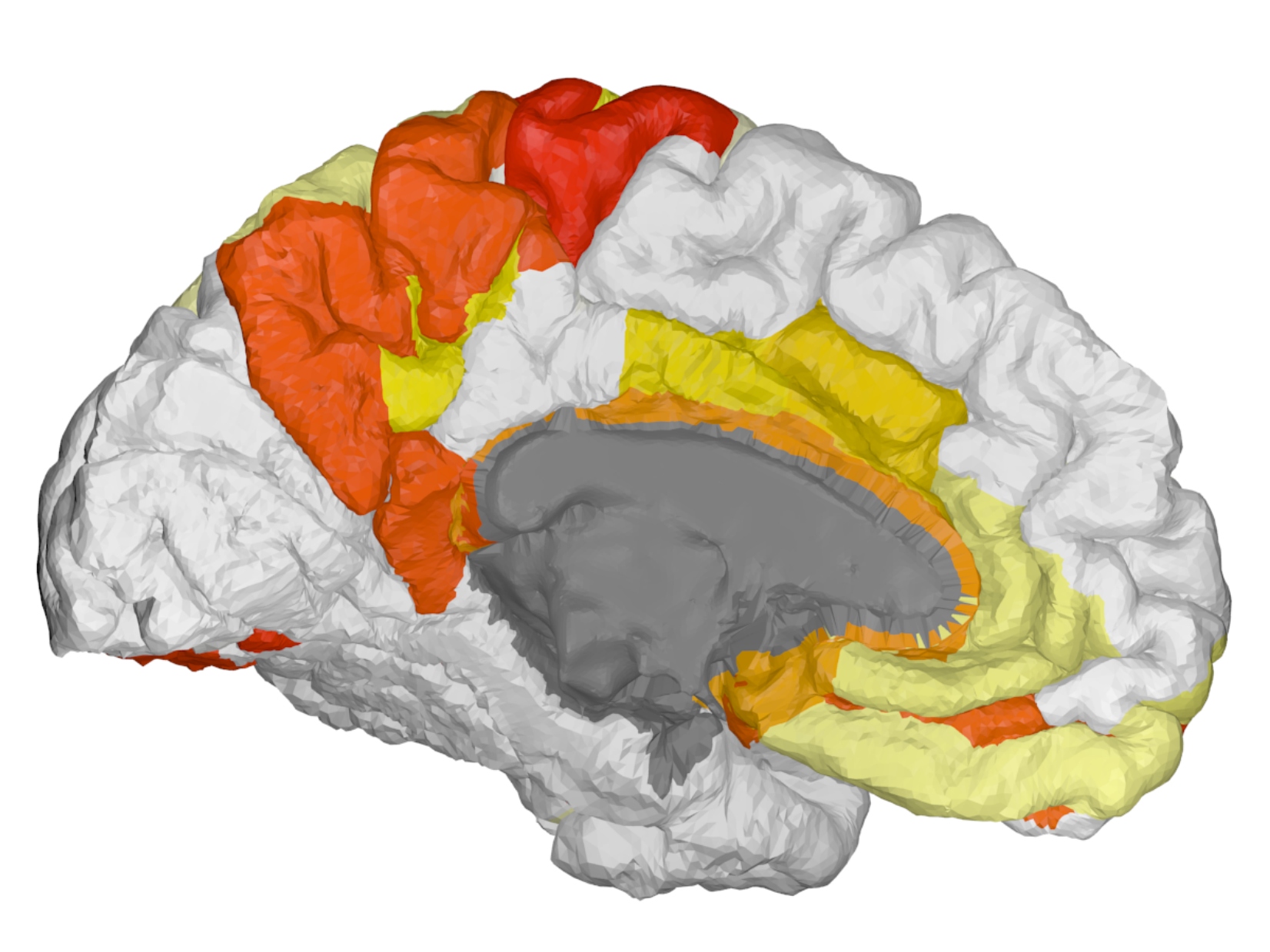} &
    \includegraphics[width=0.13\linewidth]{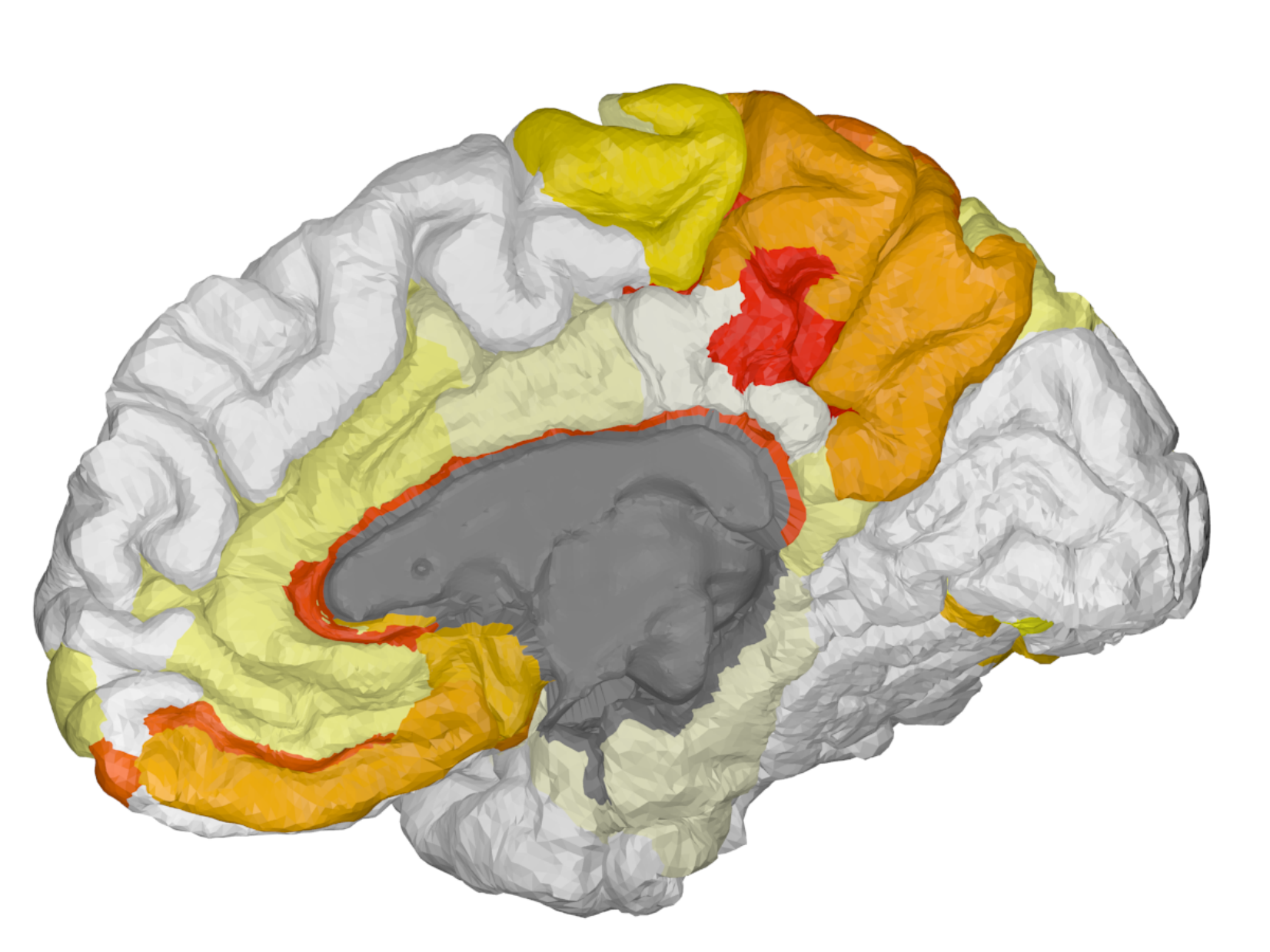} & 
    \includegraphics[width=0.13\linewidth]{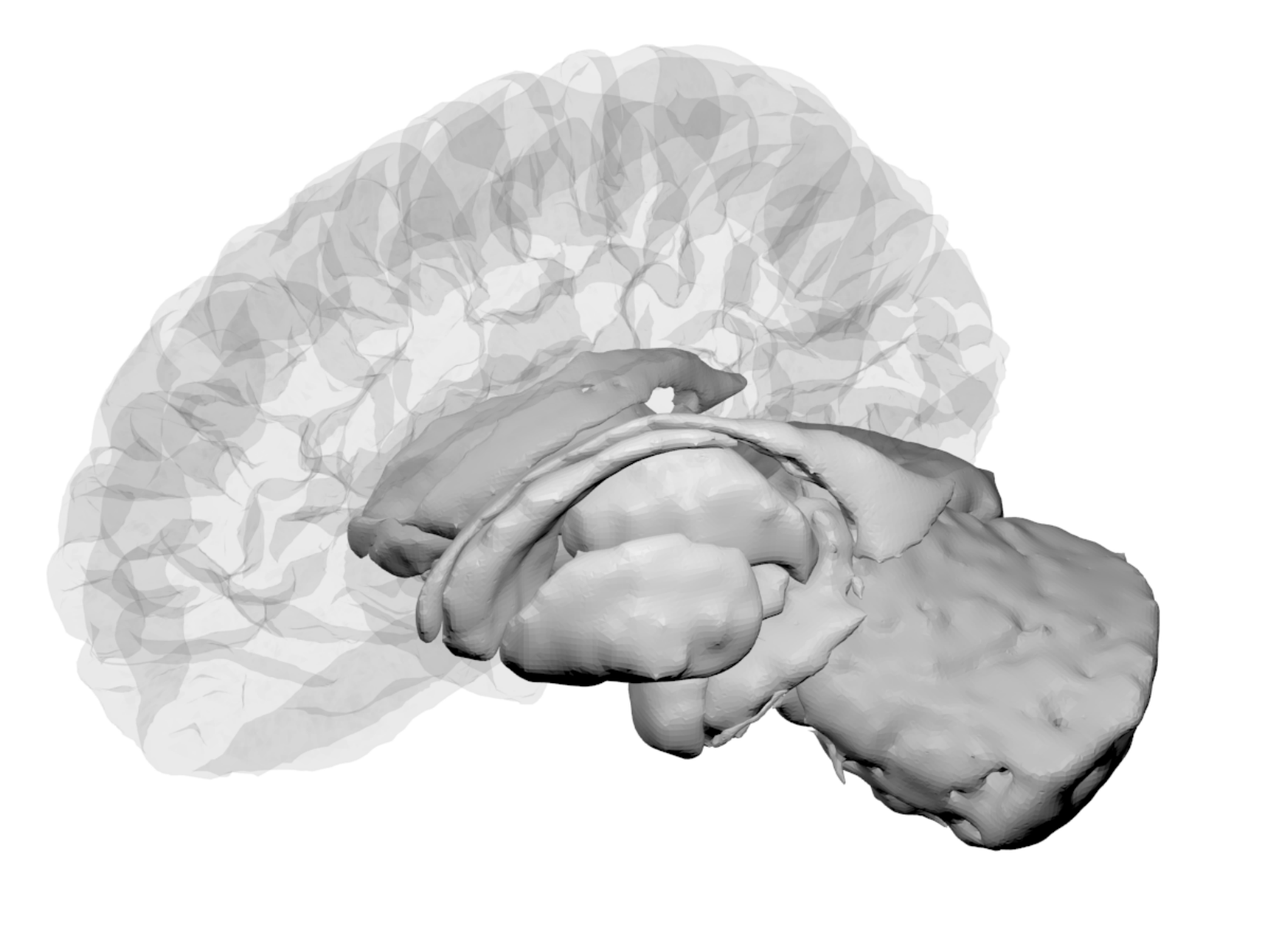} &
    \raisebox{-0.5\height}[0pt][0pt]{\includegraphics[width=0.06\textwidth]{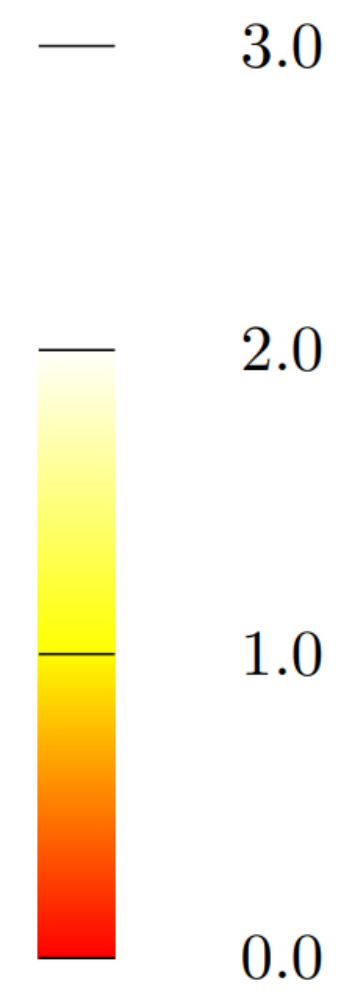}}\\ \vspace{-0.2cm}
    \raisebox{4\height}[0pt][0pt]{\textbf{LSAP-H}} &
    \includegraphics[width=0.13\linewidth]{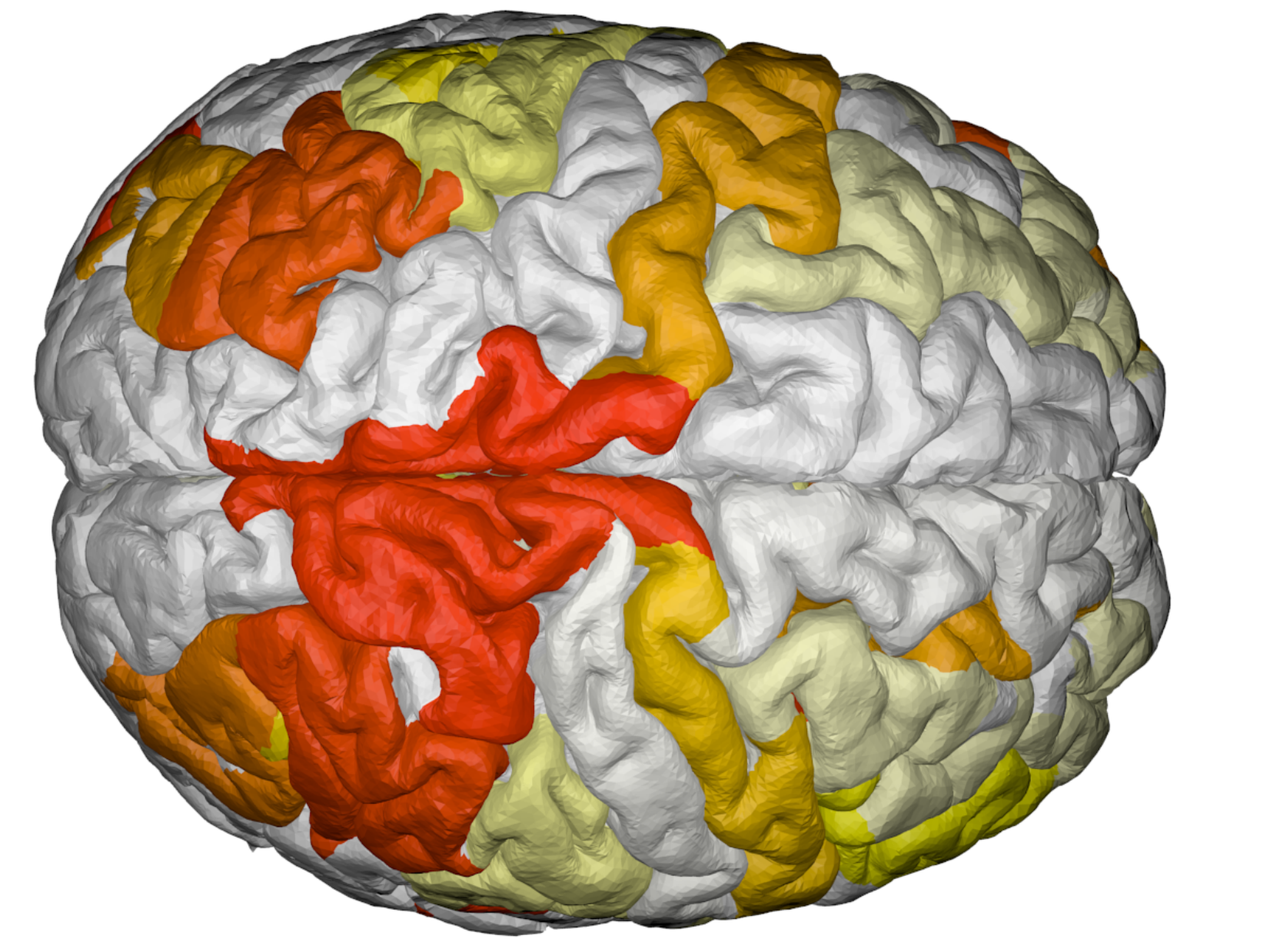} &
    \includegraphics[width=0.13\linewidth]{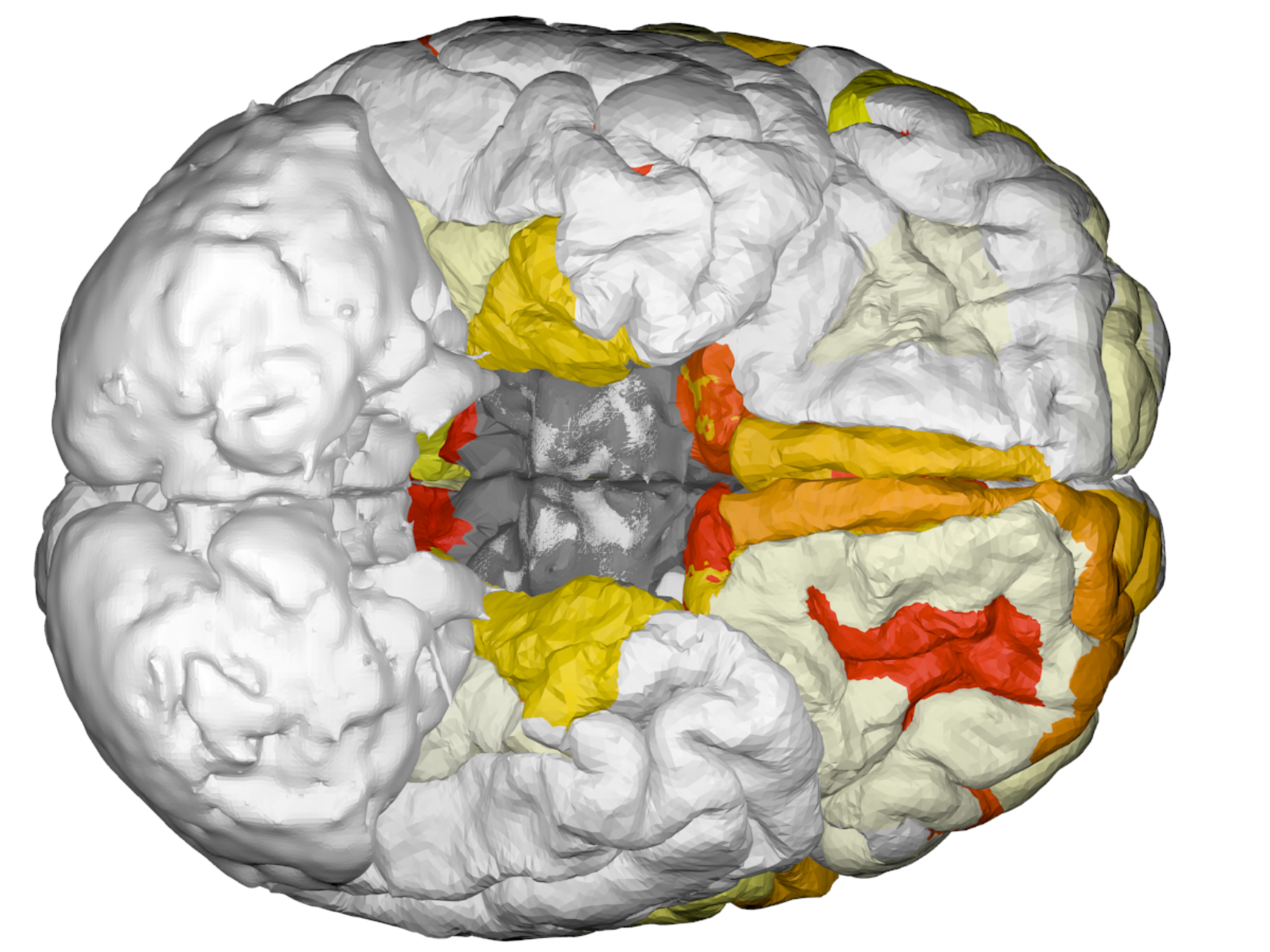} &
    \includegraphics[width=0.13\linewidth]{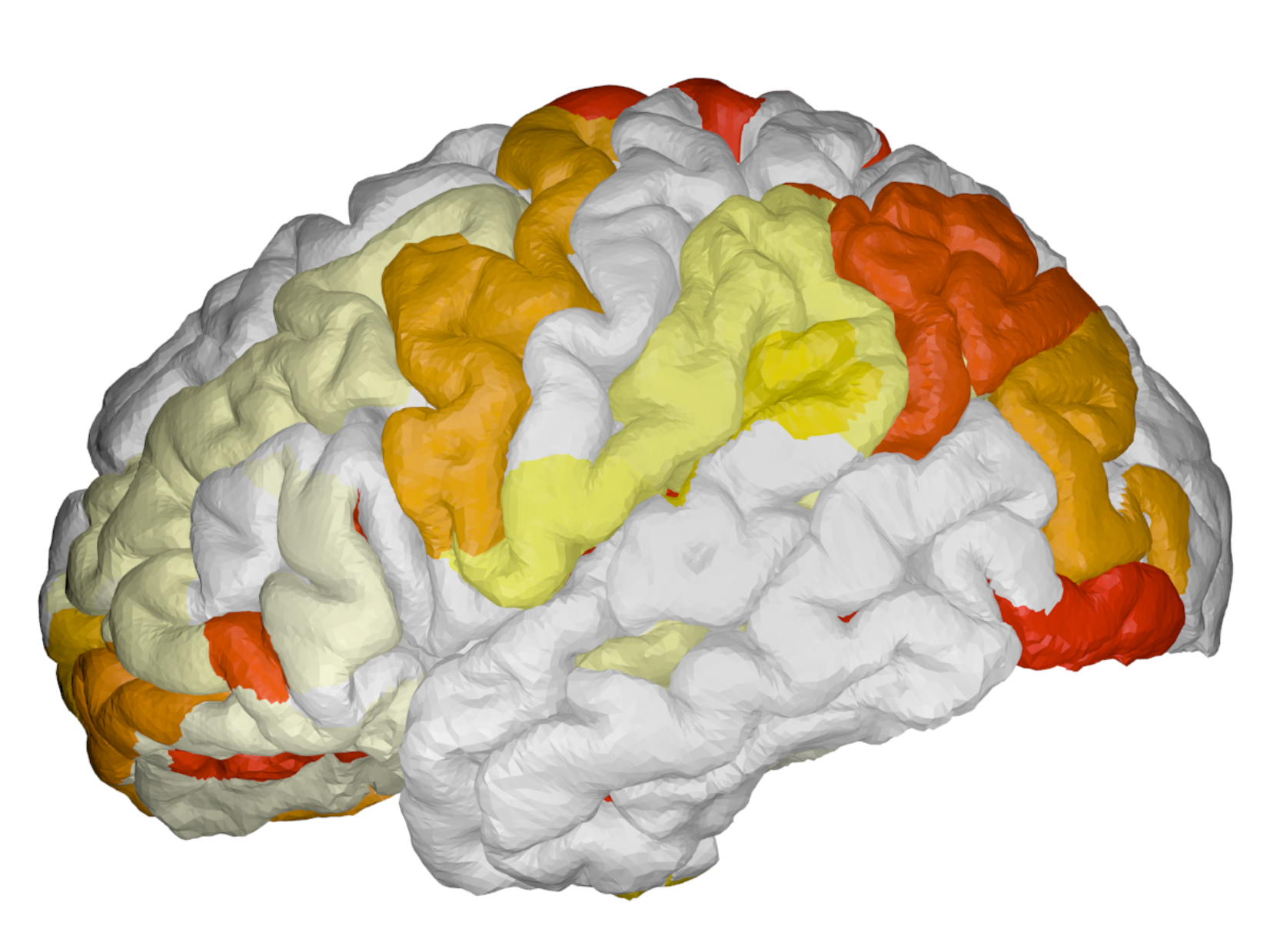} &
    \includegraphics[width=0.13\linewidth]{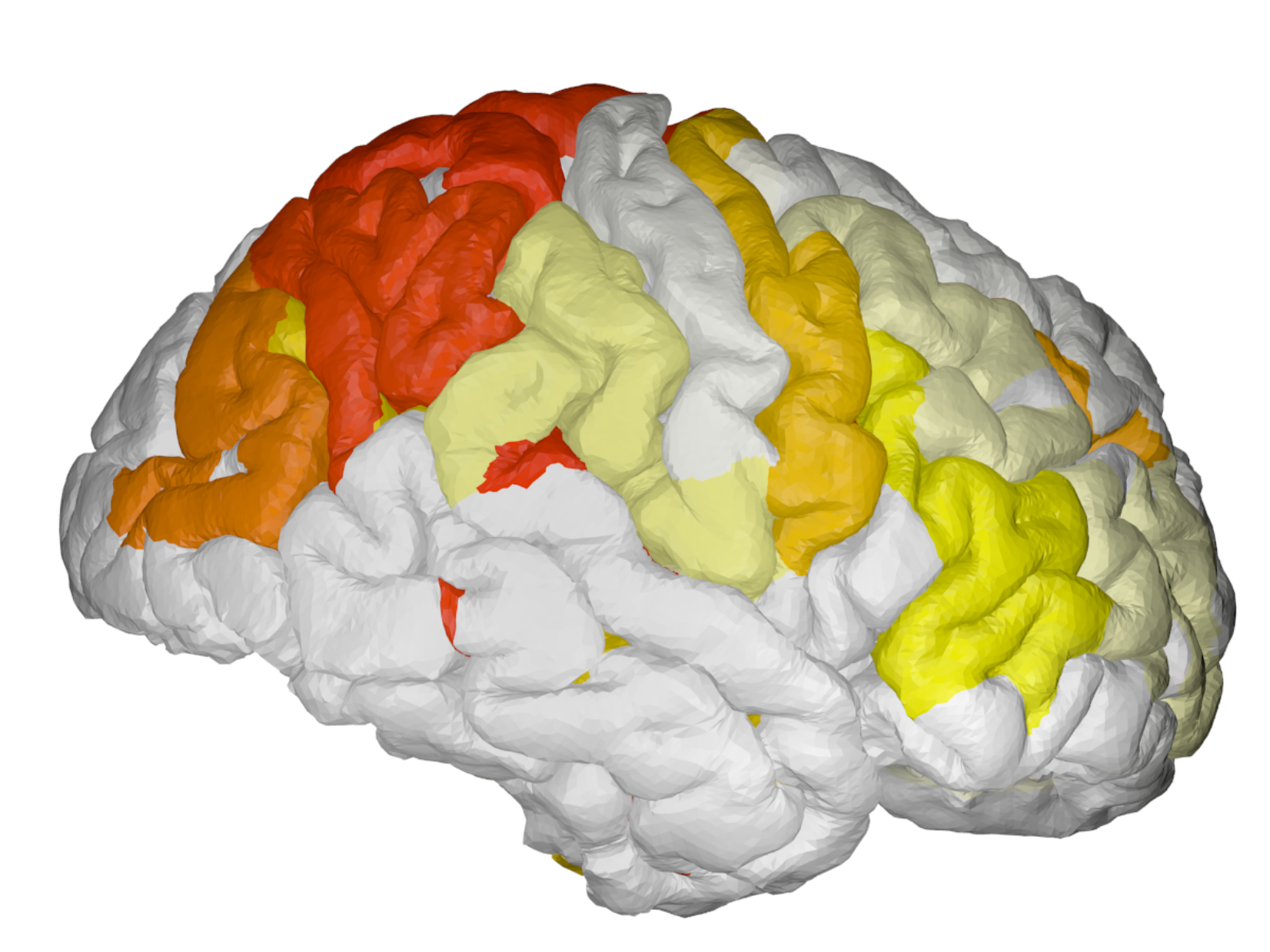} &
    \includegraphics[width=0.13\linewidth]{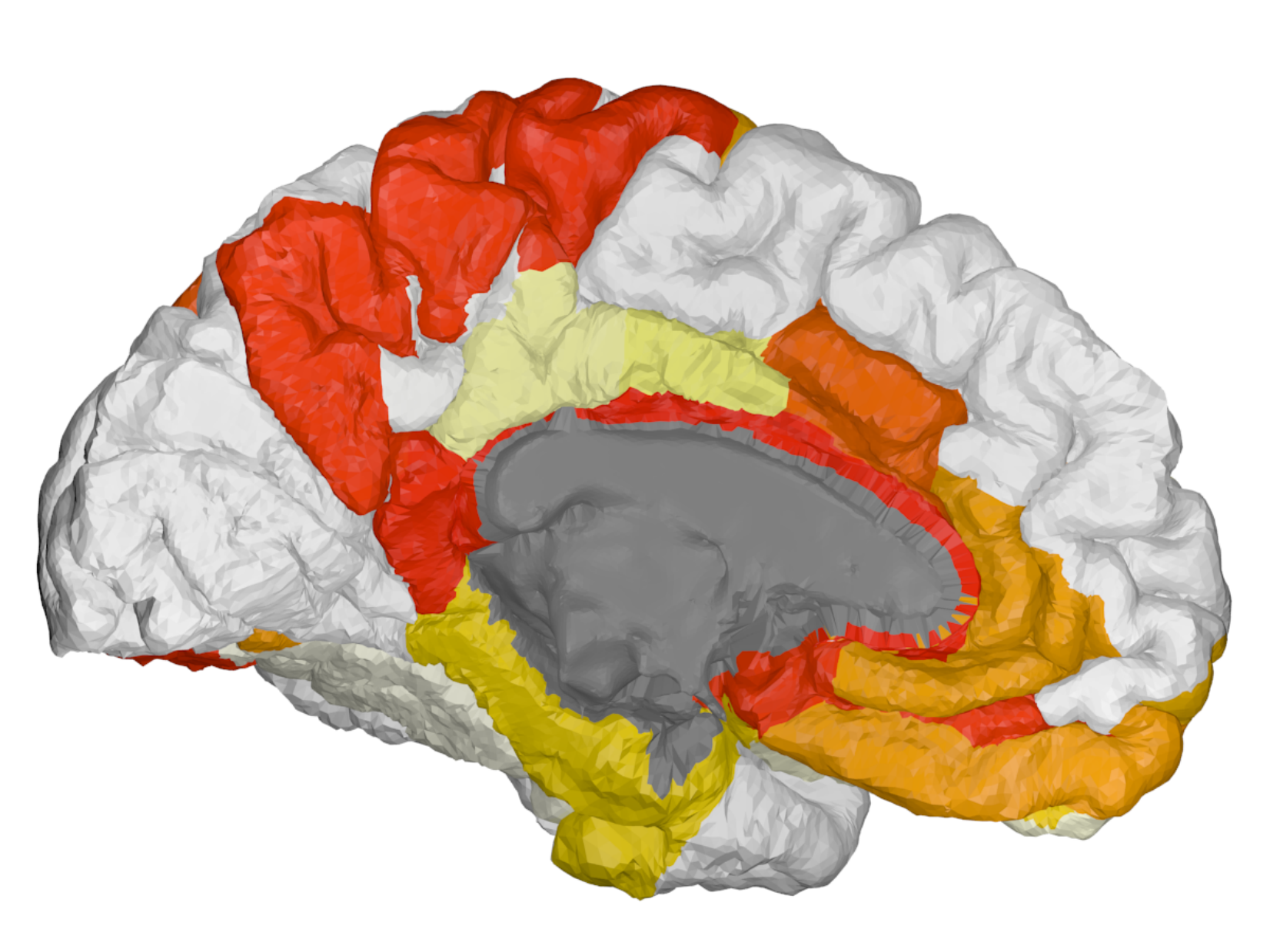} &
    \includegraphics[width=0.13\linewidth]{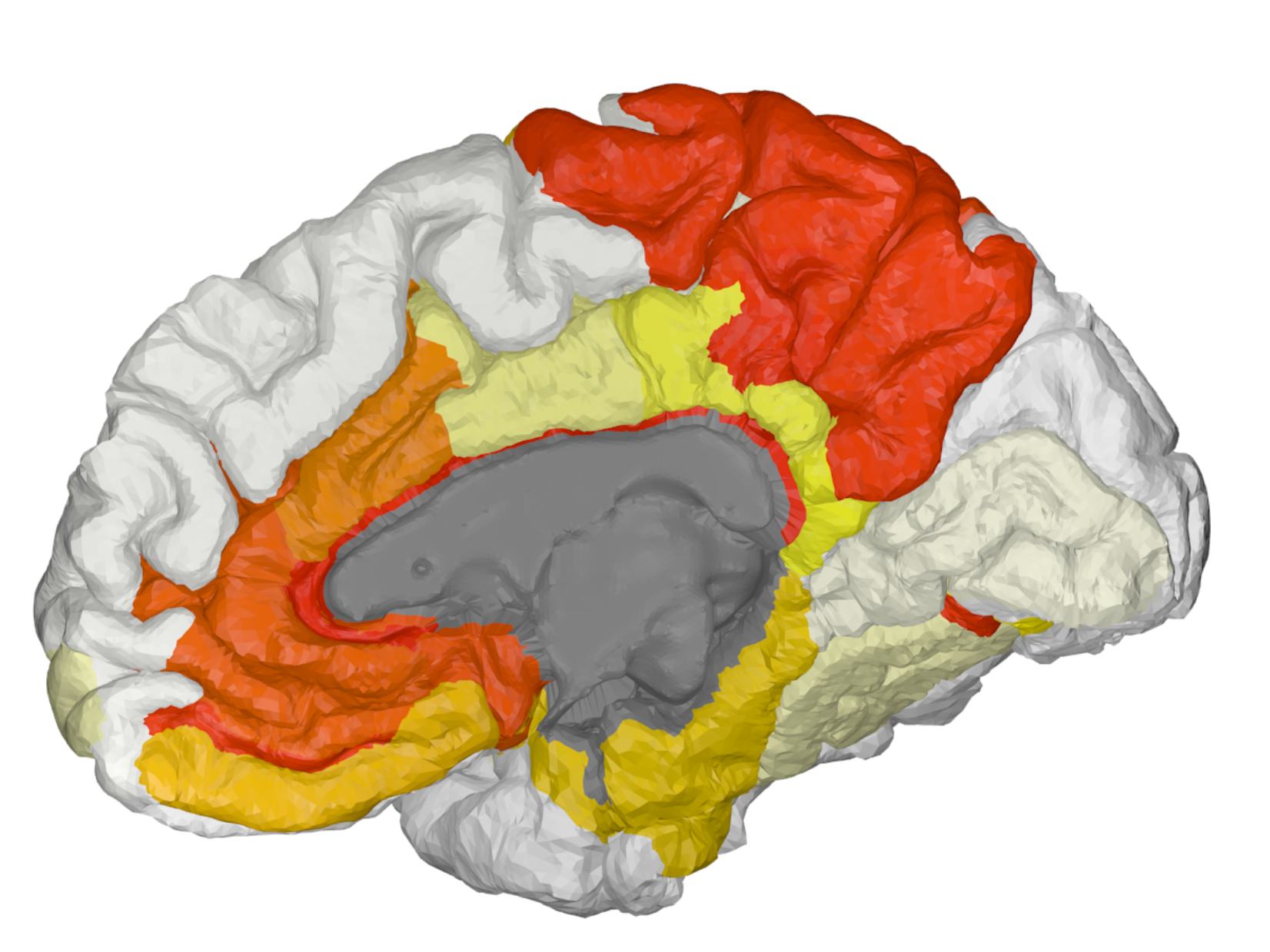} & 
    \includegraphics[width=0.13\linewidth]{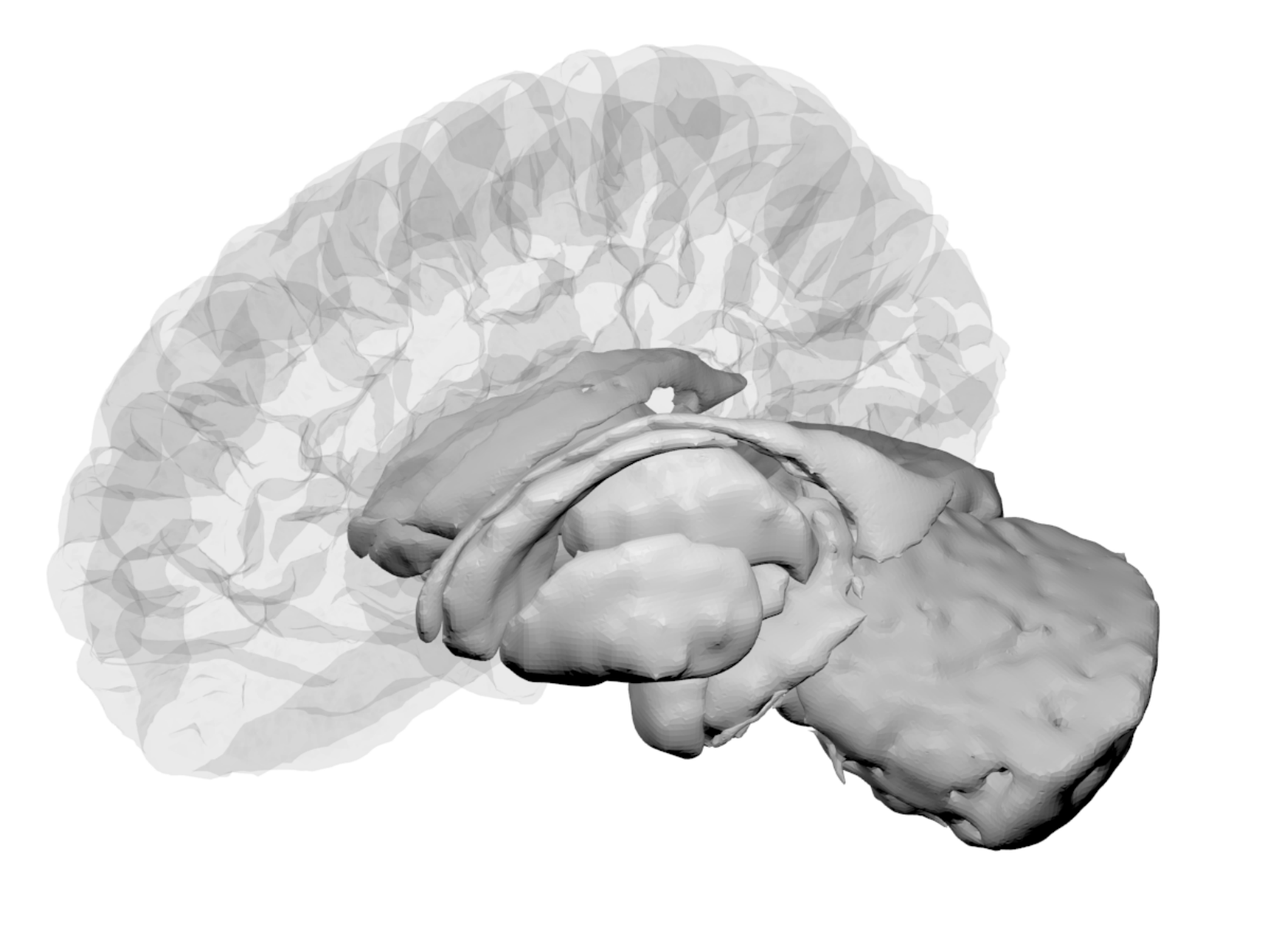} &
    \\ \vspace{-0.2cm}
    \raisebox{4\height}[0pt][0pt]{\textbf{LSAP-L}} &
    \includegraphics[width=0.13\linewidth]{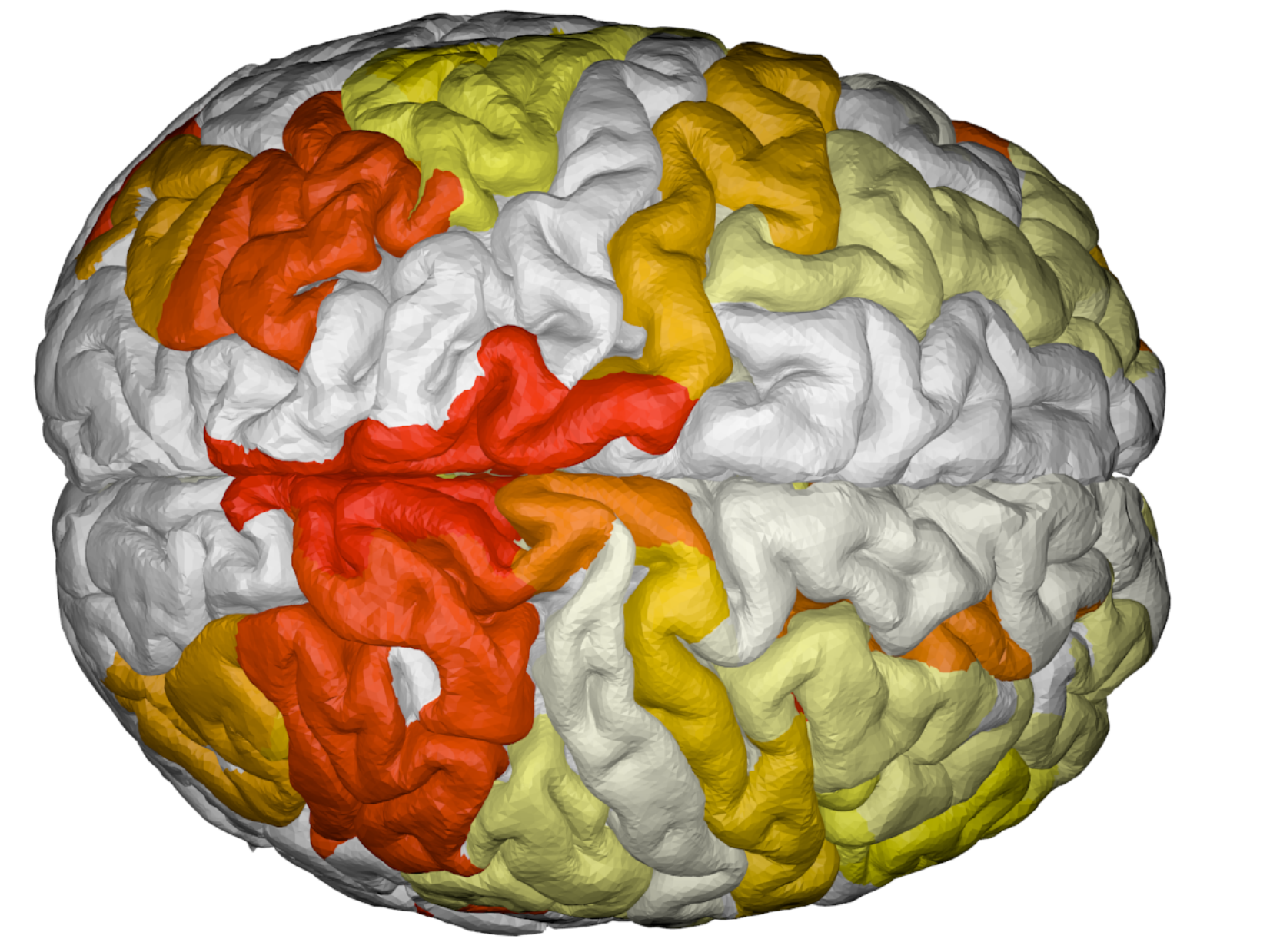} &
    \includegraphics[width=0.13\linewidth]{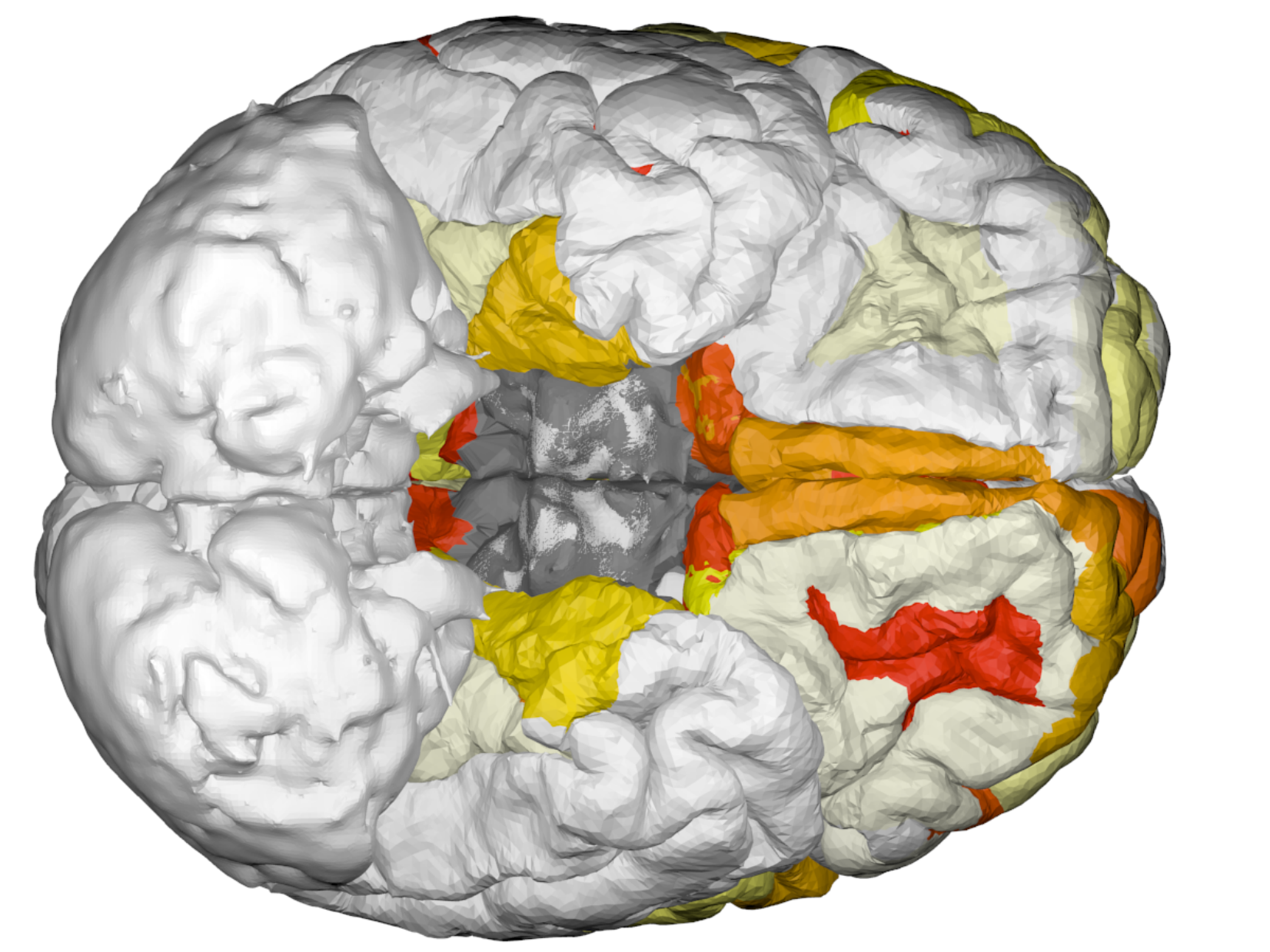} &
    \includegraphics[width=0.13\linewidth]{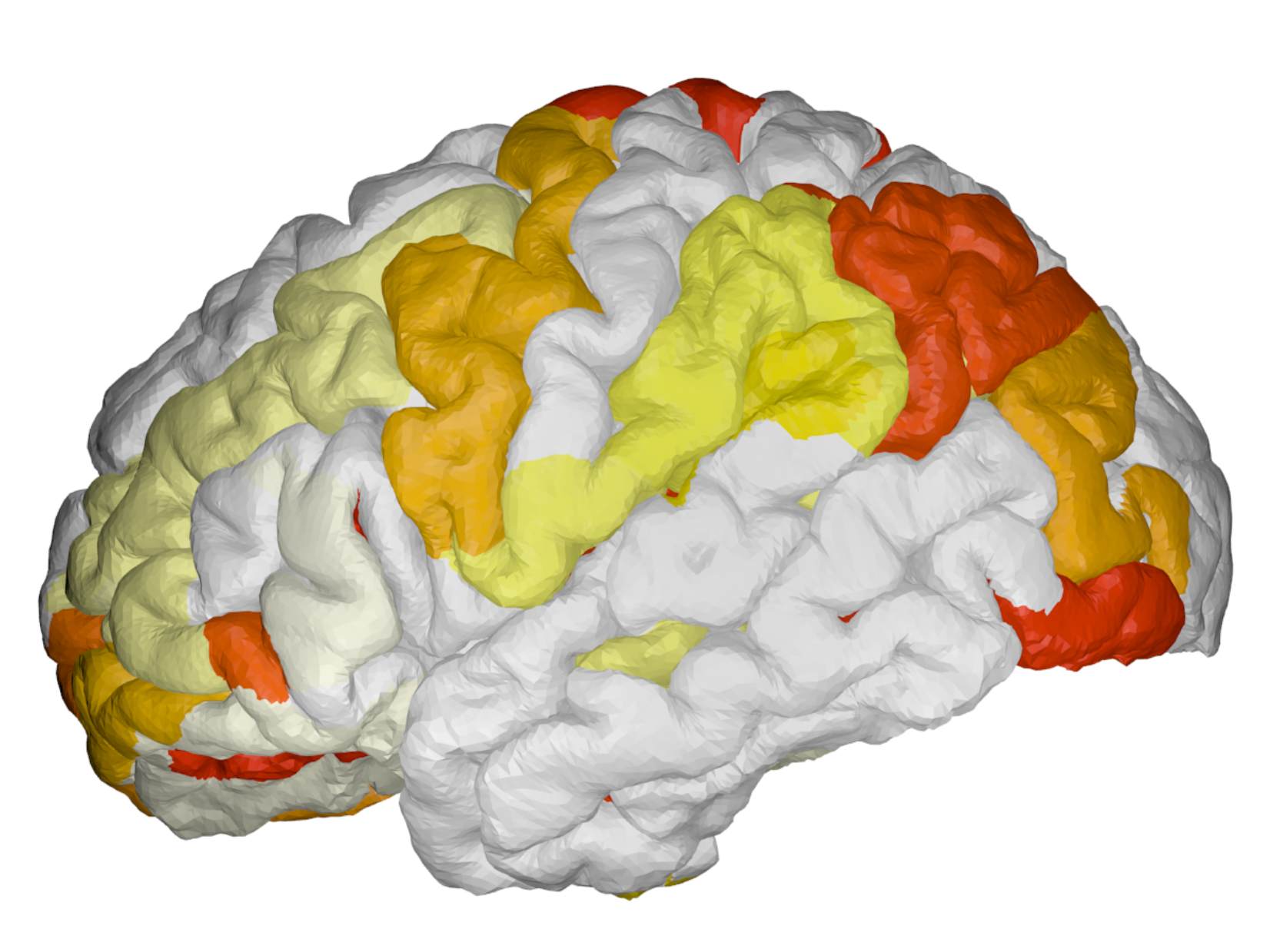} &
    \includegraphics[width=0.13\linewidth]{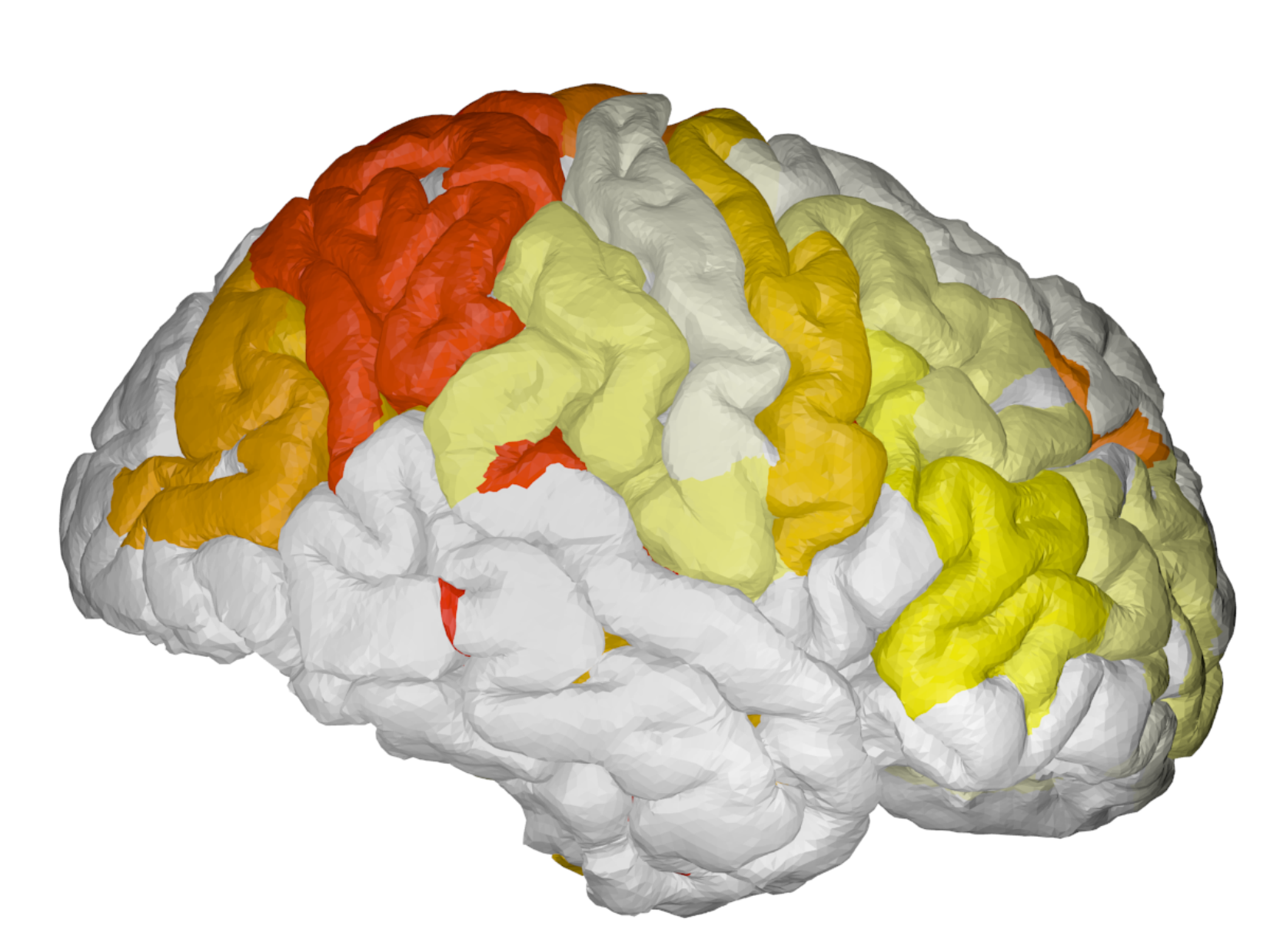} &
    \includegraphics[width=0.13\linewidth]{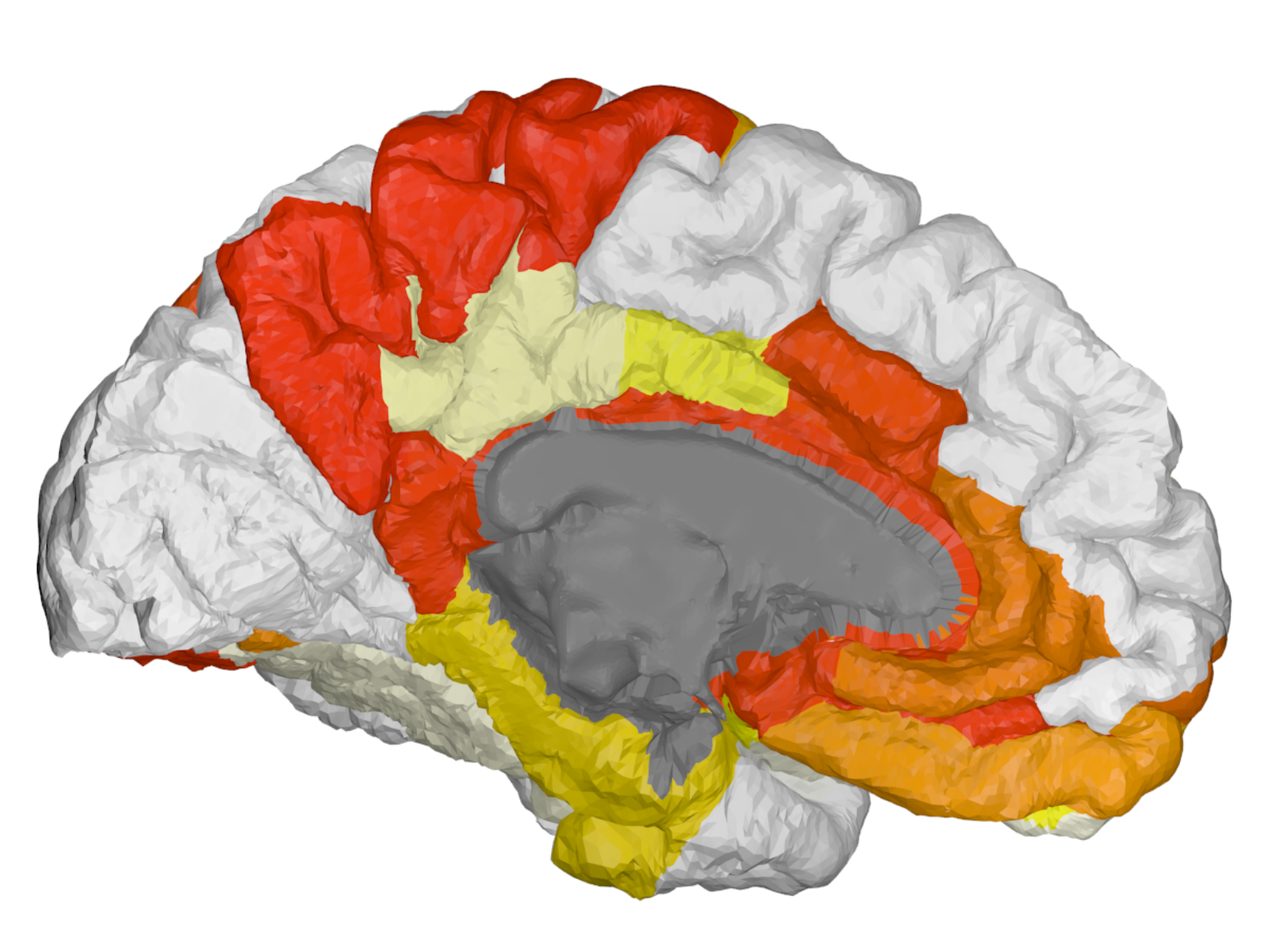} &
    \includegraphics[width=0.13\linewidth]{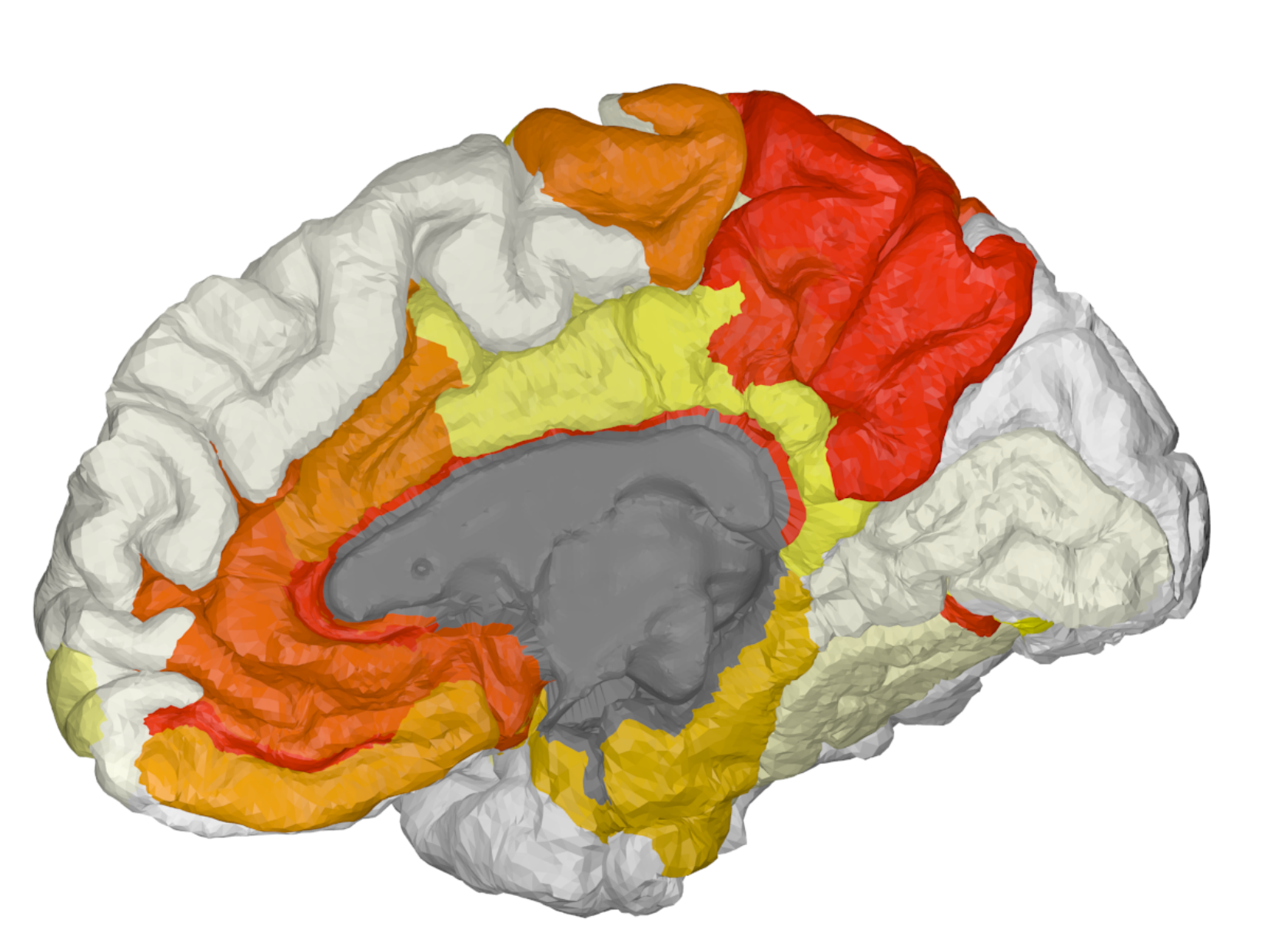} & 
    \includegraphics[width=0.13\linewidth]{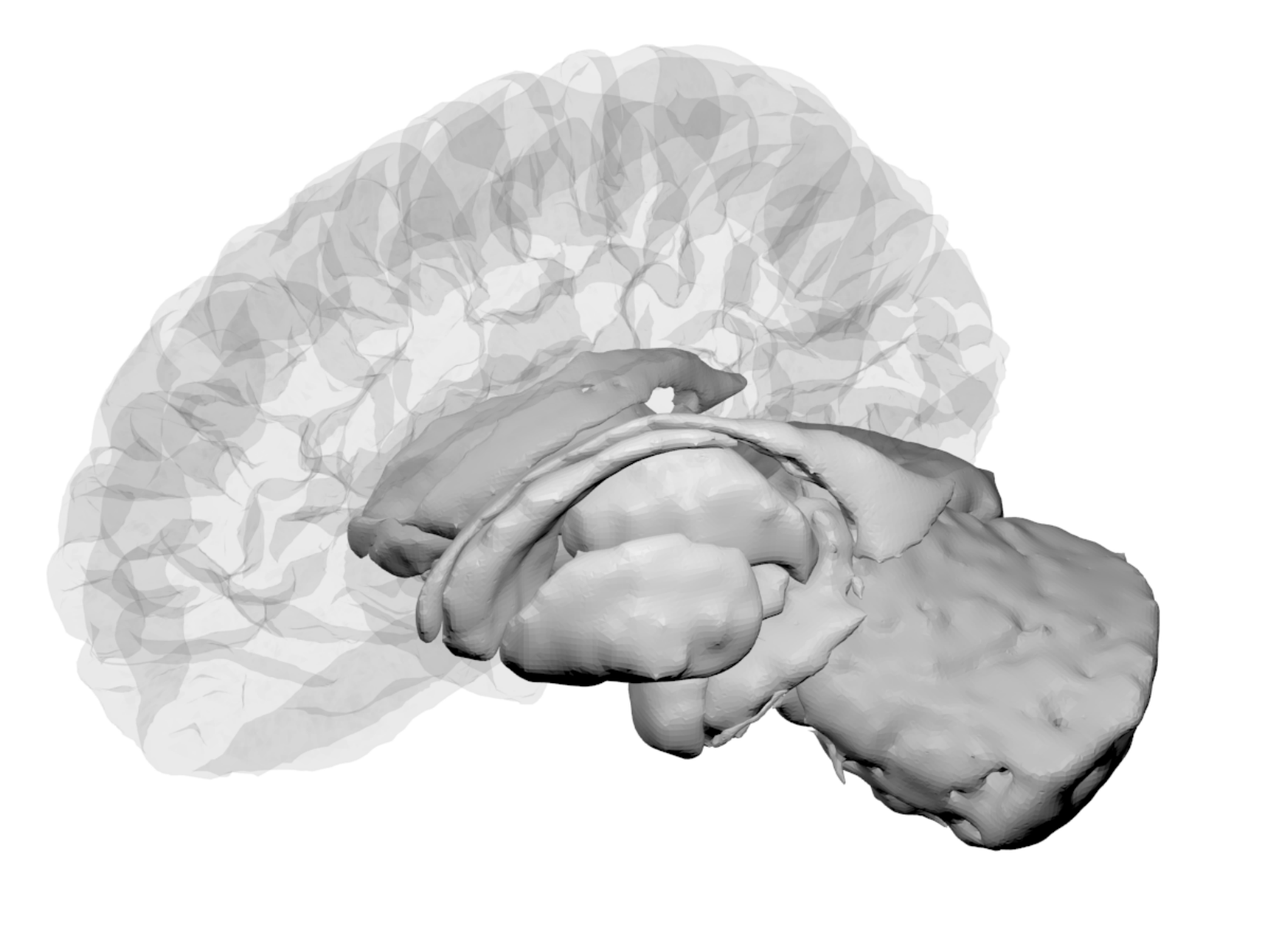} &\\
  \end{tabular}}}
  \vspace{5pt}
  \caption{Visualization of learned scales on the cortical and sub-cortical regions of a brain. 
  This visualization shows the scale of each RoI through the classification result using Cortical Thickness feature.}
\label{fig:Brain_Visualization_DT_supple}
\end{figure*}

\begin{figure*}[t!]
  \centering
  \renewcommand{\arraystretch}{1.0}
  \renewcommand{\tabcolsep}{0.05cm}
  \small{
  \scalebox{0.90}{
  \begin{tabular}{ccccccccl}
    & 
    \raisebox{1\height}[0pt][0pt]{\textbf{Top}} & \raisebox{1\height}[0pt][0pt]{\textbf{Bottom}} &
    \raisebox{1\height}[0pt][0pt]{\textbf{Outer-Left}} & \raisebox{1\height}[0pt][0pt]{\textbf{Outer-Right}} & \raisebox{1\height}[0pt][0pt]{\textbf{Inner-Left}} & \raisebox{1\height}[0pt][0pt]{\textbf{Inner-Right}} & \raisebox{1\height}[0pt][0pt]{\textbf{Sub-Cortical}} & \\ \vspace{-0.2cm}
    \raisebox{4\height}[0pt][0pt]{\textbf{Exact}} & 
    \includegraphics[width=0.13\linewidth]{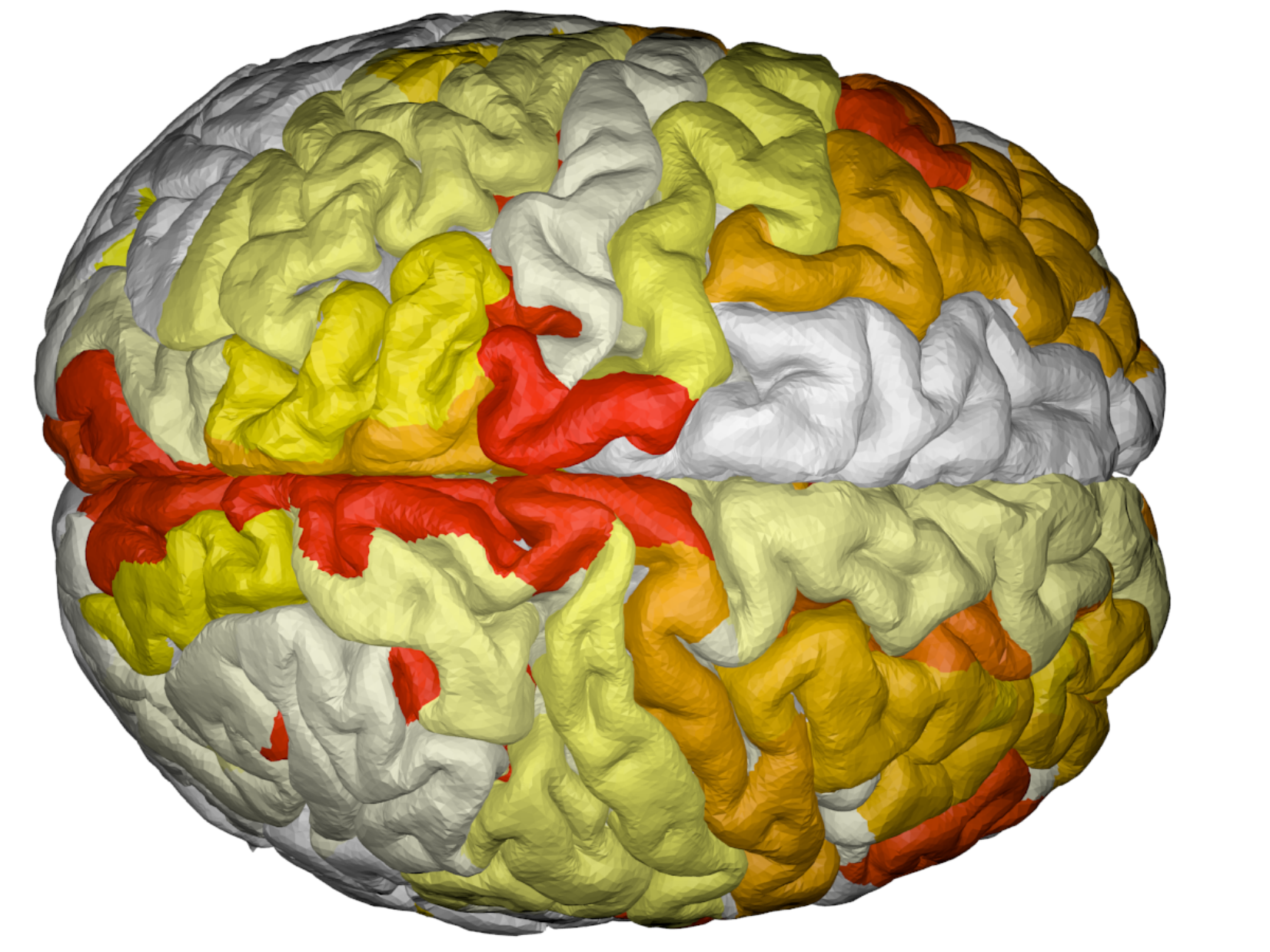} &
    \includegraphics[width=0.13\linewidth]{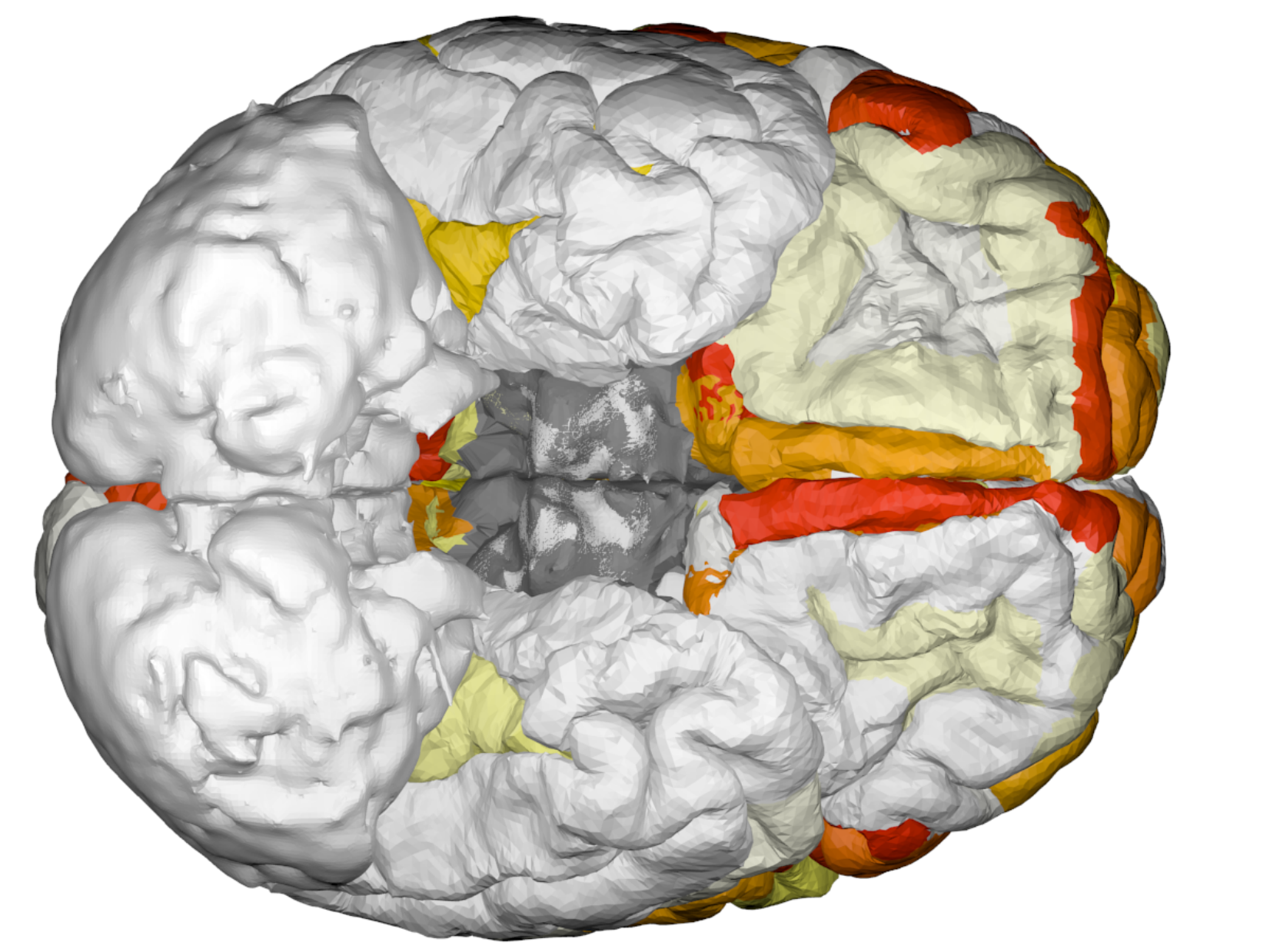} &
    \includegraphics[width=0.13\linewidth]{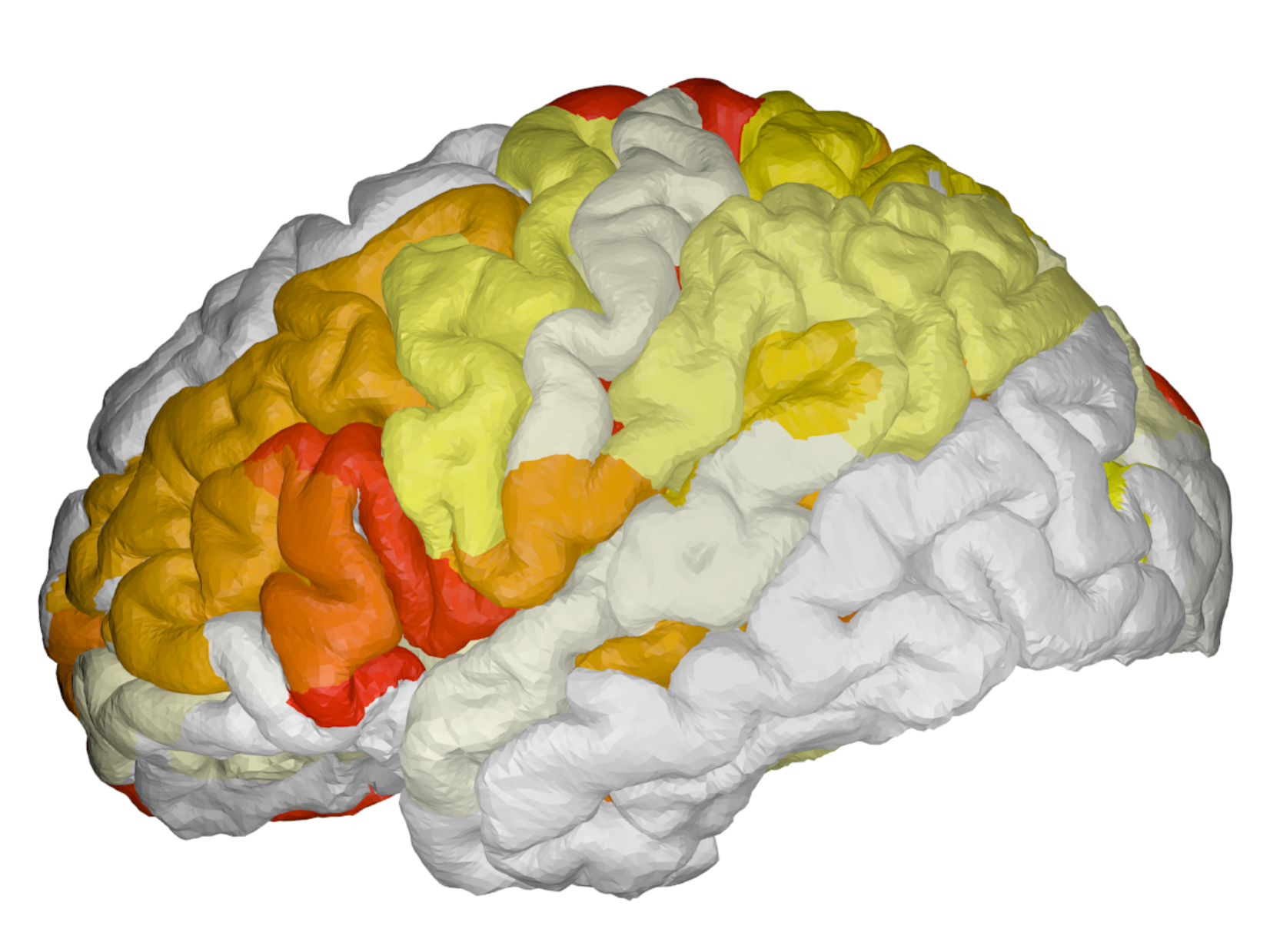} &
    \includegraphics[width=0.13\linewidth]{cortical-outer-right-hemisphere_fd_exa.pdf} &
    \includegraphics[width=0.13\linewidth]{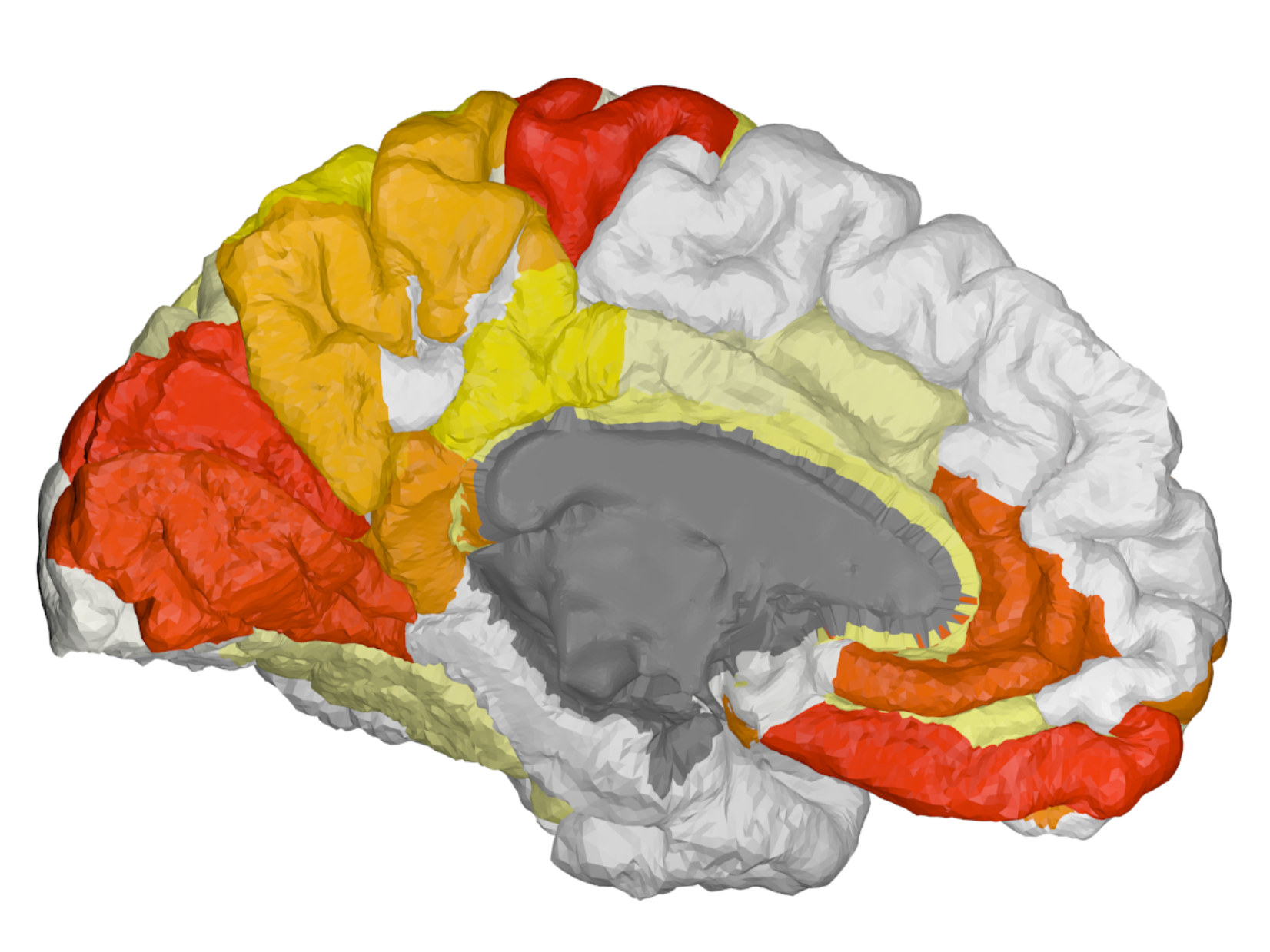} &
    \includegraphics[width=0.13\linewidth]{cortical-inner-right-hemisphere_fd_exa.pdf} & 
    \includegraphics[width=0.13\linewidth]{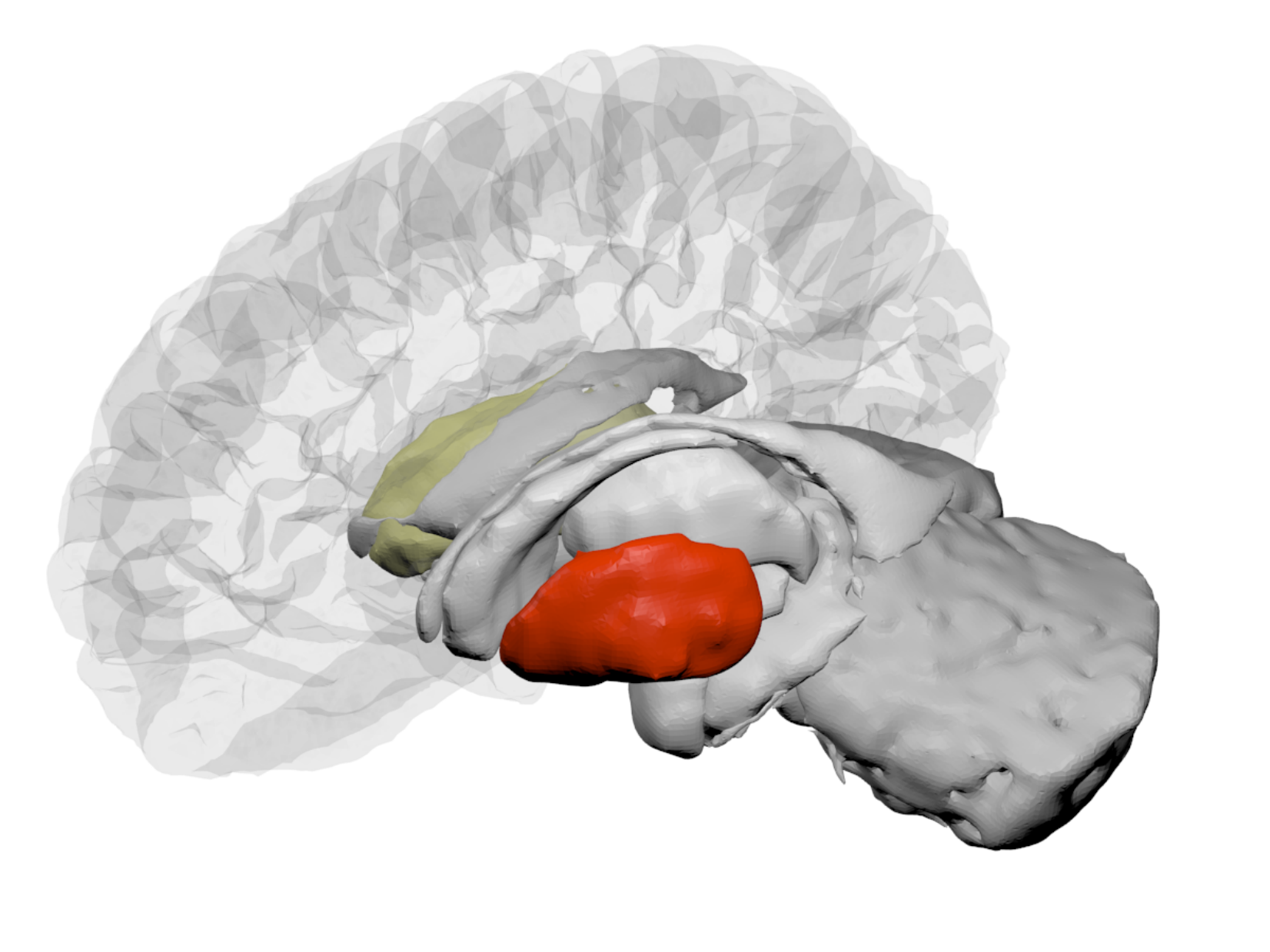} &\\ \vspace{-0.2cm}
    \raisebox{4\height}[0pt][0pt]{\textbf{LSAP-C}} &
    \includegraphics[width=0.13\linewidth]{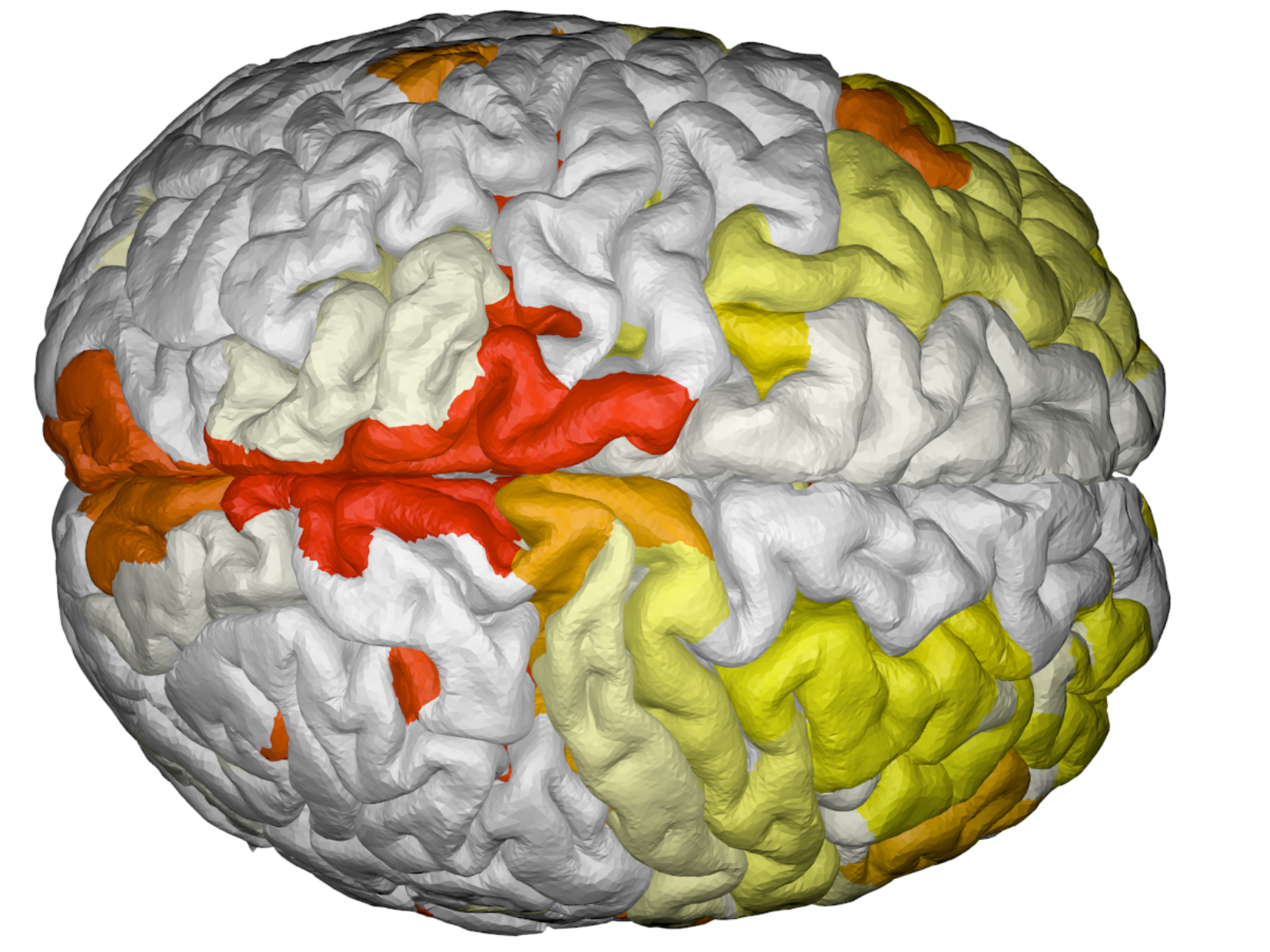} &
    \includegraphics[width=0.13\linewidth]{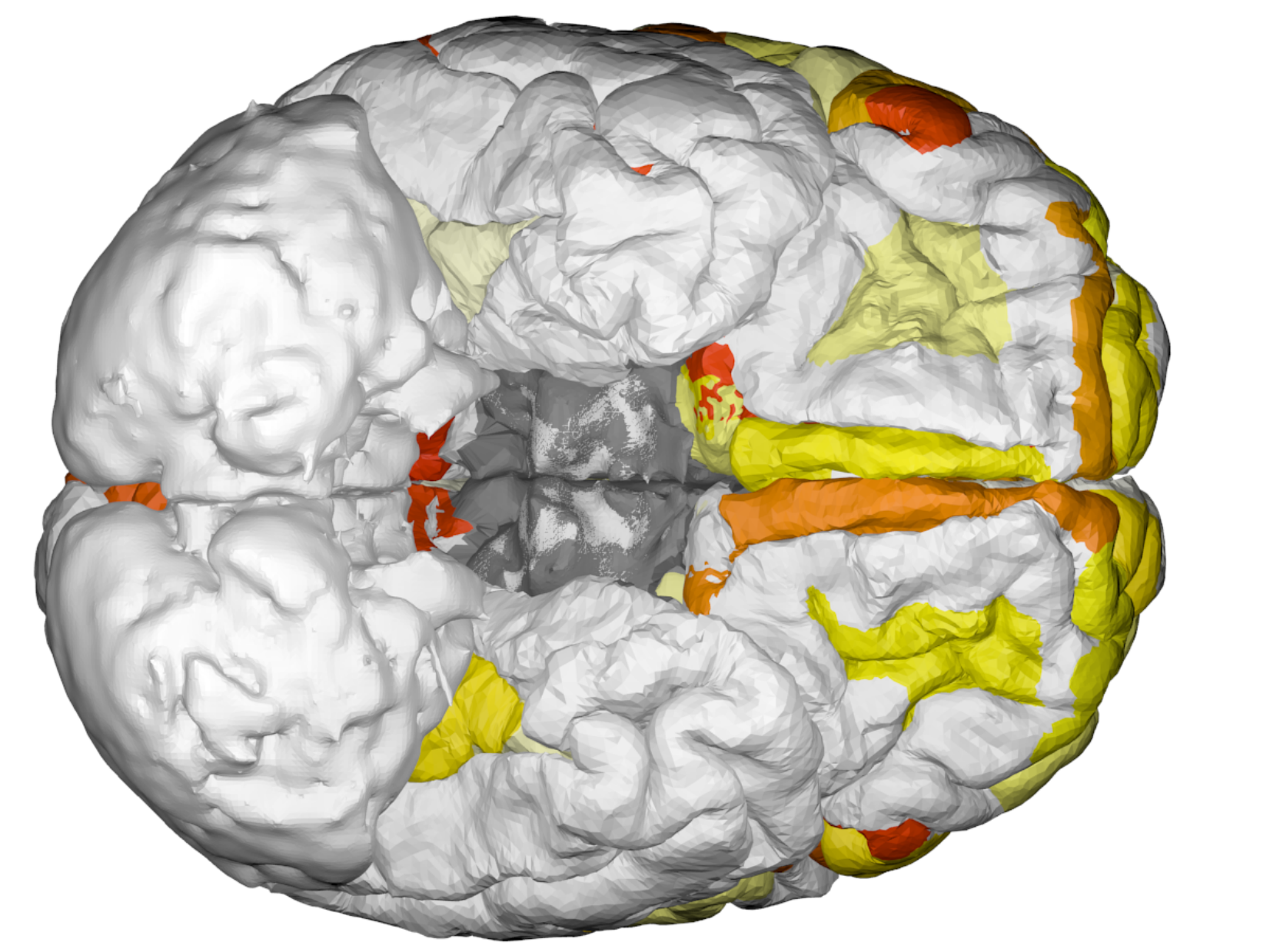} &
    \includegraphics[width=0.13\linewidth]{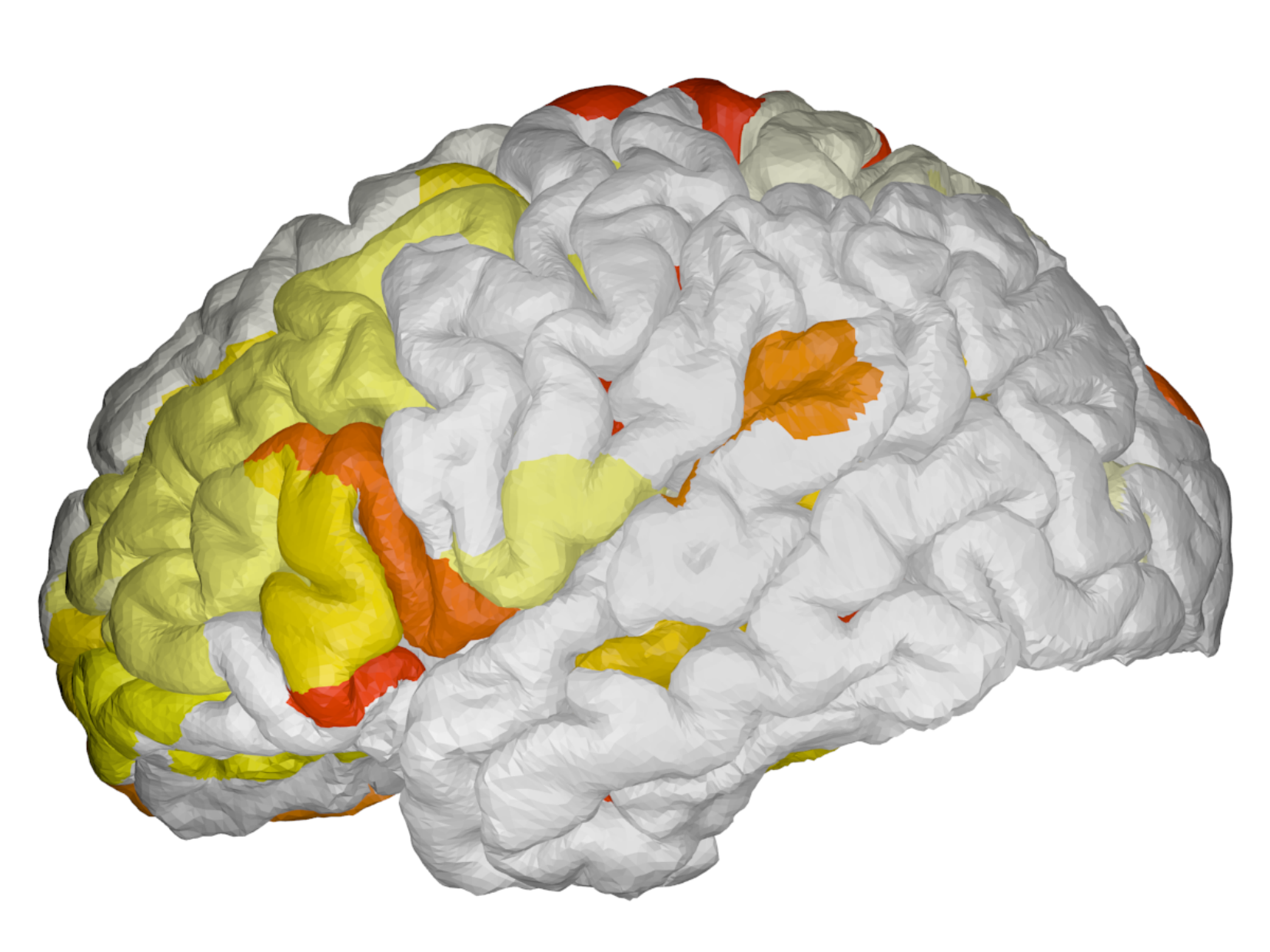} &
    \includegraphics[width=0.13\linewidth]{cortical-outer-right-hemisphere_fd_che.pdf} &
    \includegraphics[width=0.13\linewidth]{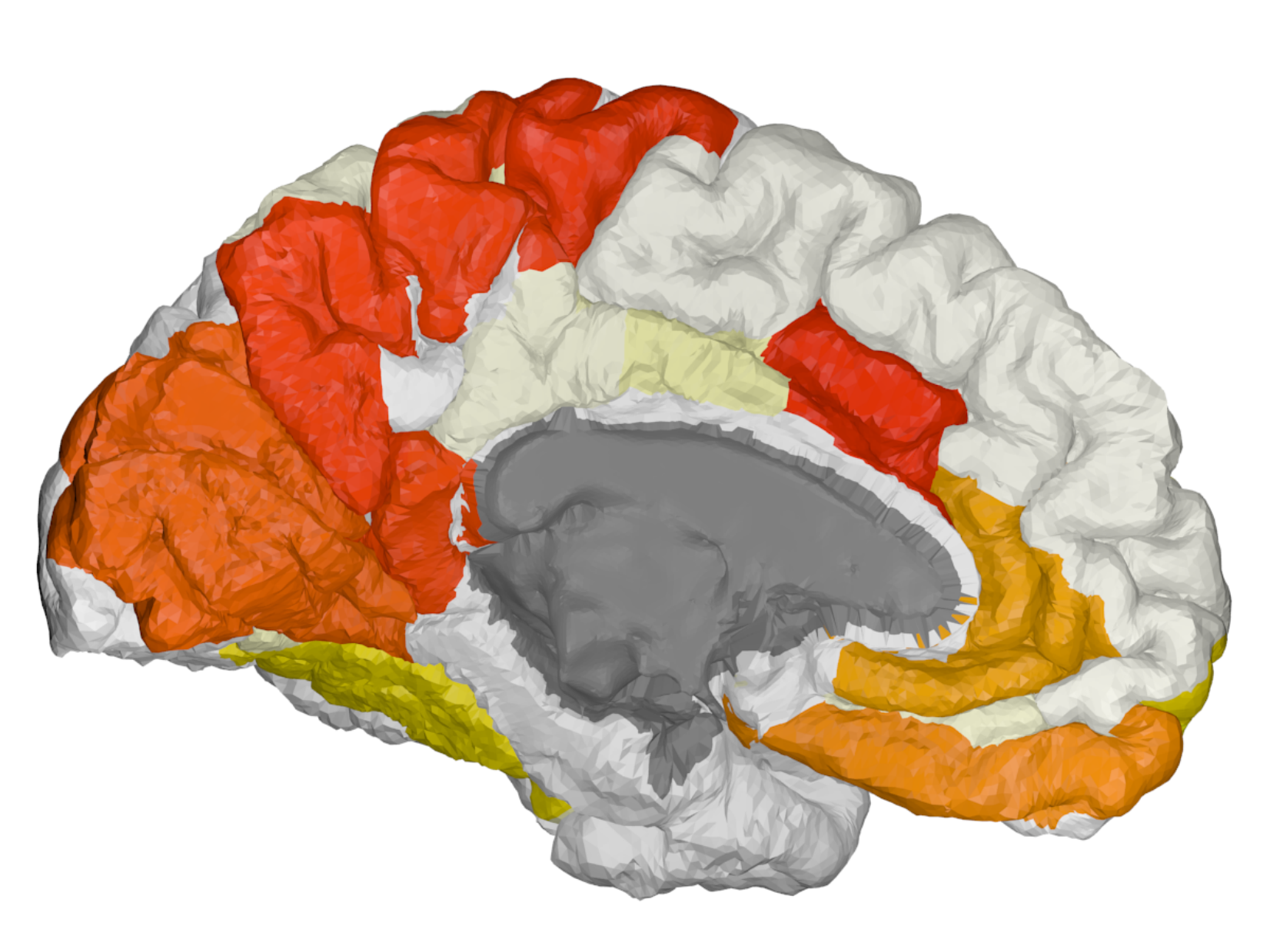} &
    \includegraphics[width=0.13\linewidth]{cortical-inner-right-hemisphere_fd_che.pdf} & 
    \includegraphics[width=0.13\linewidth]{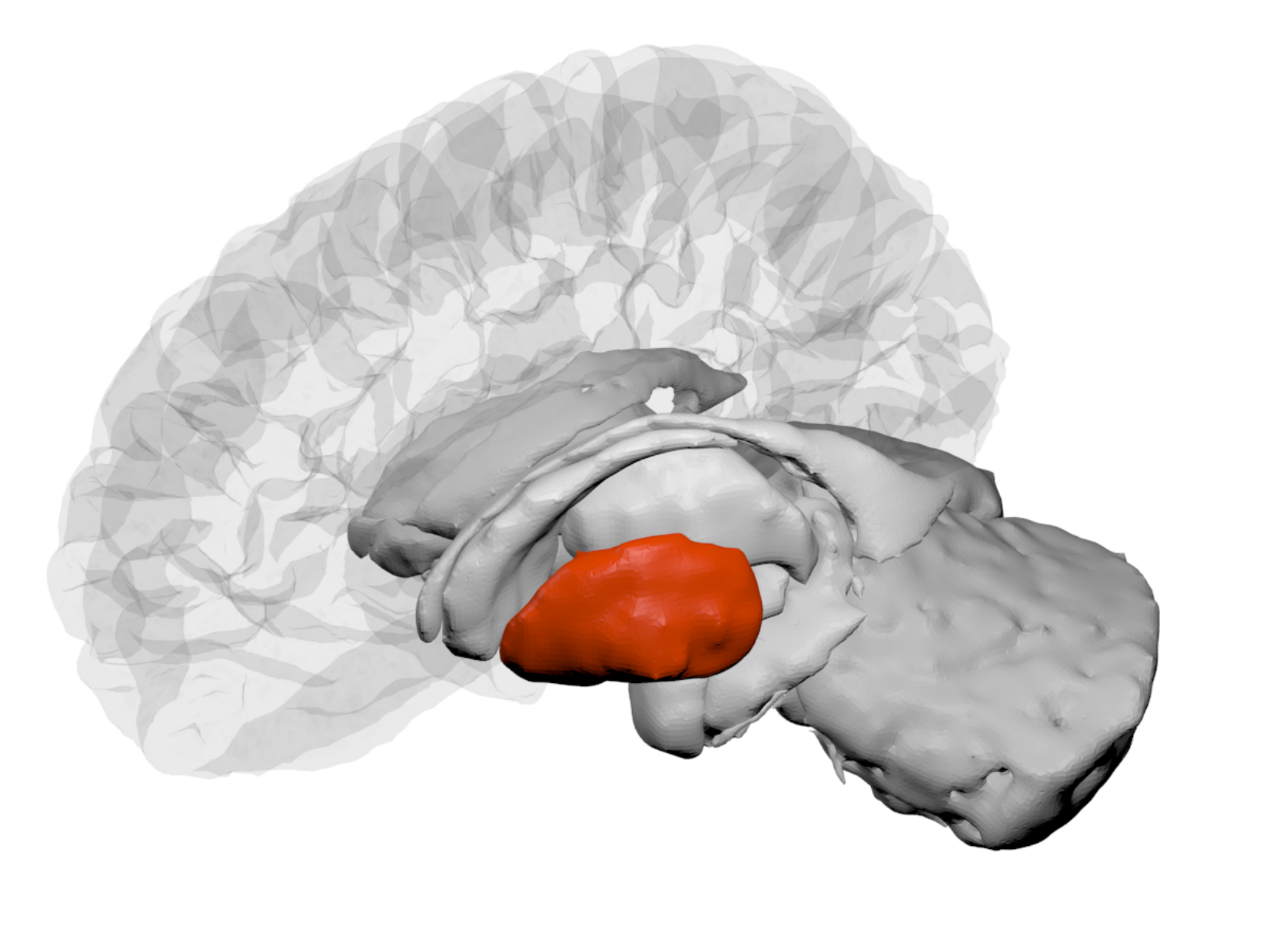} &
    \raisebox{-0.5\height}[0pt][0pt]{\includegraphics[width=0.06\textwidth]{colorbar.pdf}}\\ \vspace{-0.2cm}
    \raisebox{4\height}[0pt][0pt]{\textbf{LSAP-H}} &
    \includegraphics[width=0.13\linewidth]{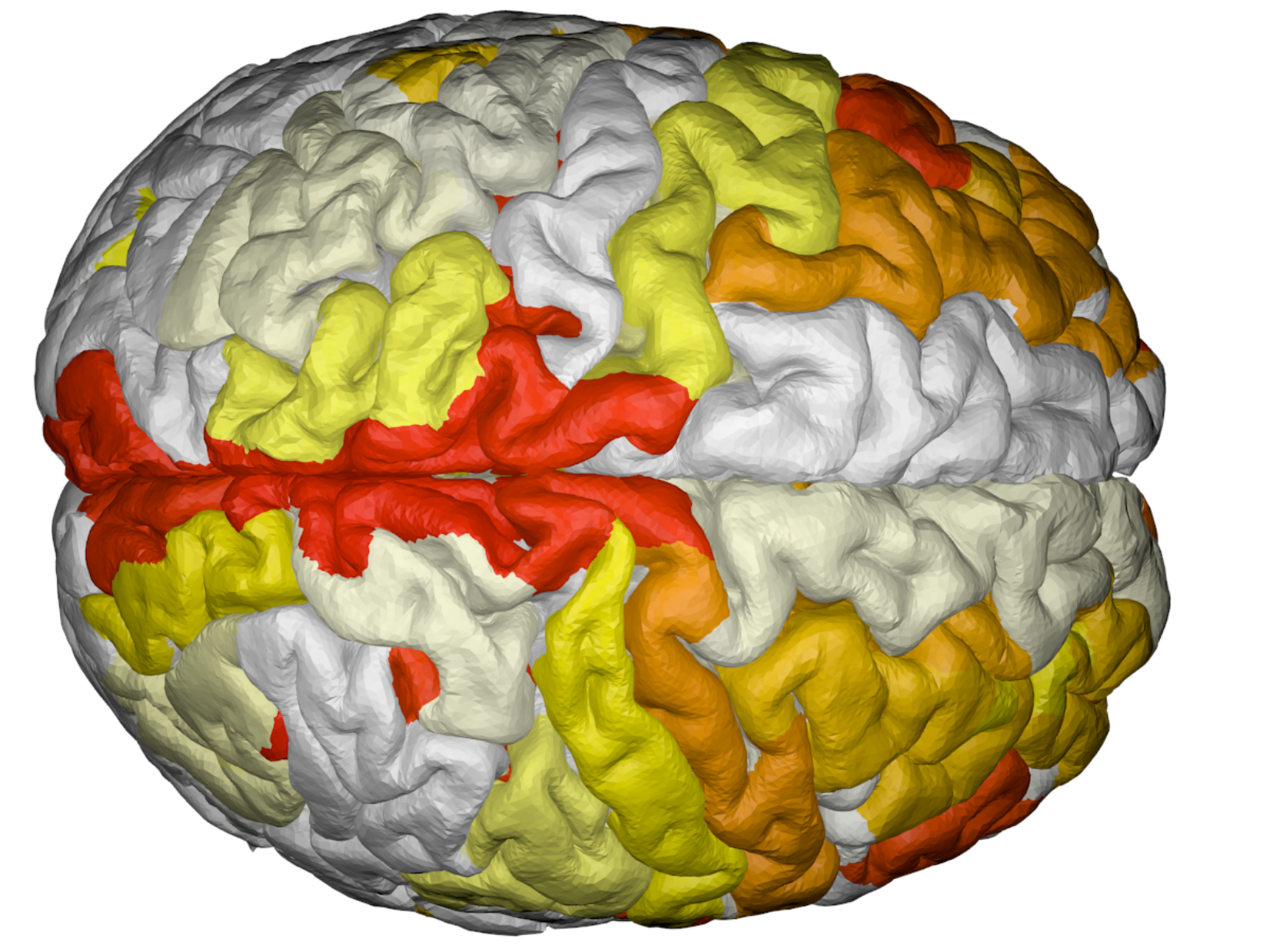} &
    \includegraphics[width=0.13\linewidth]{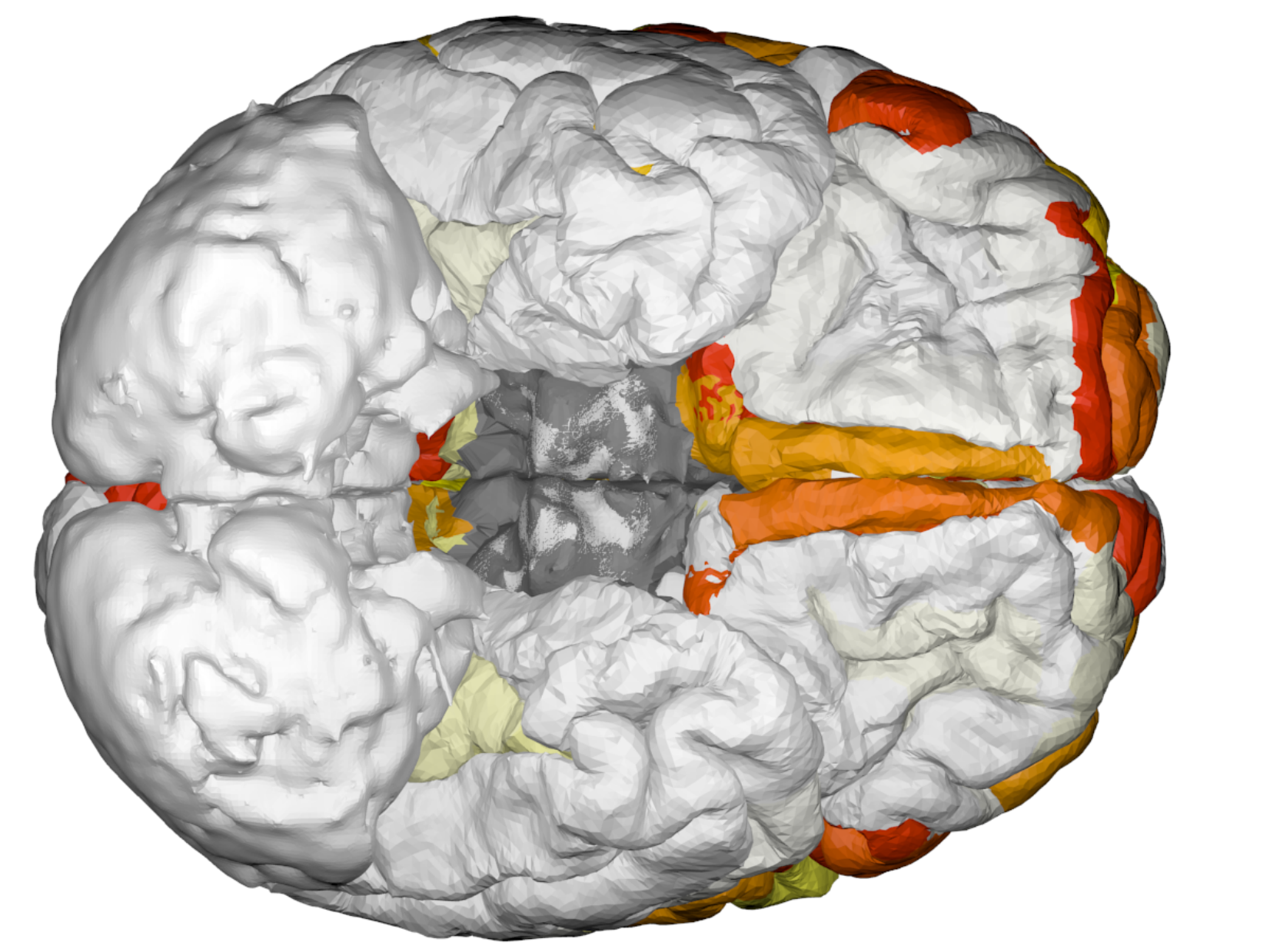} &
    \includegraphics[width=0.13\linewidth]{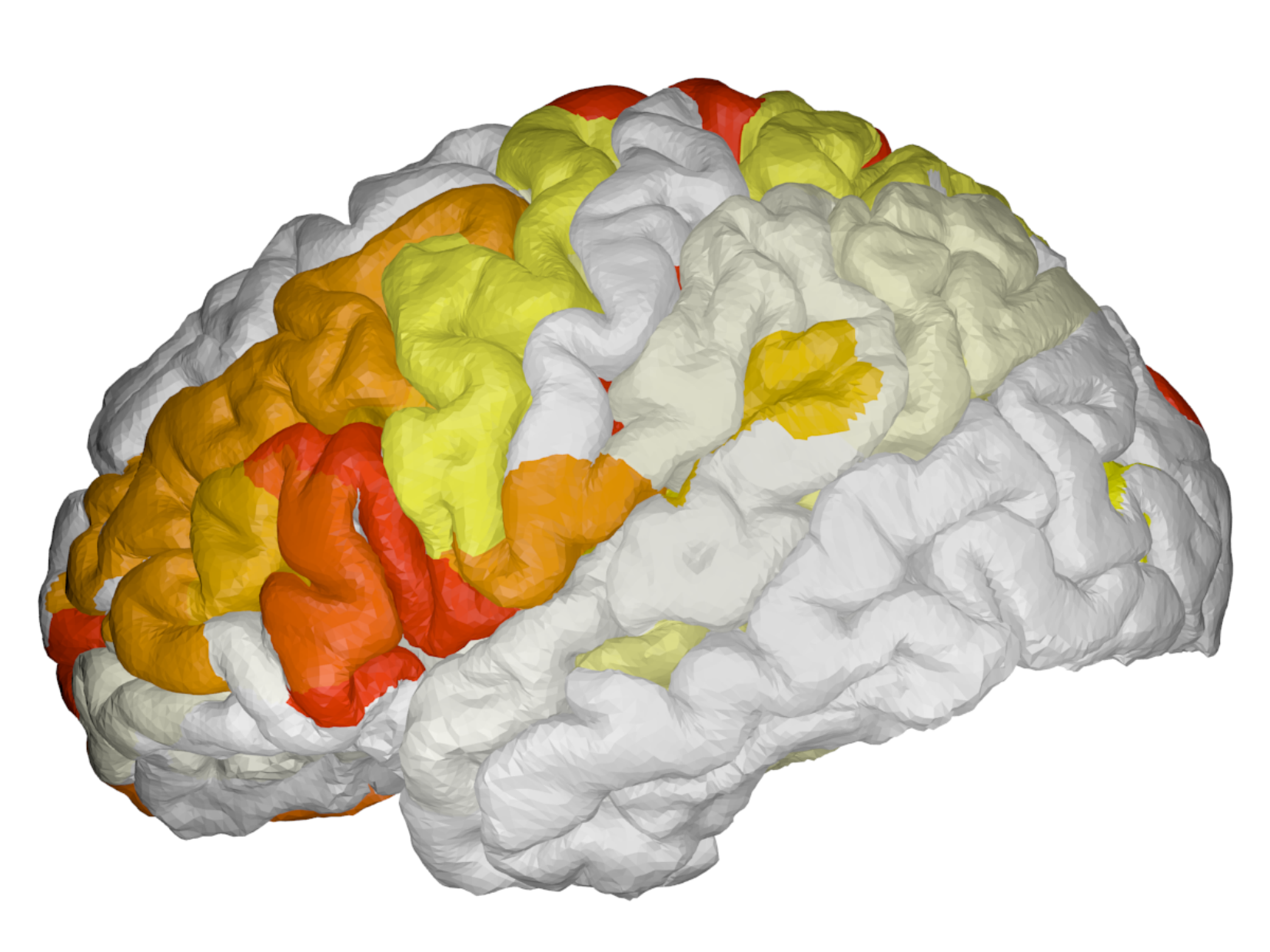} &
    \includegraphics[width=0.13\linewidth]{cortical-outer-right-hemisphere_fd_her.pdf} &
    \includegraphics[width=0.13\linewidth]{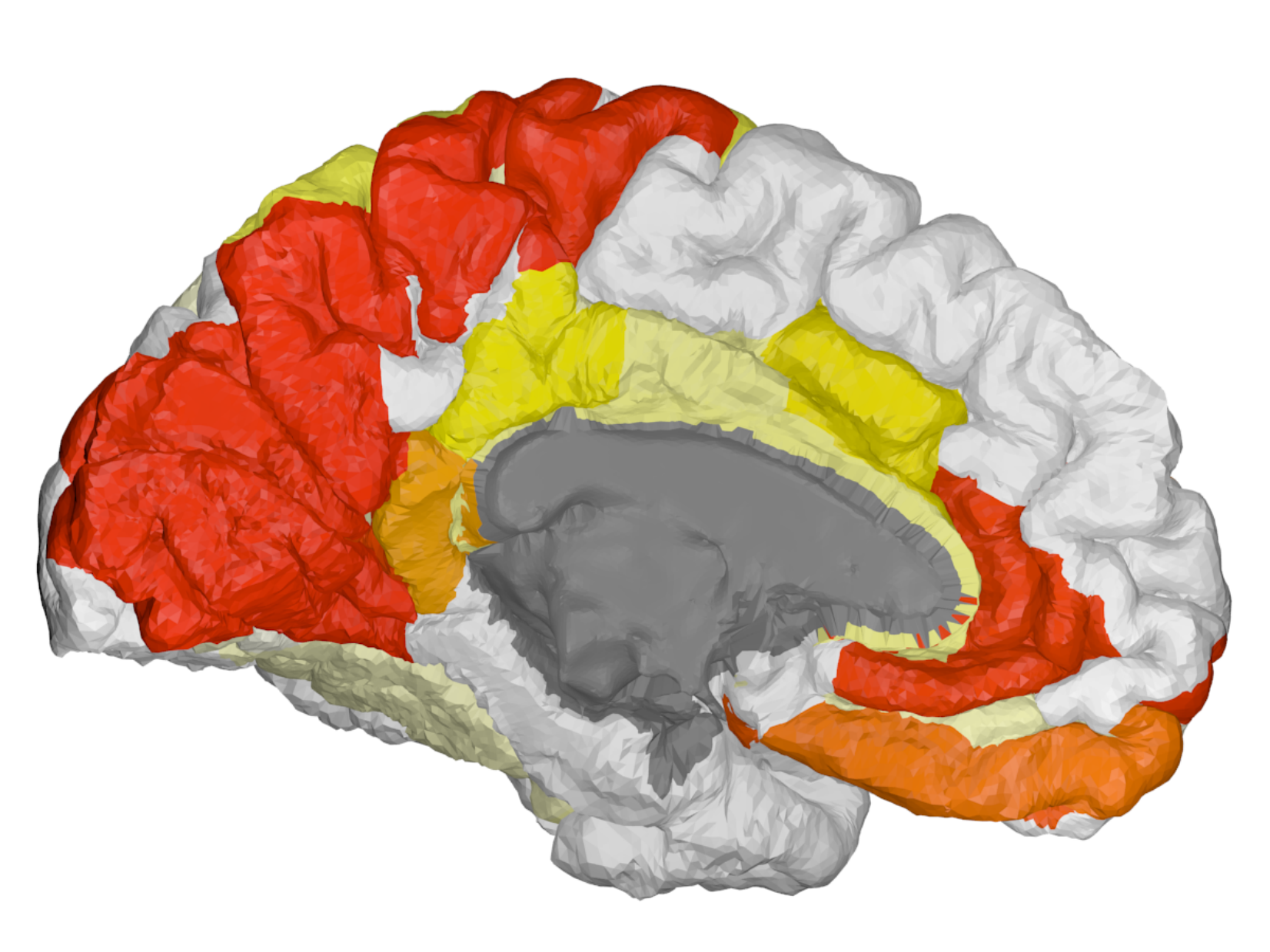} &
    \includegraphics[width=0.13\linewidth]{cortical-inner-right-hemisphere_fd_her.pdf} & 
    \includegraphics[width=0.13\linewidth]{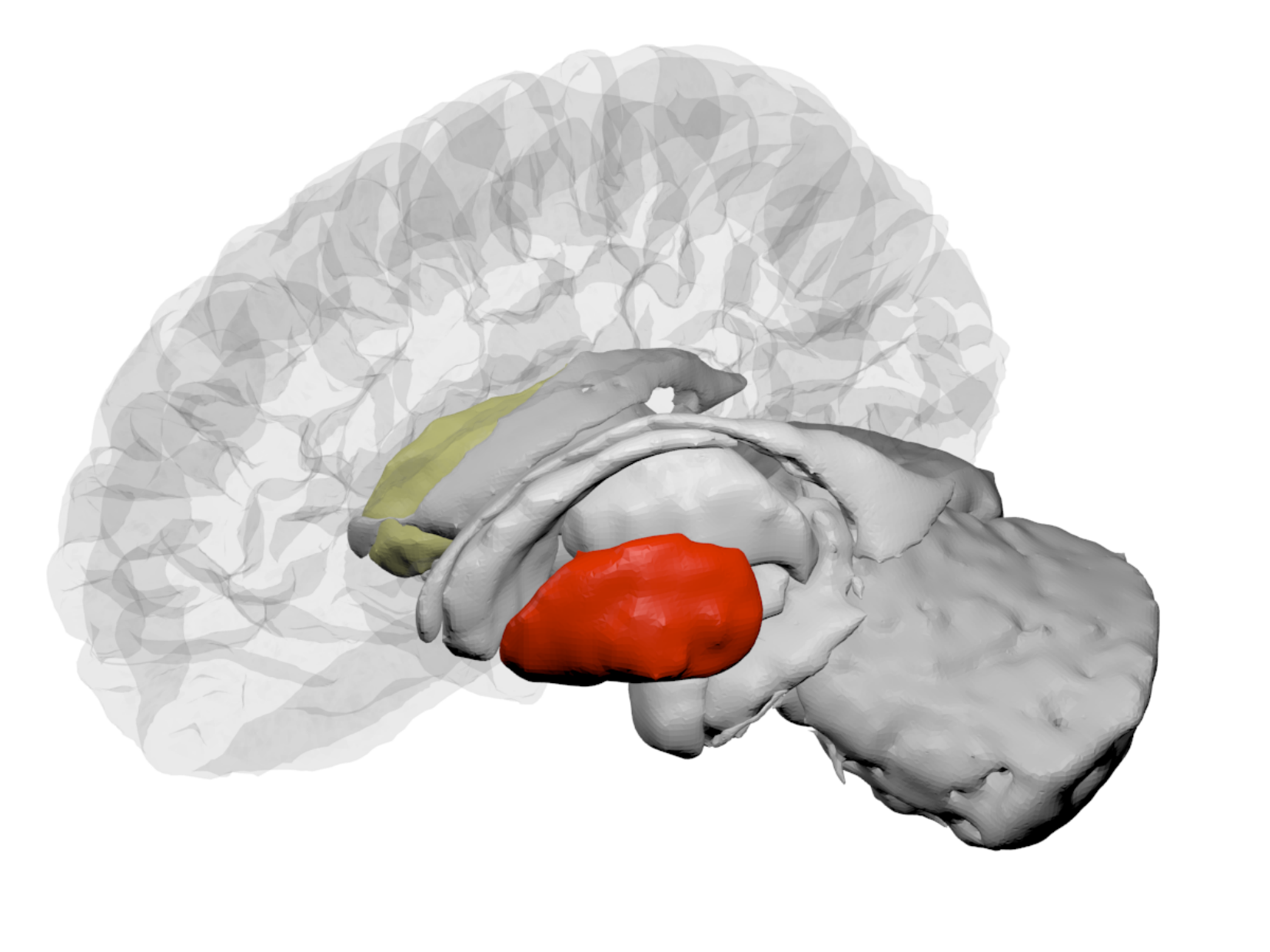} &
    \\ \vspace{-0.2cm}
    \raisebox{4\height}[0pt][0pt]{\textbf{LSAP-L}} &
    \includegraphics[width=0.13\linewidth]{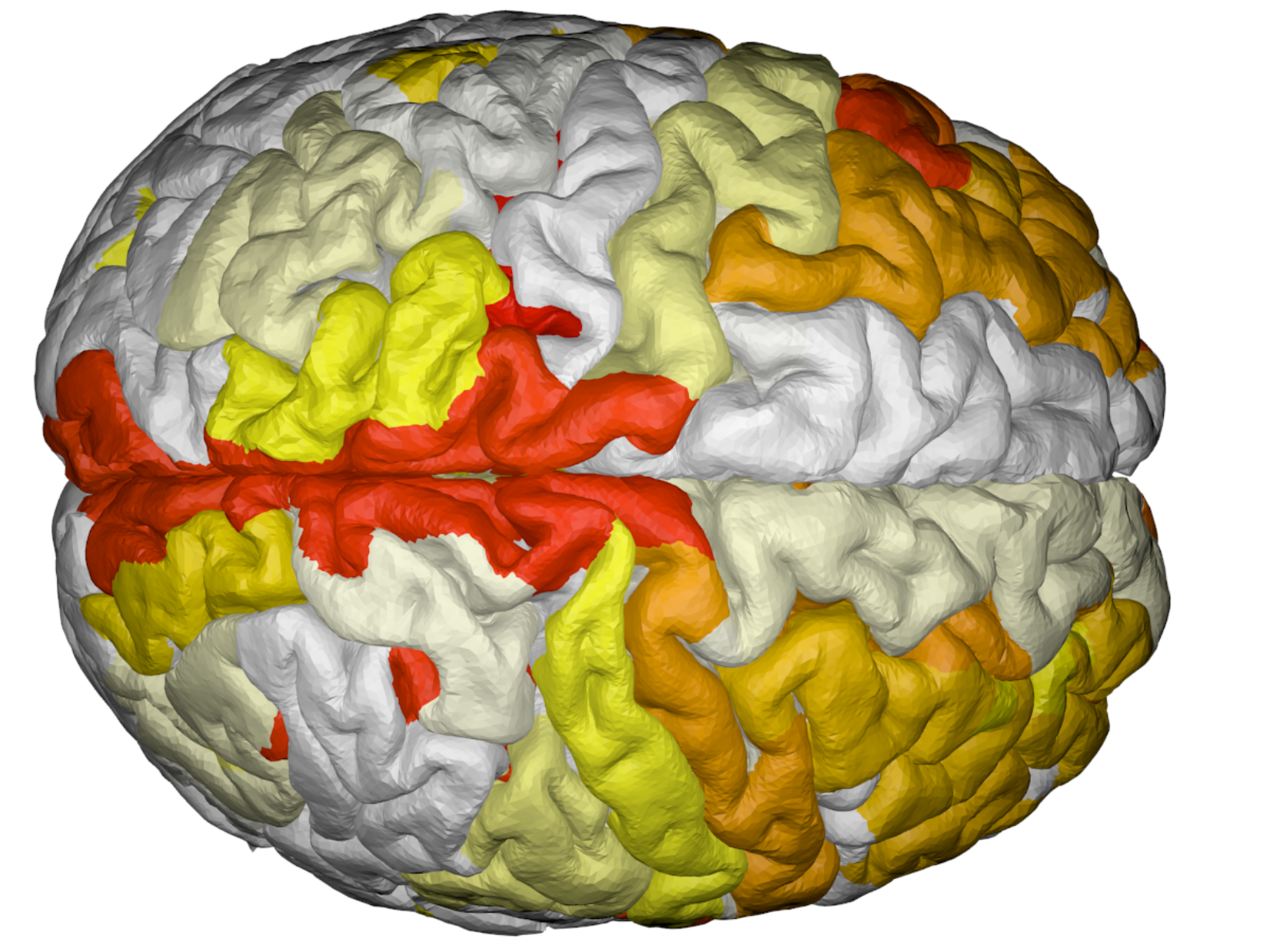} &
    \includegraphics[width=0.13\linewidth]{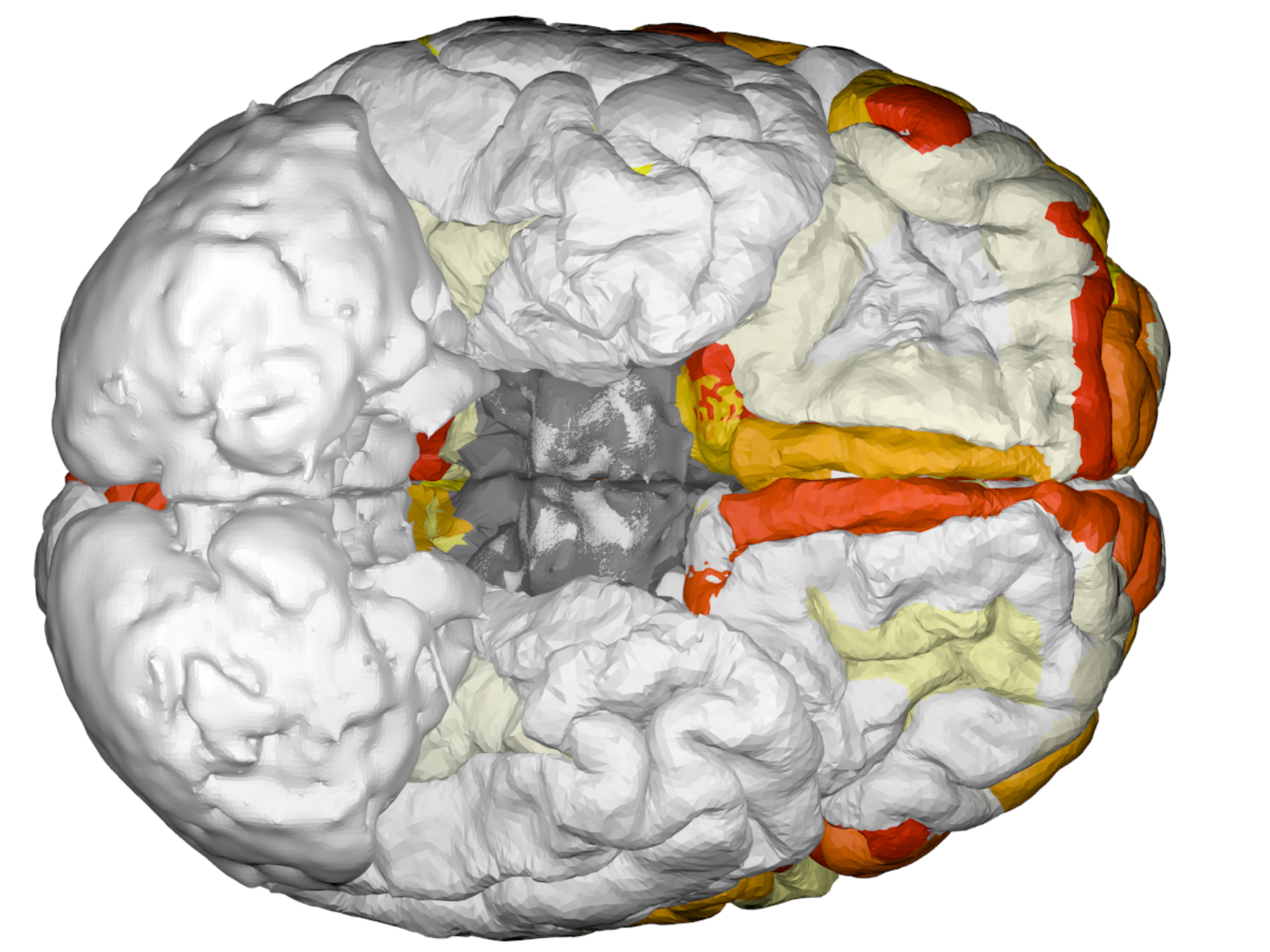} &
    \includegraphics[width=0.13\linewidth]{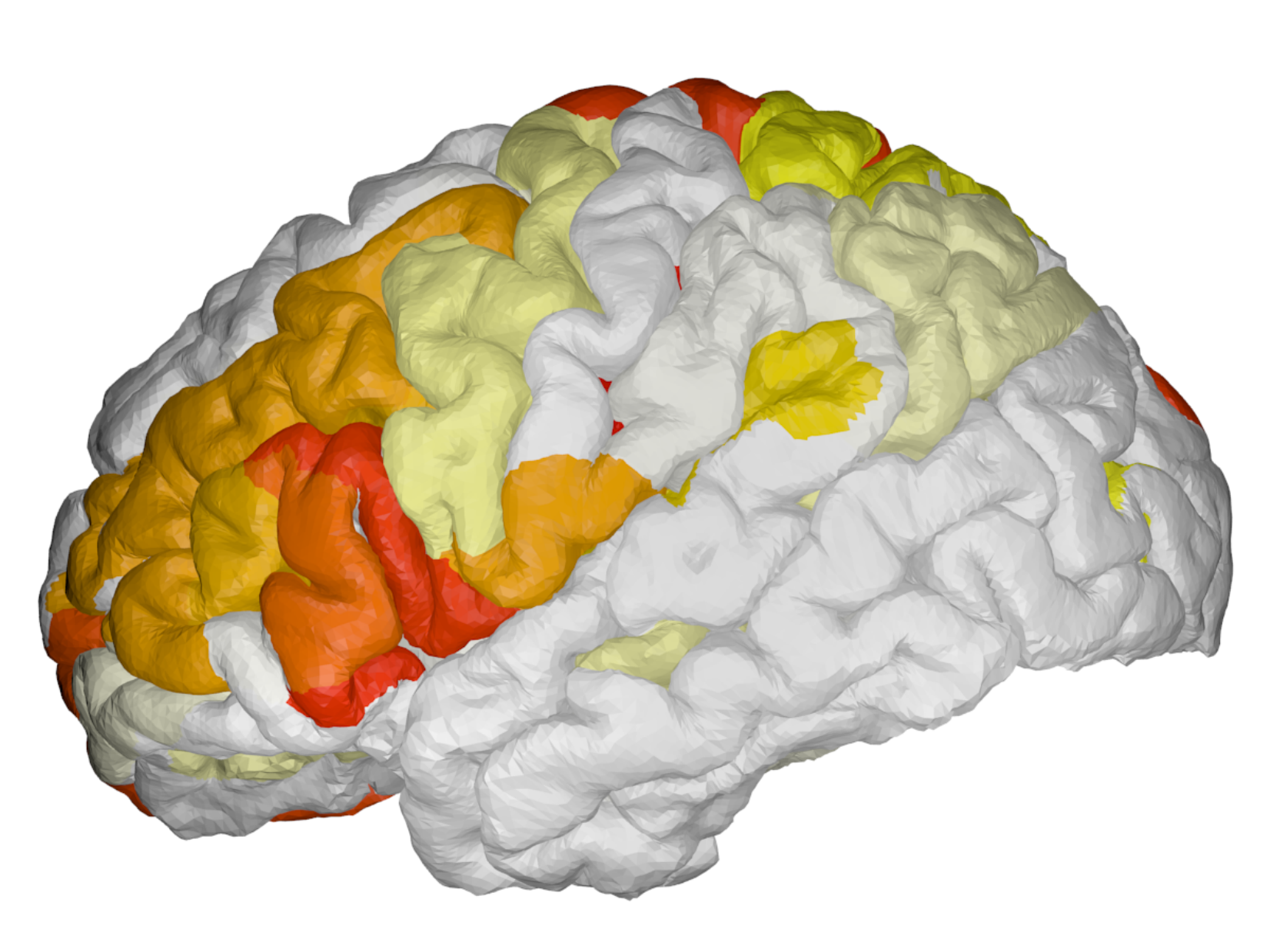} &
    \includegraphics[width=0.13\linewidth]{cortical-outer-right-hemisphere_fd_lag.pdf} &
    \includegraphics[width=0.13\linewidth]{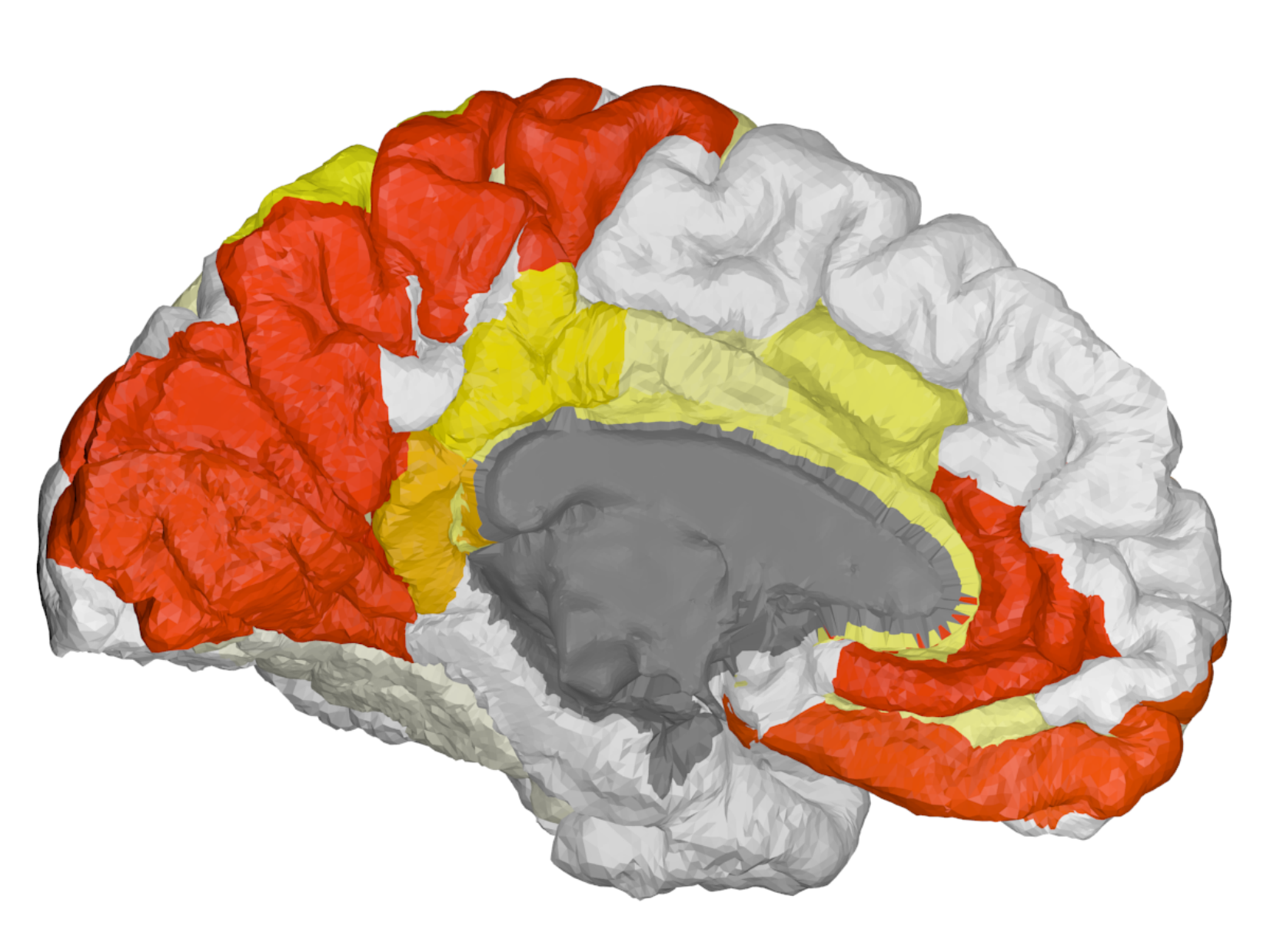} &
    \includegraphics[width=0.13\linewidth]{cortical-inner-right-hemisphere_fd_lag.pdf} & 
    \includegraphics[width=0.13\linewidth]{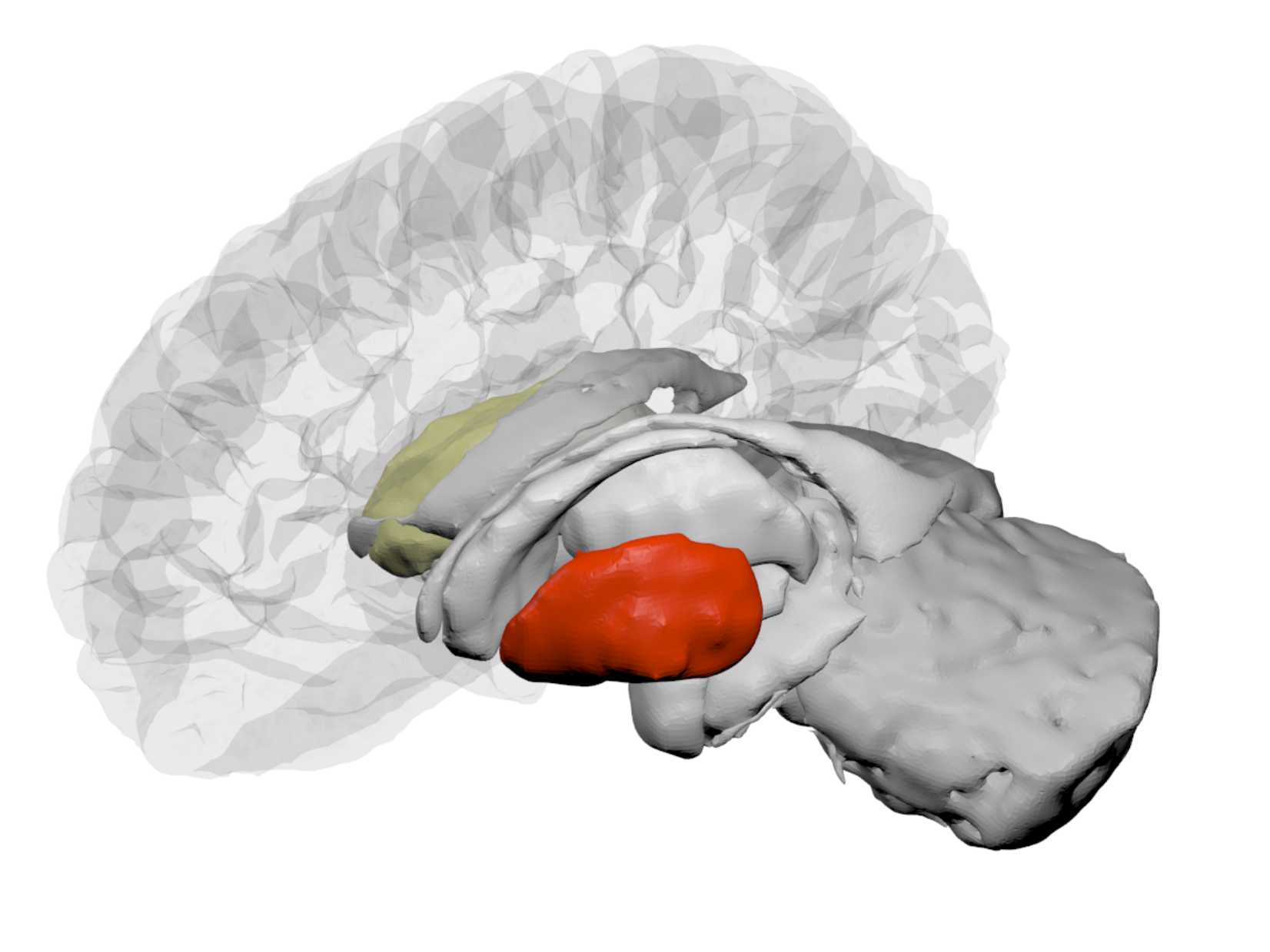} &\\
  \end{tabular}}}
  \vspace{5pt}
  \caption{Visualization of learned scales on the cortical and sub-cortical regions of a brain. 
  This visualization shows the scale of each RoI through the classification result using FDG feature.}
\label{fig:Brain_Visualization_FDG_supple}
\end{figure*}

\begin{table*}[t!]
%\vspace{-12pt}
 \centering
    \caption{ROIs with the smallest trained scales for AD classification. (L) and (R) denote left and right hemisphere.}
    \scalebox{0.6}{
    \renewcommand{\arraystretch}{1.2} 
    \renewcommand{\tabcolsep}{0.2cm}
    {
    \begin{tabular}{c|c||c|c|c|c||c|c||c|c|c|c}
    \Xhline{3\arrayrulewidth}
    \multicolumn{6}{c||}{\textbf{Cortical Thickness}} & \multicolumn{6}{c}{\textbf{FDG}}\\ \hline
    \textbf{\# of common models}&\textbf{ROI} & \textbf{Exact} & \textbf{LSAP-C} & \textbf{LSAP-H} & \textbf{LSAP-L} & \textbf{\# of common models}&\textbf{ROI} & \textbf{Exact} & \textbf{LSAP-C} & \textbf{LSAP-H} & \textbf{LSAP-L}\\
    \hline
    \multirow{7}{*}{\textbf{4}} &(R) S.pericallosal & 0.035 & 0.097 & 0.033 & 0.050 &\multirow{10}{*}{\textbf{4}} & (L) G\&S.paracentral & 0.034 & 0.052 & 0.049 & 0.066 \\ 
    &(R) Lat.Fis.ant.Horizont & 0.048 & 0.074 & 0.039 & 0.037&&(L) G.front.inf.Orbital & 0.036 & 0.071 & 0.060 & 0.043\\ 
    &(L) Lat.Fis.ant.Horizont & 0.049 & 0.083 & 0.038 & 0.051&&(R) G.precuneus & 0.041 & 0.044 & 0.034 & 0.051\\ 
    &(L) G\&S.paracentral & 0.059 & 0.034 & 0.046 & 0.044&&(R) S.ortibal.med.olfact & 0.047 & 0.078 & 0.054 & 0.059\\ 
    &(L) G\&S.occipital.inf & 0.062 & 0.071 & 0.043 & 0.055&&(R) G.cingul.Post.ventral & 0.055 & 0.056 & 0.055 & 0.051\\ 
    &(R) Lat.Fis.ant.Vertical & 0.064 & 0.108 & 0.046 & 0.041&&(R) S.oc.temp.lat& 0.055 & 0.065 & 0.045 & 0.063\\ 
    &(L) G.cingul.Post.ventral & 0.068 & 0.127 & 0.035 & 0.066&&(R) G.oc.temp.med.Lingual& 0.055 & 0.076 & 0.043 & 0.040 \\ \cline{1-6}
    \multirow{6}{*}{\textbf{3}} 
    &(L) S.suborbital & 0.036 & - & 0.054 & 0.060&&(L) Sub.put & 0.058 & 0.077 & 0.047 & 0.060 \\ 
    &(L) S.temporal.inf & 0.045 & 0.129 & 0.039 & 0.057&&(L) S.postcentral & 0.061 & 0.069 & 0.060 & 0.018 \\ 
    &(L) S.occipital.ant & 0.050 & 0.119 & - & 0.064&&(R) G.front.inf.Orbital& 0.063 & 0.093 & 0.069 & 0.050 \\ \cline{7-12}
    &(R) S.precentral.inf.part & 0.054 & 0.052 & - & 0.057 & \multirow{6}{*}{\textbf{3}} & (R) S.intrapariet\&P.trans & 0.046 & 0.095 & - & 0.054  \\ 
    &(R) S.oc.temp.med\&Lingual & 0.055 & - & 0.052 & 0.062&&(L) G.cuneus & 0.049 & - & 0.044 & 0.062\\ 
    &(L) G.temp.sup.G.T.transv & 0.068 & 0.104 & 0.049 & -&&(R) S.temporal.sup & 0.049 & - & 0.028 & 0.056 \\ 
    \cline{1-6}
    \multirow{4}{*}{\textbf{2}} 
    &(R) G.parietal.sup & 0.035 & - & 0.056 & -&&(R) S.calcarine & 0.053 & 0.059 & - & 0.047 \\ 
    &(R) Lat.Fis.post & 0.049 & 0.067 & - & -&&(R) G\&S.paracentral & 0.055 & - & 0.051 & 0.048 \\ 
    &(L) Lat.Fis.ant.Vertical & 0.059 & - & - & 0.040&&(R) G.cuneus & 0.057 & - & 0.057 & 0.066 \\ \cline{7-12}
    &(R) S.suborbital & 0.064 & - & - & 0.048&\multirow{2}{*}{\textbf{2}} 
    &(R) G\&S.cingul.Mid.Ant & 0.062 & 0.041 & - & - \\\cline{1-6}
    &&&&&&&(R) Sub.put & 0.065 & 0.037 & - & - \\
    \hline
    % & $\vdots$& $\vdots$ &  $\vdots$& $\vdots$ \\
    % \hline
    & \textbf{RMSE} for all ROIs & - & \textbf{0.5358} & \textbf{0.4193} & \textbf{0.4072} && \textbf{RMSE} for all ROIs & - & \textbf{0.5827} & \textbf{0.2904} & \textbf{0.2899} \\
    \Xhline{3\arrayrulewidth}
    \end{tabular}}}
\label{tab:RoIs_supple}
\end{table*}

\section{4. Experiments}
\label{sec:exp}
To understand our experiments, 
we describe datasets of our experiments in detail.
We conducted experiments on standard node classification datasets that provide connected and undirected graphs. 
Cora, Citeseer and Pubmed are constructed as citation networks. 
The nodes are papers, the edges are citations of one paper to another, the features are bag-of-words representation of papers, and the labels are specified as an academic topic. 
Amazon Computer and Amazon Photo define co-purchase networks. 
The nodes are goods, 
the edges indicate whether two goods are frequently purchased together, 
the features are bag-of-words representation of product reviews, 
and the labels are categories of goods. 
Coauthor CS \cite{shchur2018pitfalls} is a co-authorship network. 
The nodes are authors, the edges indicate whether two authors co-authored a paper, 
the features are keywords from papers, 
and the labels are each author's field of study.

In the main paper, specifically in Table 3, 
we reported the mean and standard deviation of the results of 10 replicates of the experiment and compared it with the results of {\em 3ference} \cite{luo2022inferring}.
However, since the result of {\em 3ference} is the maximum value among the 10 experimental results, 
we also wrote the maximum value in the supplementary and compared it, which is in Table~\ref{tab:Result_AAC_supple}.
Our model showed the best performance compared to the baselines.

In the main experiments, we compared the performances between LSAP and baselines on node classification and graph classification. 
For node classification, standard benchmarks (Cora, Citeseer and Pubmed) and additional datasets (Amazon Computers, Amazon Photo, and Coauthor CS) were used to validate the performance of LSAP.
For graph classification, the ADNI dataset containing a number of subjects with brain network was used to classify AD-specific labels.
Hyperparameters should be set so that the model can properly learn data in each experiment.
Table \ref{tab:hyperparameter_supple} shows the detailed parameters of LSAP and baselines for the main experiments.

The trained scales on the brain network classification with Exact and LSAP are visualized in Fig. \ref{fig:Brain_Visualization_DT_supple} and Fig. \ref{fig:Brain_Visualization_FDG_supple} containing additional interpretable results visualized from various views.
In Table \ref{tab:RoIs_supple}, based on Exact, smallest scales that appear in common across Exact and LSAPs are listed.
7 ROIs were detected in Exact and LSAPs, 6 ROIs were detected in 3 models, and only 2 ROIs were in 2 models for cortical thickness feature.
%On the other hand, 
Using FDG, 10 ROIs were detected in every model, 6 ROIS were detected in 3 models, and 2 ROIs were detected in 2 models.
In addition, similar ROIs overall showed similar values of scale in all models.

\section{5. Implementation Details}

We stacked $K$$=$$2$ heat kernel convolution layers with rectified linear unit (ReLU) as the activation function and softmax function at the output to predict the node or graph classes.
The initial scale for every node was set to 2. %and the model tried to find each optimal 
The $n$$=$$20$ for LSAP-C and LSAP-L, and $n$$=$$30$ for LSAP-H as they empirically demonstrated the best convergence.
The same number of hidden nodes in $W_k$ was used within each the dataset, drop-out rate was 0.5, and other hyper-parameters such as learning rate for $W_k$ and $s$ were properly tuned to get the best results (given in the supplementary).
For the node classification, we used early stopping to stop training when the validation accuracy stopped increasing. 
For graph classification, the readout function $f_R(\cdot)$ was defined as MLP with 2 layers with 16 hidden nodes (with ReLU), 
and 5-fold cross validation (CV) was used.

\section{6. Model complexity}

%The comparison between our heat kernel approximation and exact heat kernel is important.
Exact method requires full eigendecomposition of Graph Laplacian, which takes $O(N^3)$ where $N$ is the number of nodes.
%As this is heavy, it is a bit hard to deal with large-scale graphs.
However, our method does not have this computation and only use graph Laplacian directly to compute polynomial basis, 
which will be used to construct approximated heat kernel with $|{\bf s}| = N$ parameters.
Then, as we basically used GCN framework and redefined the convolution with approximated heat kernel,
time complexity of construction of heat kernel is simply added to time complexity of GCN.
% The memory complexity of GCN mainly consist of combination of neighborhood aggregation $O(N^2D)$ and feature transformation $O(ND^2)$ 
% where $D$ is dimension of features, and we assume dimension of hidden units is same as that of features.

The time complexity of $L$-layer GCN is $O(LND^2)$ where $D$ is dimension of features. 
When we use only first order approximation of heat kernel, time complexity of our process becomes $O(N)$ which is highly marginal compared to the GCN pipeline. 
%So, it will be added to the time complexity of GCN, which is a total time complexity of our framework.
The memory complexity of GCN is $O(LND+LD^2)$.
Memory complexity of our frameworks takes $O(N^2)$ for construction of polynomial basis, which will be added to memory complexity of GCN, 
which can be further reduced with polynomial approximation. 
At the cost of some time and memory in addition to the GCN, 
we gain significant improvements in both node and graph tasks.

\begin{figure*}[!t]
\centering
\renewcommand{\arraystretch}{1.3}
    \scalebox{0.95}{
    \begin{tabular}{cccccl}
    \includegraphics[width=1.0\linewidth]{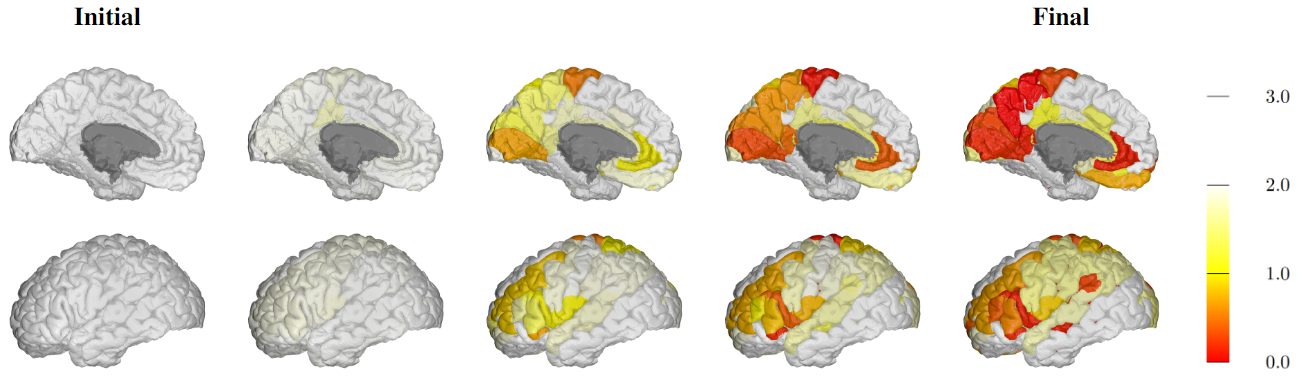}
    % \raisebox{1\height}[0pt][0pt]{\textbf{Initial}} 
    % &
    % &
    % &
    % & \raisebox{1\height}[0pt][0pt]{\textbf{Final}} 
    % & \\ 
    %   \includegraphics[width=0.16\linewidth]{cortical-inner-left-hemisphere_e0.png}
    % & \includegraphics[width=0.16\linewidth]{cortical-inner-left-hemisphere_e1.png}
    % & \includegraphics[width=0.16\linewidth]{cortical-inner-left-hemisphere_e2.png}
    % & \includegraphics[width=0.16\linewidth]{cortical-inner-left-hemisphere_e3.png}
    % & \includegraphics[width=0.16\linewidth]{cortical-inner-left-hemisphere_e4.png}
    % &\\
    %   \includegraphics[width=0.16\linewidth]{cortical-outer-left-hemisphere_e0.png}
    % & \includegraphics[width=0.16\linewidth]{cortical-outer-left-hemisphere_e1.png}
    % & \includegraphics[width=0.16\linewidth]{cortical-outer-left-hemisphere_e2.png}
    % & \includegraphics[width=0.16\linewidth]{cortical-outer-left-hemisphere_e3.png}
    % & \includegraphics[width=0.16\linewidth]{cortical-outer-left-hemisphere_e4.png}
    % & \raisebox{0\height}[0pt][0pt]{\includegraphics[width=0.08\linewidth]{colorbar.pdf}}\\
    \end{tabular}
    }
\caption{Visualization of adjustment of the trained scales on the cortical regions of a brain at training phase using FDG biomarker.
Top: Inner part of left hemisphere, Bottom: Outer part of left hemisphere.
}
\label{fig:adjustment_supple}
\end{figure*}

\section{7. Adjustment of the learned scales}
\label{sec:adjustment}

Fig. \ref{fig:adjustment_supple} displays the outcomes pertaining to the scale of each ROI in the brain during every quarter interval throughout the complete iteration within node-wise method.
In contrast to prior approaches such as GraphHeat \cite{xu2020graph} and ADC \cite{zhao2021adaptive}, which employ a uniform global scale for all regions of interest (ROIs), 
our proposed method looks for multivariate scales for each individual ROI during the training stage.
The initial scales for all ROIs are uniformly set to the same value, 
and subsequently tuned to align with the corresponding data.
As the learning phase progresses from its initial state,
several ROIs associated with AD become spotlighted, which are colored by close to red in Fig. \ref{fig:adjustment_supple}.

\section{8. Discussions}
\label{sec:discussion}

\textbf{Reasons for evaluating LSAP on the ADNI data. }
ADNI dataset was a suitable bench mark for the following reasons: 
1) As the performance of node-wise scale training is sufficiently validated on node-classification with well-curated benchmarks, 
we wanted to run experiments on a real dataset with a practical application for scientific discovery such as Neuroimaging. 
2) Brain has a common structure that can be translated to registered ROIs (i.e., nodes), 
where the results from LSAP can be clinically interpreted and convey interesting results to the readers. 
3) ADNI study provide relatively large sample-size, and we have performed experiments with 5-fold cross validation to ensure unbiased results even though we are using only one dataset. 

\noindent\textbf{Limitations of LSAP. } 
As LSAP needs to localize and update the scale for each node, training takes longer compared to conventional GNN that perform uniform convolution across all nodes, 
but this issue is well-addressed with our training on the scales within approximation. 
Another limitation is that LSAP operates with fixed set of nodes for graph classification. 
However, such a setting has an advantage with LSAP as we learn node-wise scale which yield 
interpretable results that may lead to scientific discoveries.

\end{document}